\PassOptionsToPackage{table}{xcolor}
\documentclass{article}
\usepackage{iclr2027_conference,times}
\usepackage{amsmath,amsfonts,bm}

\def\eqref#1{equation~\ref{#1}}
\def\1{\bm{1}}

\DeclareMathAlphabet{\mathsfit}{\encodingdefault}{\sfdefault}{m}{sl}
\SetMathAlphabet{\mathsfit}{bold}{\encodingdefault}{\sfdefault}{bx}{n}

\usepackage{graphicx}
\usepackage{booktabs}
\usepackage{longtable}
\usepackage{amsfonts}
\usepackage{amsmath}
\usepackage{amssymb}
\usepackage{amsthm}
\usepackage{algorithm}
\usepackage{algorithmic}
\usepackage{multirow}
\usepackage{array}
\usepackage{enumitem}
\usepackage{subcaption}
\usepackage{float}
\usepackage{placeins}

\usepackage{xcolor}
\usepackage{colortbl}
\definecolor{micoRose}{HTML}{F6E0E6}
\colorlet{micoPurple}{micoRose}
\usepackage{soul}
\usepackage{nicefrac}
\usepackage{caption}
\usepackage{hyperref}
\hypersetup{hidelinks}
\usepackage{url}
\usepackage{tikz}
\usetikzlibrary{arrows.meta,positioning}

\renewcommand{\arraystretch}{0.94}

\newsavebox{\micoTableBox}
\newcommand{\micoTableFit}[1]{%
  \sbox{\micoTableBox}{#1}%
  \ifdim\wd\micoTableBox>\linewidth
    \resizebox{\linewidth}{!}{\usebox{\micoTableBox}}%
  \else
    \usebox{\micoTableBox}%
  \fi
}

\newtheorem{proposition}{Proposition}

\let\cite\citep



\definecolor{micoTitleRed}{HTML}{E7776B}
\definecolor{micoTitleGold}{HTML}{D6A127}
\newlength{\micoMarkHeight}

\DeclareRobustCommand{\micoTitleMark}{%
  \begingroup
  \settoheight{\micoMarkHeight}{M}%
  \raisebox{-0.65\micoMarkHeight}{%
    \includegraphics[height=2.3\micoMarkHeight]{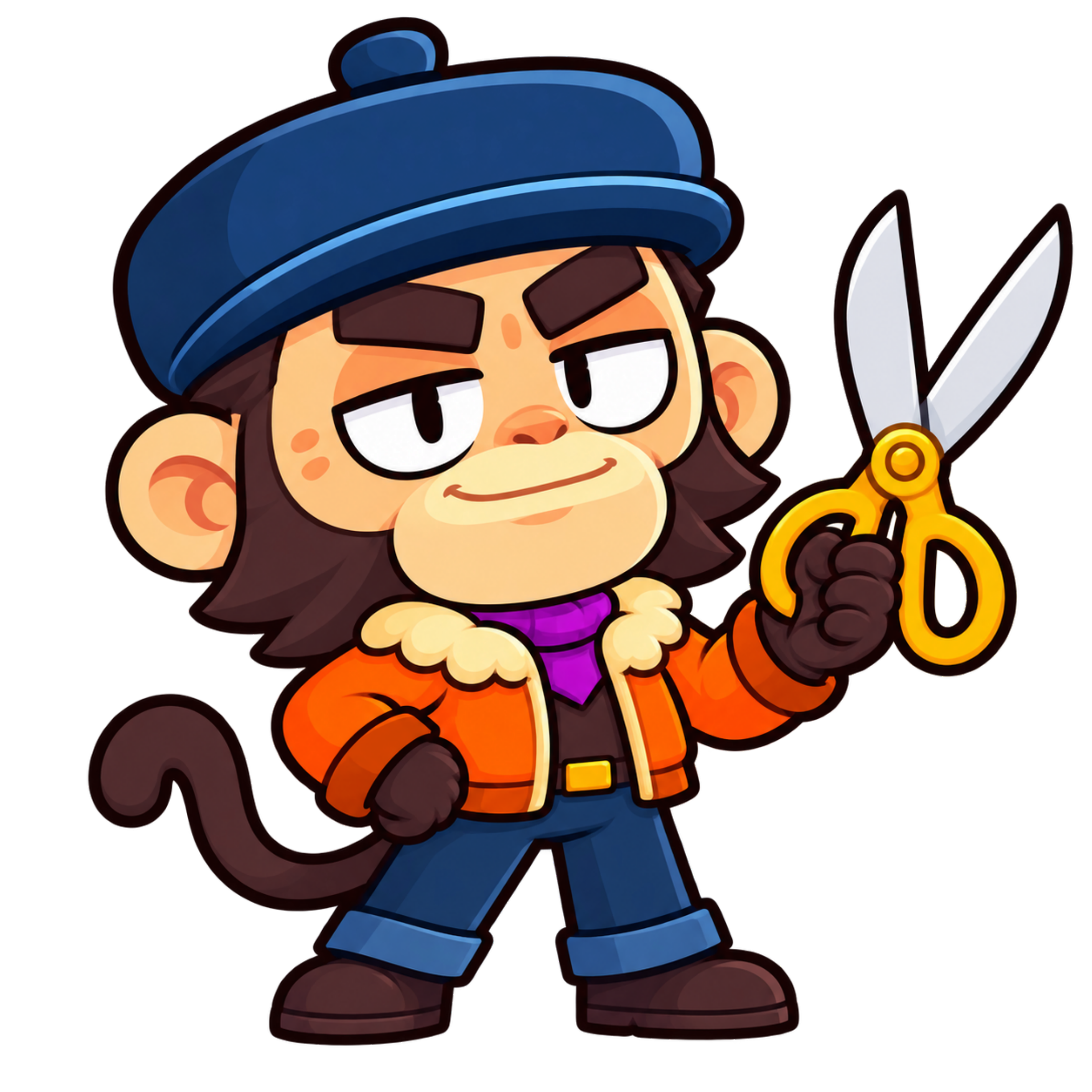}%
  }%
  \endgroup
}
\title{\texorpdfstring{\mbox{\micoTitleMark\hspace{0.22em}\textcolor{micoTitleRed}{Mi}\textcolor{micoTitleGold}{Co}}}{MiCo}: Mutual Information Coverage Optimization through Semantic Erasure Modeling for Efficient MLLM Inference}
\author{%
{\small\textbf{%
Tinghao Wang$^{1,2,*}$,
Yichen Guo$^{1,3,*}$,
Qizhe Zhang$^{1}$,
Yuan Zhang$^{1}$,
Weimin Ouyang$^{1}$,
}}\\
{\small\textbf{%
Rui Huang$^{4}$,
Jiajun Cao$^{1}$,
Sixiang Chen$^{1}$,
Hao Jiang$^{1,2}$,
Jixian Wu$^{5}$,
}}\\
{\small\textbf{%
Zheng Lu$^{2}$,
Bofan Zhu$^{2}$,
Renyuan Li$^{2}$,
Shanghang Zhang$^{1,\dagger}$
}}\\
{\small $^{1}$State Key Laboratory of Multimedia Information Processing, School of Computer Science, Peking University}\\
{\small $^{2}$University of Electronic Science and Technology of China, $^{3}$Nanyang Technological University,}\\
{\small $^{4}$The University of Hong Kong,
$^{5}$New York University}
}
\iclrfinalcopy
\begin{document}
\raggedbottom

\maketitle

\makeatletter
\begingroup
\renewcommand{\thefootnote}{}
\renewcommand{\@makefntext}[1]{\noindent#1}
\footnotetext{$^{*}$Equal contribution.
\quad $^{\dagger}$Corresponding author.}
\endgroup
\makeatother

\vspace{-3pt}
\begin{abstract}
Multimodal large language models (MLLMs) have demonstrated impressive performance in multimodal understanding, but processing large numbers of visual tokens results in high computational costs. While many methods have been proposed to reduce the number of visual tokens, most of them rely on heuristics and are prone to discarding substantial visual information during pruning, leading to degradation in model performance. In this work, by using a semantic erasure model, we derive a general mutual information coverage objective from task log-loss and propose MiCo, a training-free two-stage pruning method. MiCo first uses visual signals to select a representative candidate pool before visual tokens enter the language model, then performs task-aware subset selection within it. At each stage, suitable observable proxies instantiate the derived objective as a monotone submodular coverage function, which MiCo greedily optimizes under the token budget. MiCo is evaluated on diverse MLLMs ranging from 7B to 13B parameters across a broad range of image and video benchmarks spanning general visual reasoning, fine-grained OCR and grounding, hallucination detection, and long-video understanding. MiCo consistently achieves the best performance across nearly all evaluated models under all pruning ratios. On LLaVA-NEXT-13B, MiCo uses only 5.6\% visual tokens, retains 97.5\% of baseline performance, and achieves a 3.8$\times$ inference speedup. Our experiments demonstrate the effectiveness of MiCo and our mutual information coverage objective for visual token pruning.
\end{abstract}

\vspace{-3pt}
\begin{figure}[H]
    \centering
    \captionsetup{skip=4pt}
    \captionsetup[subfigure]{skip=2pt}
    \begin{subfigure}[b]{0.30\linewidth}
        \centering
        \includegraphics[width=\linewidth,height=1.98in,keepaspectratio]{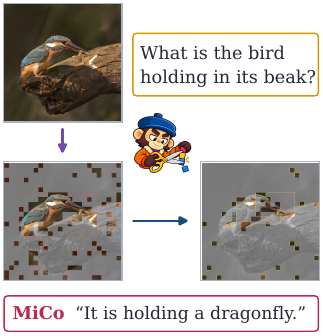}
        \caption{MiCo's pruning}
        \label{fig:teaser_process}
    \end{subfigure}\hfill
    \begin{subfigure}[b]{0.35\linewidth}
        \centering
        \includegraphics[width=\linewidth,height=1.98in,keepaspectratio]{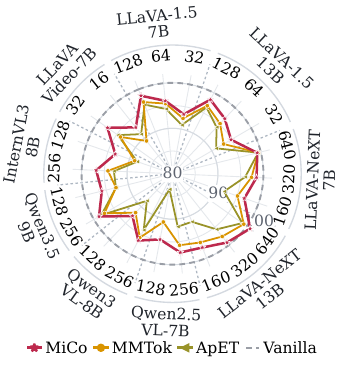}
        \caption{Cross-model performance}
        \label{fig:main_results_radar}
    \end{subfigure}\hfill
    \begin{subfigure}[b]{0.33\linewidth}
        \centering
        \includegraphics[width=\linewidth,height=1.98in,keepaspectratio]{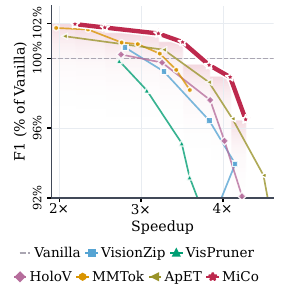}
        \caption{Quality--speed trade-off}
        \label{fig:teaser_efficiency}
    \end{subfigure}
    \caption{MiCo: (a) two-stage pruning on LLaVA-1.5-7B at $T=64$, retaining $576\!\rightarrow\!128\!\rightarrow\!46$ visual tokens for ``What is the bird holding in its beak?''; (b) relative performance across eight image models and one video model in 22 model--budget settings, normalized to each model's unpruned reference (video budgets are per frame); (c) POPE F1 retention relative to Vanilla versus one-token inference speedup on LLaVA-NeXT-13B.}
    \label{fig:teaser}
\end{figure}

\section{Introduction}

Large language models (LLMs) have demonstrated strong capabilities in language understanding and reasoning~\cite{achiam2023gpt,grattafiori2024llama,team2023gemini,yang2025qwen3}. By integrating vision encoders with LLMs, multimodal large language models (MLLMs) leverage these capabilities to understand visual content~\cite{liu2023visual,bai2023qwenvl,liu2024improved,liu2024llavanext,bai2025qwen3,zhu2025internvl3}. They convert visual inputs into token sequences that the LLM processes jointly with text. However, a single image can produce hundreds to thousands of visual tokens, and the number grows further with image resolution, the number of input images, and video frame count. Combined with the quadratic cost of self-attention, these long sequences dominate computation and memory, so reducing visual tokens while preserving model capabilities is a central challenge.

Many methods prune visual tokens to address this problem. Importance-ranking methods score tokens individually: FastV and SparseVLM use text-to-visual attention inside the language model, whereas PruMerge and VisionZip use CLS-token attention from the vision encoder~\cite{chen2024image,zhang2024sparsevlm,shang2025llava,yang2025visionzip}. Subset-construction methods instead build a representative token subset: DivPrune and DART use diversity criteria to reduce redundancy, while MMTok and CoverPruner use coverage criteria to preserve multimodal and visual information~\cite{alvar2025divprune,wen2025stop,dong2026mmtok,zhu2026coverpruner}. However, attention-based ranking often retains semantically redundant tokens, whereas the tokens kept by subset construction may be irrelevant to the current query. Even methods that combine the two paradigms~\cite{zou2025holov,deng2025scope} are motivated by heuristics rather than an analysis of the visual information the model needs, and degrade substantially at high pruning ratios.

To preserve the visual information that inference requires, we analyze how removing visual tokens changes the expected task log-loss and, through a semantic erasure model, derive a mutual information coverage objective. Its theoretical factors are not observable in a single forward pass, so we instantiate them with observable proxies and propose MiCo, a training-free two-stage method that greedily optimizes the resulting surrogate at each stage (Figure~\ref{fig:teaser}(a)); since the surrogate is monotone submodular, greedy selection attains at least a $(1-1/e)$ fraction of its optimum~\citep{nemhauser1978analysis}. Across MLLMs of diverse architectures and a broad range of image and video benchmarks, MiCo achieves the strongest performance among compared methods on nearly all models at all pruning ratios (Figure~\ref{fig:teaser}(b)) while substantially accelerating inference (Figure~\ref{fig:teaser}(c)).

Our main contributions are as follows:
\begin{itemize}[leftmargin=*,nosep]
    \item We analyze the visual information required for inference from the expected task log-loss and, via a semantic erasure model, derive a general mutual information coverage objective for visual token pruning.
    \item We instantiate the objective with accessible proxies, obtaining a tractable submodular surrogate, and propose MiCo, a training-free method that optimizes it by greedy selection.
    \item Extensive experiments across MLLMs, benchmarks, and pruning ratios show that MiCo consistently preserves performance while substantially accelerating inference.
\end{itemize}

\section{Related Work}

\subsection{Multimodal Large Language Models}

Multimodal large language models (MLLMs)~\cite{liu2023visual,bai2023qwenvl,chen2024internvl} encode images and videos into visual tokens that the LLM processes jointly with text. Visual tokens usually far outnumber text tokens: LLaVA-1.5~\cite{liu2024improved} uses 576 tokens for a 336$\times$336 image, LLaVA-NeXT~\cite{liu2024llavanext} up to 2,880 at 672$\times$672, dynamic-resolution models such as Qwen2.5-VL~\cite{Qwen2.5-VL}, Qwen3-VL~\cite{bai2025qwen3}, and InternVL3~\cite{zhu2025internvl3} scale token counts with image size, and LLaVA-Video~\cite{zhang2024llava} exceeds 10K tokens for 64 frames. These long sequences dominate prefill computation and memory, making visual token reduction essential for efficient MLLM inference.

\subsection{Visual Token Reduction for MLLMs}

Training-free pruning methods fall into two categories. The first ranks individual tokens by attention. FastV~\cite{chen2024image}, PyramidDrop~\cite{xing2024pyramiddrop}, and SparseVLM~\cite{zhang2024sparsevlm} use the attention that visual tokens receive from text inside the LLM, whereas PruMerge~\cite{shang2025llava}, VisionZip~\cite{yang2025visionzip}, and VisPruner~\cite{zhang2025vispruner} use CLS-to-patch attention from the visual encoder; VScan~\cite{zhang2025vscan} combines both attention sources. The second constructs a token subset. Diversity-based methods, including DivPrune~\cite{alvar2025divprune}, DART~\cite{wen2025stop}, and CDPruner~\cite{zhang2025cdpruner}, construct mutually distinct subsets to reduce redundancy, while coverage-based methods seek subsets that represent the original visual or multimodal token set. SCOPE, MMTok, CoverPruner, and ApET instantiate different forms of coverage or representative-set construction~\cite{deng2025scope,dong2026mmtok,zhu2026coverpruner,ma2026apet}. Attention-based selectors can retain redundant tokens from one semantic region, whereas set-level objectives without query conditioning may spend budget on task-irrelevant regions. Moving beyond such heuristic criteria, \textbf{MiCo} derives a weighted mutual information coverage objective and optimizes its observable proxy.

\section{Method}
\label{sec:method}

\begin{figure}[t]
    \centering
    \includegraphics[width=\textwidth]{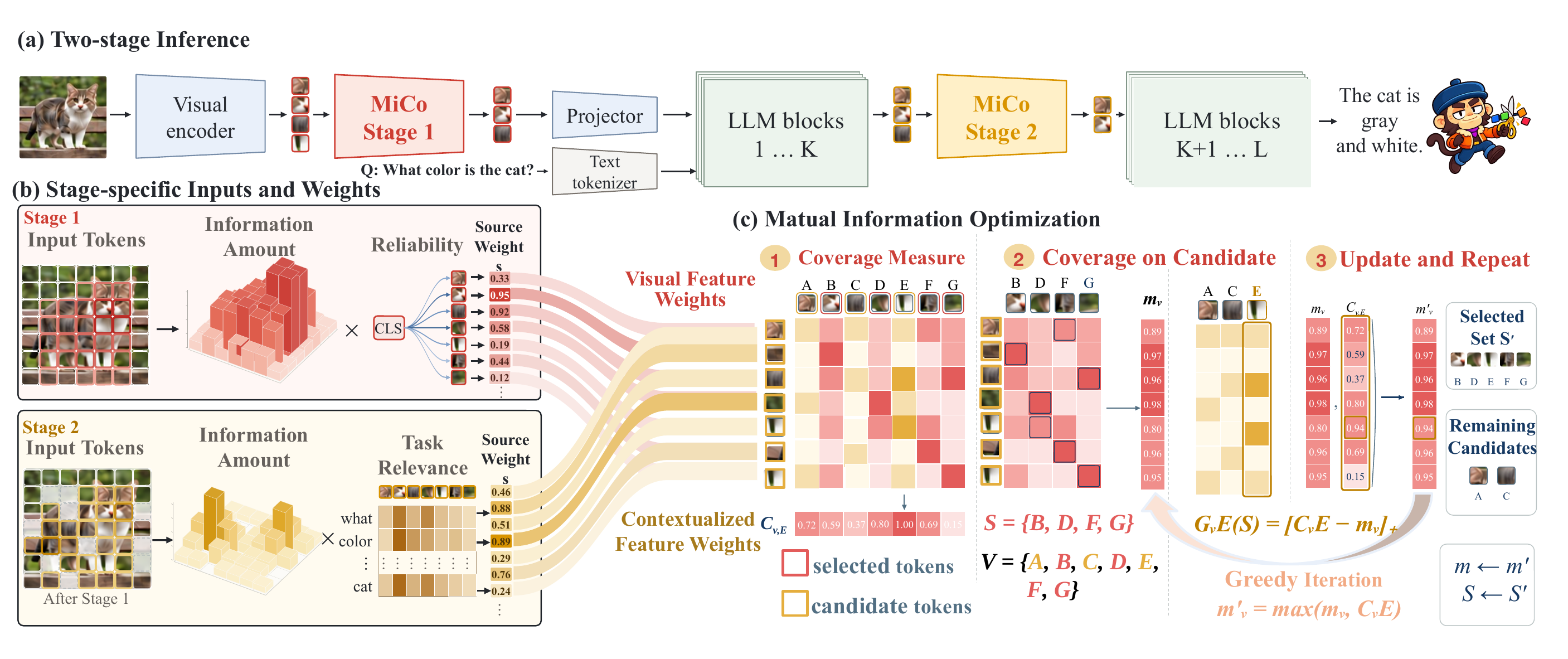}
    \caption{Overview of MiCo's two-stage pruning framework.}
    \label{fig:recover_overview}
\end{figure}

\subsection{From Task Loss to Recoverable Coverage}

\paragraph{Task loss and recoverable information.}
Let $\pi_{\mathrm{data}}$ be the true joint distribution of visual information $Z$, prompt $Q$, shared geometric context $G$ (the image layout and token positions, which pruning leaves unchanged), and discrete answer $Y$, and let $\pi_{\mathrm{pred}}$ be a conditional predictor. Fix $Q=q$ and $G=g$; all entropies and expectations are under $\pi_{\mathrm{data}}$ and are assumed finite. For expected log-loss $\mathcal L(\pi_{\mathrm{pred}};Z,q,g)=\mathbb E[-\log\pi_{\mathrm{pred}}(Y\mid Z,q,g)\mid q,g]$, the Bayes-optimal risk is $R_Z^*(q,g):=\inf_{\pi_{\mathrm{pred}}}\mathcal L(\pi_{\mathrm{pred}};Z,q,g)=H(Y\mid Z,q,g)$~\citep{gneiting2007strictly}.

Let $\mathbf U_V=(U_v)_{v\in V}$ be the complete visual source with discrete semantic variables $U_v$, and $\widehat{\mathbf U}_V(S)=(\widehat U_v(S))_{v\in V}$ the information recoverable from retained tokens $S$. Their Bayes risks are $R_{\mathrm{full}}^*$ and $R_S^*$, respectively. We model task relevance by a semantic-query index $J$ with $\Pr(J=v\mid q,g)=\Pr(J=v\mid Q=q)$, sampled independently of $(\mathbf U_V,\widehat{\mathbf U}_V(S))$ given $q,g$ for each fixed $S$ and recovery channel. Under $Y\perp\widehat{\mathbf U}_V(S)\mid\mathbf U_V,Q,G$, the log-loss/side-information identity~\citep{jiao2015justification}, entropy bound, and data-processing inequality~\citep[Theorems~3.4 and~3.7]{polyanskiy2025information} yield
\begin{equation}
\begin{aligned}
\Delta R^*(S;q,g)
&:=R_S^*(q,g)-R_{\mathrm{full}}^*(q,g)\\
&\le H(\mathbf U_V\mid q,g)
-I\!\left(U_J;\widehat U_J(S)\mid J,q,g\right).
\end{aligned}
\label{eq:task_information_risk_bound}
\end{equation}
Since $H(\mathbf U_V\mid q,g)$ is independent of $S$, maximizing the recovered information about the queried semantic unit minimizes this upper bound for a fixed query distribution.

\paragraph{Semantic erasure and source-weighted mutual-information coverage.}
To evaluate recovery at each source position, we introduce two binary gates. The coverage gate $E_v$ indicates whether $S$ supplies a usable representative, with probability $M_v(S;q,g):=\Pr(E_v=1\mid q,g)$. The reliability gate $C_v$ indicates whether the source representation faithfully carries $U_v$; for a fixed recovery channel, we treat $\Pr(C_v=1\mid q,g)$ as invariant to $S$. Conditional on $q,g$, we assume $C_v\perp E_v$ and $(C_v,E_v)\perp U_v$. The semantic erasure model is
\begin{equation}
\widehat U_v(S)=
\begin{cases}
U_v, & C_vE_v=1,\\
\bot, & C_vE_v=0.
\end{cases}
\label{eq:semantic_erasure}
\end{equation}
Here $\bot$ is an erasure symbol outside the alphabet of $U_v$. The independent gates succeed with probability $\Pr(C_v=1\mid q,g)M_v(S;q,g)$. The standard erasure-channel identity then gives $I(U_v;\widehat U_v(S)\mid q,g)=H(U_v\mid q,g)\Pr(C_v=1\mid q,g)M_v(S;q,g)$~\citep{polyanskiy2025information}. Expanding over $J$ and substituting this identity, we define the task-relevant recovered information to maximize:
\begin{equation}
\begin{aligned}
\mathcal I_{\mathrm{task}}(S;q,g)
&:=I\!\left(U_J;\widehat U_J(S)\mid J,q,g\right)\\
&=\sum_{v\in V}\Pr(J=v\mid Q=q)
I\!\left(U_v;\widehat U_v(S)\mid q,g\right)\\
&=\sum_{v\in V}
H(U_v\mid q,g)\Pr(C_v=1\mid q,g)\Pr(J=v\mid Q=q)
M_v(S;q,g).
\end{aligned}
\label{eq:weighted_coverage}
\end{equation}
In Eq.~\ref{eq:weighted_coverage}, the source contribution is weighted by
$H(U_v\mid q,g)$, which measures the information uncertainty associated with position $v$,
and by $\Pr(C_v=1\mid q,g)$ and $\Pr(J=v\mid Q=q)$, which measure,
respectively, the reliability of its representation and the likelihood that the
query requires it. The factor $M_v(S;q,g)$ measures how well the retained set
covers that position. The result is a source-weighted coverage
objective for visual-token pruning. The complete derivation is given in
Appendix~\ref{sec:appendix_logloss}--\ref{sec:appendix_erasure}.
\subsection{MiCo}
\label{sec:recover_variants}

The objective in Eq.~\ref{eq:weighted_coverage} contains three theoretical
factors that cannot be computed directly during a forward pass. We therefore
approximate them with observable proxies, denoted by
$\widehat{\mathcal H}_v$, $\widehat{\mathcal R}_v$, and
$\widehat{\mathcal P}_v$.

We use the hidden-state norm
$\widehat{\mathcal H}_v=\lVert x_v\rVert_2$ as a proxy for a token's information amount,
since its magnitude reflects the strength of the encoded visual signal. We approximate representation reliability $\widehat{\mathcal R}_v$ with the vision encoder's CLS-token attention, since it measures how strongly the representation at $v$ participates in global aggregation. At this point the visual tokens have not yet interacted with the text, so aggressive pruning could discard query-relevant tokens prematurely; Stage 1 therefore keeps a larger candidate pool, and a second selection inside the LLM uses text-to-visual attention as a query-dependent proxy for task relevance $\widehat{\mathcal P}_v$. The proxies are:
\begin{equation}
(\widehat{\mathcal H}_v,\widehat{\mathcal R}_v,\widehat{\mathcal P}_v)=
\begin{cases}
\bigl(\lVert x_v^{\mathrm{vis}}\rVert_2,
\overline{A}^{\mathrm{vis}}_{\mathrm{CLS}\to v},1\bigr),
& \text{Stage 1},\\[2pt]
\bigl(\lVert x_v^{(\ell_\star)}\rVert_2,
1,\overline{A}^{(\ell_\star)}_{\mathrm{text}\to v}\bigr),
& \text{Stage 2}.
\end{cases}
\label{eq:mico_stage_proxies}
\end{equation}
Here $x_v^{\mathrm{vis}}$ denotes the pre-projection visual
feature, and $x_v^{(\ell_\star)}$ denotes the visual feature at the
selected layer; $\overline{A}^{\mathrm{vis}}_{\mathrm{CLS}\to v}$ is the attention from the CLS token to token $v$ in the vision encoder, averaged over heads, and $\overline{A}^{(\ell_\star)}_{\mathrm{text}\to v}$ is the attention from the text tokens to token $v$ at decoder layer $\ell_\star$, averaged over text queries and heads. In both stages, we estimate coverage using the best
retained visual representative:
\begin{equation}
\widehat M_v(S)=\max_{s\in S}\left[\frac{x_v^\top x_s}{\lVert x_v\rVert_2\lVert x_s\rVert_2}\right]_+,
\label{eq:mico_coverage_proxy}
\end{equation}
where $[a]_+=\max(a,0)$ and $\widehat M_v(\varnothing)=0$. The overall proxy
objective sums the weighted coverage over all source positions:
\[
\widehat{\mathcal I}_{\mathrm{task}}(S)
=\sum_{v\in V}
\widehat{\mathcal H}_v\widehat{\mathcal R}_v\widehat{\mathcal P}_v
\widehat M_v(S).
\]
Starting from $S=\varnothing$, we use a greedy search to add the token $i^\star=\arg\max_{i\in V\setminus S}
\Delta\widehat{\mathcal I}_{\mathrm{task}}(i\mid S)$ until budget is reached, where
$\Delta\widehat{\mathcal I}_{\mathrm{task}}(i\mid S)
=\widehat{\mathcal I}_{\mathrm{task}}(S\cup\{i\})
-\widehat{\mathcal I}_{\mathrm{task}}(S)$. The objective
is normalized, monotone, and submodular, so greedy selection achieves a
$(1-1/e)$ approximation under the cardinality constraint
$|S|\le B$, where $B$ is the stage budget~\citep{krause2014submodular,nemhauser1978analysis}. Appendices~\ref{sec:appendix_coverage_guarantee} and~\ref{sec:appendix_selector} give the procedure and proof; Appendix~\ref{sec:appendix_proxy_checks} checks the proxies against their theoretical counterparts, and Table~\ref{tab:ablation_proxy_substitutions} evaluates substitutes.

\section{Experiments}

\subsection{Experimental Setup}
\label{sec:experimental_setup}

\paragraph{Models.}
We evaluate MiCo on MLLMs of different architectures, covering the LLaVA family (LLaVA-1.5-7B/13B~\cite{liu2024improved} and LLaVA-NeXT-7B/13B~\cite{liu2024llavanext}), the Qwen family (Qwen2.5-VL-7B-Instruct~\cite{Qwen2.5-VL}, Qwen3-VL-8B-Instruct~\cite{bai2025qwen3}, and Qwen3.5-9B~\cite{qwen35_9b_modelcard}), and InternVL3-8B~\cite{zhu2025internvl3}. For video understanding, we use LLaVA-Video-7B~\cite{zhang2024llava}.

\paragraph{Benchmarks.}
We use ten image tracks for the LLaVA family and eight for Qwen-VL/InternVL (e.g., GQA, POPE, MME, MMBench, SEED, and MMMU), supplementary tracks for grounding, document, chart, and OCR understanding (e.g., RefCOCO, DocVQA, OCRBench, and VTC-Bench), and four video benchmarks (VideoMME, MVBench, LongVideoBench, and MLVU); see Appendix~\ref{sec:appendix_benchmarks}.

\paragraph{Baselines.}
We compare MiCo with FastV~\cite{chen2024image}, SparseVLM~\cite{zhang2024sparsevlm}, VisionZip~\cite{yang2025visionzip}, DivPrune~\cite{alvar2025divprune}, VisPruner~\cite{zhang2025vispruner}, HoloV~\cite{zou2025holov}, VScan~\cite{zhang2025vscan}, MMTok~\cite{dong2026mmtok}, and ApET~\cite{ma2026apet}.

\paragraph{Implementation details.}
We evaluate all methods with the public evaluation stack of each model family: the official LLaVA evaluation scripts for LLaVA-1.5 and LLaVA-NeXT, VLMEvalKit~\cite{duan2024vlmevalkit} for Qwen2.5-VL, Qwen3-VL, Qwen3.5, and InternVL3, and lmms-eval~\cite{zhang2024lmmseval} for LLaVA-Video. Each model keeps its native processor, prompt format, and official scorer; we use batch size one with greedy decoding, and all pruning methods share the same token budgets and evaluation pipeline. For MiCo, Stage~1 keeps a candidate pool of $N_1=2T$ tokens on the visual-encoder features, and Stage~2 re-selects $R$ of them after decoder layer $K$, with $R$ chosen so that the average number of visual tokens over all decoder layers equals the budget $T$ (Appendix~\ref{sec:appendix_budget}); the per-model $K$ values are listed in Table~\ref{tab:layer_configurations}. Details are provided in Appendix~\ref{sec:appendix_implementation}.

\subsection{LLaVA Series}
\label{sec:llava_results}

We evaluate LLaVA-1.5 and LLaVA-NeXT at 7B and 13B scales with pruning ratios of 77.8\%, 88.9\%, and 94.4\%. MiCo achieves the highest aggregate score among reported pruners in 11 of 12 model--budget settings (Tables~\ref{tab:pruning_comparison_next13b_main} and~\ref{tab:other_models_summary}); the exception is LLaVA-NeXT-7B at 77.8\% pruning, where VScan is 0.3 points higher. At 94.4\% pruning, it retains 93.3\%/94.9\% of baseline performance on LLaVA-1.5-7B/13B and 96.7\%/97.5\% on LLaVA-NeXT-7B/13B (99.3\% on NeXT-13B at 88.9\%). Complete results are in Appendix~\ref{sec:appendix_full_results}.

\begin{table}[!htb]
    \centering
    % Source: https://zitd5je6f7j.feishu.cn/wiki/XlKVwhn5YimVnAkpOYhcOKa2n3b
% Snapshot: ccfa-workfiles/experiments/feishu-table-sync/cache/next13b_L4L3U2.json (read revision 1154; verified through workbook revision 1168).
% Main-text view of tables/llava-next-13b.tex; the appendix retains the complete table.
% Scores are unchanged; the main-text view uses its displayed method set.
\centering
\footnotesize
\setlength{\tabcolsep}{2.1pt}
\renewcommand{\arraystretch}{1.0}
\caption{Performance on LLaVA-NeXT-13B across ten benchmarks.}
\label{tab:pruning_comparison_next13b_main}
\micoTableFit{%
\begin{tabular}{l|cccccccccc|cc}
\toprule
Method & GQA & SQA-IMG & TextVQA & POPE & MME & MMB-EN & MMB-CN & SEED & AI2D & MMMU & Acc & Rel \\
\midrule
\multicolumn{13}{c}{Upper Bound --- 2880 tokens (no pruning)} \\
Vanilla & 64.4 & 73.1 & 63.2 & 85.3 & 1539.5 & 68.5 & 61.2 & 71.6 & 70.1 & 35.7 & 67.0 & 100.0\% \\
\midrule
\multicolumn{13}{c}{Retain 640 tokens (77.8\% pruned)} \\
\midrule
FastV (ECCV'24) & 60.9 & 71.7 & 60.7 & 80.2 & 1516.7 & 65.5 & 59.9 & 67.4 & 67.7 & 35.8 & 64.6 & 96.4\% \\
DivPrune (CVPR'25) & 63.5 & 72.2 & 59.2 & 86.5 & 1526.1 & 67.5 & 62.9 & 69.4 & 68.4 & \underline{37.8} & 66.4 & 99.1\% \\
VisPruner (ICCV'25) & 62.6 & 71.4 & \textbf{62.1} & 85.2 & \underline{1561.2} & 67.4 & \underline{63.1} & 68.8 & 68.2 & 37.0 & 66.4 & 99.1\% \\
VScan (TMLR'26) & 62.8 & 72.2 & 61.8 & 85.2 & 1553.6 & 67.6 & 62.7 & 68.2 & 69.9 & 36.6 & \underline{66.5} & \underline{99.2\%} \\
MMTok (ICLR'26) & 63.7 & 71.4 & 60.8 & \underline{86.8} & 1540.9 & 66.6 & 62.5 & \underline{69.7} & 68.1 & \textbf{38.3} & \underline{66.5} & \underline{99.2\%} \\
ApET (CVPR'26) & \underline{64.2} & \underline{72.5} & 58.0 & 86.4 & 1475.4 & \underline{67.7} & 62.5 & 69.5 & \textbf{70.9} & 36.7 & 66.2 & 98.8\% \\
\rowcolor{micoRose}
\textbf{MiCo} & \textbf{64.3} & \textbf{73.2} & \underline{61.9} & \textbf{87.0} & \textbf{1573.9} & \textbf{69.0} & \textbf{63.4} & \textbf{71.4} & \underline{70.2} & 37.2 & \textbf{67.6} & \textbf{100.9\%} \\
\midrule
\multicolumn{13}{c}{Retain 320 tokens (88.9\% pruned)} \\
\midrule
FastV (ECCV'24) & 54.6 & 70.5 & 55.4 & 63.6 & 1279.0 & 59.8 & 54.4 & 59.2 & 65.0 & 35.8 & 58.2 & 86.9\% \\
DivPrune (CVPR'25) & 61.8 & \underline{72.3} & 57.6 & 85.2 & 1473.0 & \underline{65.9} & 61.9 & 67.2 & 67.7 & \textbf{37.2} & 65.0 & 97.1\% \\
VisPruner (ICCV'25) & 60.8 & 70.1 & \underline{60.3} & 81.1 & 1486.0 & 65.7 & \underline{62.5} & 65.1 & 67.1 & 36.4 & 64.4 & 96.1\% \\
VScan (TMLR'26) & 61.0 & \textbf{72.4} & 59.3 & 82.0 & 1496.4 & 65.2 & 59.8 & 64.9 & 66.9 & 36.3 & 64.3 & 95.9\% \\
MMTok (ICLR'26) & \underline{62.78} & 71.49 & 59.05 & \underline{86.06} & \underline{1501.1} & 65.0 & 61.7 & \underline{67.7} & 67.9 & \textbf{37.2} & \underline{65.4} & \underline{97.6\%} \\
ApET (CVPR'26) & 61.7 & 71.4 & 54.7 & 84.1 & 1455.0 & 65.6 & 60.1 & 65.5 & \textbf{68.4} & 36.4 & 64.1 & 95.6\% \\
\rowcolor{micoRose}
\textbf{MiCo} & \textbf{63.3} & 71.3 & \textbf{60.9} & \textbf{86.5} & \textbf{1543.3} & \textbf{67.9} & \textbf{63.2} & \textbf{69.5} & \underline{68.3} & \underline{37.0} & \textbf{66.5} & \textbf{99.3\%} \\
\midrule
\multicolumn{13}{c}{Retain 160 tokens (94.4\% pruned)} \\
\midrule
DivPrune (CVPR'25) & 60 & 71.4 & 56.3 & 81.9 & 1436.7 & 65.1 & \underline{60.9} & 64.5 & \underline{67.3} & 36.6 & 63.6 & 94.9\% \\
VisPruner (ICCV'25) & 58.4 & 71.2 & \textbf{58.4} & 76.2 & 1390.8 & 64.1 & 59.6 & 61.1 & 65.5 & 36.2 & 62.0 & 92.6\% \\
VScan (TMLR'26) & 58.2 & 70.3 & 55.0 & 75.5 & 1385.1 & 61.5 & 53.4 & 61.5 & 63.4 & 34.1 & 60.2 & 89.9\% \\
MMTok (ICLR'26) & \textbf{62.02} & \textbf{72.19} & \underline{56.47} & \textbf{85.52} & \underline{1465.7} & \underline{65.3} & 60.3 & \underline{65.5} & \underline{67.3} & \textbf{37.0} & \underline{64.5} & \underline{96.3\%} \\
ApET (CVPR'26) & 59.1 & 71.4 & 52.9 & 79.6 & 1342.7 & 62.1 & 55.8 & 62.3 & 65.6 & \underline{36.8} & 61.3 & 91.4\% \\
\rowcolor{micoRose}
\textbf{MiCo} & \underline{62.0} & \underline{71.8} & \textbf{58.4} & \underline{85.0} & \textbf{1488.4} & \textbf{66.2} & \textbf{63.1} & \textbf{67.6} & \textbf{68.0} & 36.7 & \textbf{65.3} & \textbf{97.5\%} \\
\bottomrule
\end{tabular}
}

\end{table}

\subsection{Advanced Architectures}
\label{sec:advanced_results}

Table~\ref{tab:other_models_summary} compares Qwen2.5-VL-7B, Qwen3-VL-8B, Qwen3.5-9B, and InternVL3-8B at $T=256$ and $T=128$. MiCo achieves the highest aggregate score in all eight model--budget settings. At $T=128$, it retains 94.7\%, 92.8\%, 96.0\%, and 93.3\% of baseline performance, respectively. On Qwen2.5 and Qwen3, this exceeds MMTok by 3.9 and 1.0 percentage points; on InternVL3, the corresponding margin is 4.5 points. Qwen3.5 retains 98.7\%/96.0\% at $T=256/128$ versus ApET's 98.1\%/92.8\%, a larger advantage at the tighter budget. These gains across model families support MiCo's applicability beyond LLaVA (full results in Appendix~\ref{sec:appendix_full_results}).

\begin{table}[!htb]
    \centering
    % Rel values transposed from the detailed LLaVA, Qwen, and InternVL tables.
\centering
\footnotesize
\setlength{\tabcolsep}{1.2pt}
\renewcommand{\arraystretch}{1.05}
\caption{Relative performance (\%) across LLaVA, Qwen-VL, and InternVL models; Vanilla is 100\%.}
\label{tab:other_models_summary}
\begin{tabular}{>{\raggedright\arraybackslash}p{\dimexpr0.14\textwidth-2\tabcolsep\relax}*{17}{>{\centering\arraybackslash}p{\dimexpr0.86\textwidth/17-2\tabcolsep\relax}}}
\toprule
Method
  & \multicolumn{9}{c}{LLaVA}
  & \multicolumn{6}{c}{Qwen-VL}
  & \multicolumn{2}{c}{InternVL} \\
\cmidrule(lr){2-10}\cmidrule(lr){11-16}\cmidrule(lr){17-18}
Model
  & \multicolumn{3}{c}{1.5-7B}
  & \multicolumn{3}{c}{1.5-13B}
  & \multicolumn{3}{c}{NeXT-7B}
  & \multicolumn{2}{c}{2.5-VL-7B}
  & \multicolumn{2}{c}{3-VL-8B}
  & \multicolumn{2}{c}{3.5-9B}
  & \multicolumn{2}{c}{3-8B} \\
\cmidrule(lr){2-4}\cmidrule(lr){5-7}\cmidrule(lr){8-10}\cmidrule(lr){11-12}\cmidrule(lr){13-14}\cmidrule(lr){15-16}\cmidrule(lr){17-18}
Budget $T$ & 128 & 64 & 32 & 128 & 64 & 32 & 640 & 320 & 160 & 256 & 128 & 256 & 128 & 256 & 128 & 256 & 128 \\
\midrule
FastV & 92.6 & 75.5 & -- & 94.1 & 85.0 & -- & 95.4 & 80.4 & -- & 93.0 & 77.9 & 84.3 & 63.7 & 84.0 & 62.8 & \underline{95.5} & 81.5 \\
DivPrune & 96.8 & 94.4 & 91.3 & 96.7 & 93.2 & 91.2 & 99.0 & 97.2 & 94.2 & 94.5 & 90.3 & 93.6 & 88.6 & 95.9 & 91.8 & 93.3 & 87.7 \\
VisPruner & 96.6 & 93.3 & 87.8 & 96.6 & 93.6 & 88.5 & 98.7 & 95.6 & 91.0 & 91.8 & 88.0 & 94.2 & 86.8 & 97.0 & 90.1 & 89.4 & 81.1 \\
VScan & \underline{97.8} & \underline{96.0} & 91.8 & \underline{97.3} & \underline{96.0} & 88.7 & \textbf{99.5} & 96.2 & 90.3 & 94.8 & 90.6 & 95.4 & 90.1 & -- & -- & 92.9 & 87.3 \\
MMTok & 97.0 & 95.5 & \underline{92.4} & 96.8 & 95.2 & \underline{93.0} & \underline{99.4} & \underline{97.4} & \underline{95.1} & \underline{95.8} & \underline{90.8} & \underline{95.8} & \underline{91.8} & 97.3 & \underline{92.8} & 94.2 & \underline{88.8} \\
ApET & 96.4 & 94.3 & 90.9 & 96.4 & 94.1 & 91.1 & \underline{99.4} & 96.0 & 92.0 & 91.7 & 83.5 & 93.9 & 88.2 & \underline{98.1} & \underline{92.8} & 92.9 & 88.2 \\
\rowcolor{micoRose}
\textbf{MiCo} & \textbf{98.5} & \textbf{96.1} & \textbf{93.3} & \textbf{98.3} & \textbf{96.7} & \textbf{94.9} & 99.2 & \textbf{98.5} & \textbf{96.7} & \textbf{97.4} & \textbf{94.7} & \textbf{96.5} & \textbf{92.8} & \textbf{98.7} & \textbf{96.0} & \textbf{96.8} & \textbf{93.3} \\
\bottomrule
\end{tabular}

\end{table}

\subsection{LLaVA-Video}
\label{sec:video_results}

Table~\ref{tab:llava_video} compares LLaVA-Video-7B at per-frame budgets $T=32$ and $T=16$. MiCo achieves the highest Acc at both budgets and leads seven of eight task--budget comparisons; at $T=32$ it exceeds ApET, the strongest baseline, by 3.1 points on MLVU and 1.7 Acc points overall, and at $T=16$ it prunes 90.5\% of visual tokens while retaining 93.0\% of baseline performance with a 2.31$\times$ speedup.

\begin{table}[!htb]
    \centering
    % Performance from accepted four-benchmark rows; costs from the accepted native-input table.
% Only latency (complete answer), peak allocated memory and logical FLOPs are retained.
\centering
\footnotesize
\caption{Performance and efficiency on LLaVA-Video-7B.}
\label{tab:llava_video}
\label{tab:efficiency}
\setlength{\tabcolsep}{4pt}
\micoTableFit{%
\begin{tabular}{l*{9}{c}}
\toprule
\multirow{2}{*}{Method} & \multicolumn{6}{c}{Performance (\%)$\uparrow$} & \multicolumn{3}{c}{Efficiency$\downarrow$} \\
\cmidrule(lr){2-7}\cmidrule(lr){8-10}
& VideoMME & MVBench & \shortstack{LongVideo\\Bench} & MLVU & Acc & Rel & \shortstack[c]{Latency\\(ms)} & \shortstack[c]{Peak memory\\(GiB)} & \shortstack[c]{Logical\\FLOPs (T)} \\
\midrule
Vanilla & 63.63 & 58.18 & 59.01 & 67.75 & 62.14 & 100.0\% & 1855.0 & 25.59 & 243.833 \\
\midrule
\multicolumn{10}{c}{\textit{Budget $T=32$ tokens per frame}} \\
\midrule
FastV & 56.00 & 52.58 & 52.40 & 58.50 & 54.87 & 88.3\% & 975.1 & 41.30 & 75.951 \\
SparseVLM & 59.00 & 54.40 & 53.70 & 60.70 & 56.95 & 91.6\% & 1105.3 & 41.30 & 76.338 \\
DivPrune & 59.30 & 53.85 & 56.40 & 61.50 & 57.76 & 93.0\% & \textbf{881.8} & \underline{22.61} & 74.738 \\
VisPruner & \underline{60.15} & 55.20 & 56.69 & 62.46 & 58.63 & 94.3\% & \underline{884.2} & 47.96 & \textbf{74.732} \\
MMTok & 59.67 & 54.90 & 57.07 & 61.96 & 58.40 & 94.0\% & 1098.9 & \underline{22.61} & \underline{74.734} \\
ApET & 59.63 & \underline{55.33} & \underline{57.44} & \underline{62.89} & \underline{58.82} & \underline{94.7}\% & 924.1 & \underline{22.61} & \underline{74.734} \\
\rowcolor{micoRose}
\textbf{MiCo} & \textbf{61.81} & \textbf{56.78} & \textbf{57.67} & \textbf{65.99} & \textbf{60.56} & \textbf{97.5}\% & 933.8 & \textbf{21.32} & 74.834 \\
\midrule
\multicolumn{10}{c}{\textit{Budget $T=16$ tokens per frame}} \\
\midrule
FastV & 50.00 & 46.45 & 46.60 & 52.80 & 48.96 & 78.8\% & 885.0 & 41.30 & 60.937 \\
SparseVLM & 49.80 & 50.80 & 47.60 & 52.00 & 50.05 & 80.5\% & 1093.1 & 41.30 & 61.750 \\
DivPrune & 56.70 & 51.83 & 52.10 & 58.60 & 54.81 & 88.2\% & \textbf{783.7} & \underline{22.61} & 59.002 \\
VisPruner & 56.41 & 52.15 & 52.28 & 57.14 & 54.50 & 87.7\% & \underline{789.6} & 47.96 & 58.997 \\
MMTok & 55.96 & 52.18 & 54.38 & \underline{59.27} & 55.45 & 89.2\% & 902.0 & \underline{22.61} & 58.998 \\
ApET & \underline{57.04} & \underline{54.00} & \textbf{56.32} & 59.06 & \underline{56.61} & \underline{91.1}\% & 818.2 & \underline{22.61} & \underline{58.994} \\
\rowcolor{micoRose}
\textbf{MiCo} & \textbf{59.78} & \textbf{54.80} & \underline{54.75} & \textbf{61.93} & \textbf{57.82} & \textbf{93.0}\% & 804.7 & \textbf{21.32} & \textbf{58.913} \\
\bottomrule
\end{tabular}%
}

\end{table}

\begin{figure}[!htb]
    \centering
    \includegraphics[width=\linewidth]{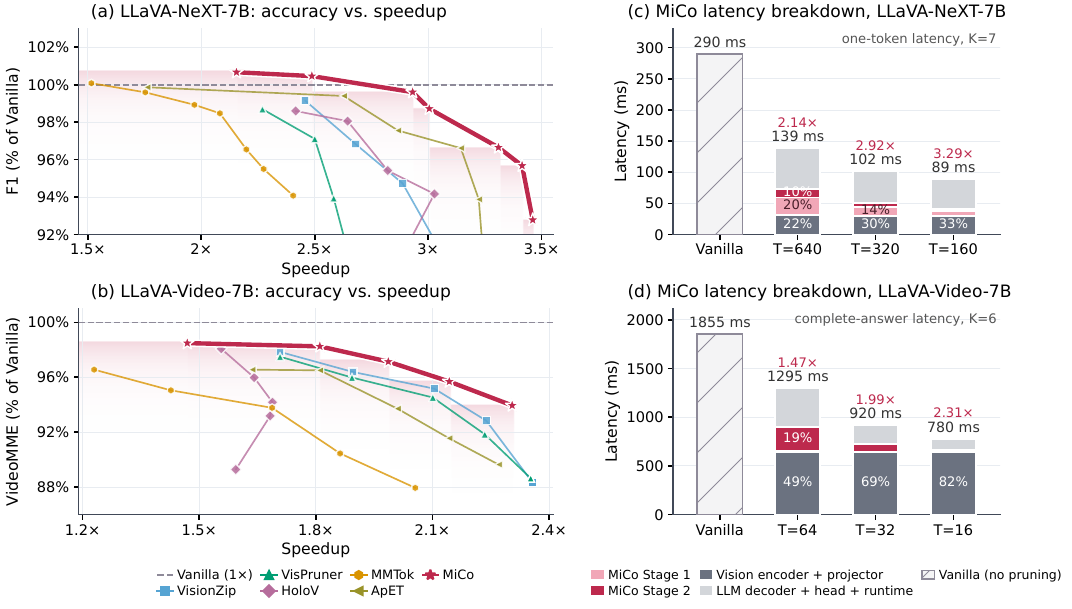}
    \caption{Performance--efficiency trade-offs (a,\,b) and MiCo's latency by component with speedups over Vanilla (c,\,d); NeXT one-token, Video complete-answer latency. Details in Appendix~\ref{sec:appendix_efficiency}.}
    \label{fig:pareto_next13b_video}
    \label{fig:video_pareto_main}
\end{figure}

\subsection{Efficiency Study}
\label{sec:efficiency}

At $T=16$ on LLaVA-Video-7B (Table~\ref{tab:llava_video}), MiCo retains 93.0\% of baseline performance with a 2.31$\times$ complete-answer speedup, 75.8\% fewer logical FLOPs, and the lowest peak memory of all compared methods (21.32 vs.\ 25.59\,GiB for Vanilla; FastV, SparseVLM, and VisPruner exceed the unpruned model). Figure~\ref{fig:teaser}(c) shows 99.62\% POPE F1 retention at a 3.83$\times$ one-token speedup on NeXT-13B; Figure~\ref{fig:pareto_next13b_video} shows 99.61\% at 2.93$\times$ on NeXT-7B and 98.25\% VideoMME retention at 1.81$\times$ on Video, together with where MiCo's latency goes. Protocols and the full NeXT-7B/13B comparisons (Table~\ref{tab:efficiency_next7b}) are in Appendix~\ref{sec:appendix_efficiency}.

\subsection{Challenging Visual Understanding}
\label{sec:challenging_results}

\paragraph{Grounding.}
\label{sec:grounding_results}

Grounding tests whether the retained tokens support spatial localization (parenthesized values are retention relative to Vanilla). On RefCOCO, RefCOCO+, and RefCOCOg, MiCo exceeds FastV, VisPruner, and ApET in all 12 combinations in Table~\ref{tab:ablation_grounding}; gRefCOCO results are in Appendix~\ref{sec:appendix_grefcoco}.

\begin{table}[!htbp]
\centering
% Source: Feishu Grounding comparison, sheet JL2Zv8, A1:E31, retrieved 2026-09-18.
\centering
\footnotesize
\setlength{\tabcolsep}{4pt}
\renewcommand{\arraystretch}{1.05}
\caption{REC scores (\%) on RefCOCO, RefCOCO+, and RefCOCOg.}
\label{tab:ablation_grounding}
\resizebox{\linewidth}{!}{%
\begin{tabular}{lcccccc}
\toprule
& \multicolumn{3}{c}{Qwen2.5-VL-7B} & \multicolumn{3}{c}{Qwen3-VL-8B} \\
\cmidrule(lr){2-4}\cmidrule(lr){5-7}
Method & RefCOCO & RefCOCO+ & RefCOCOg & RefCOCO & RefCOCO+ & RefCOCOg \\
\midrule
Vanilla & 86.8 (100.0\%) & 79.4 (100.0\%) & 85.5 (100.0\%) & 92.4 (100.0\%) & 88.7 (100.0\%) & 90.4 (100.0\%) \\
\midrule
\multicolumn{7}{c}{\textit{Retain 256 tokens (80.2\% pruned)}} \\
\midrule
FastV & 33.0 (38.0\%) & 28.3 (35.6\%) & 35.5 (41.5\%) & 67.2 (72.7\%) & 59.0 (66.5\%) & 68.1 (75.3\%) \\
VisPruner & 22.3 (25.7\%) & 17.8 (22.4\%) & 21.7 (25.4\%) & \underline{77.8} (84.2\%) & \underline{71.7} (80.8\%) & \underline{77.0} (85.2\%) \\
ApET & \underline{44.0} (50.7\%) & \underline{37.1} (46.7\%) & \underline{42.6} (49.8\%) & 51.4 (55.6\%) & 47.1 (53.1\%) & 49.7 (55.0\%) \\
\rowcolor{micoRose}
\textbf{MiCo} & \textbf{75.0} (86.4\%) & \textbf{68.1} (85.8\%) & \textbf{74.0} (86.5\%) & \textbf{82.8} (89.6\%) & \textbf{77.3} (87.1\%) & \textbf{81.9} (90.6\%) \\
\midrule
\multicolumn{7}{c}{\textit{Retain 128 tokens (90.1\% pruned)}} \\
\midrule
FastV & 7.1 (8.2\%) & 4.2 (5.3\%) & 4.9 (5.7\%) & 23.8 (25.8\%) & 19.6 (22.1\%) & 24.4 (27.0\%) \\
VisPruner & 11.3 (13.0\%) & 8.6 (10.8\%) & 10.6 (12.4\%) & \underline{49.1} (53.1\%) & \underline{43.0} (48.5\%) & \underline{47.8} (52.9\%) \\
ApET & \underline{17.9} (20.6\%) & \underline{14.6} (18.4\%) & \underline{15.4} (18.0\%) & 33.4 (36.1\%) & 29.5 (33.3\%) & 33.3 (36.8\%) \\
\rowcolor{micoRose}
\textbf{MiCo} & \textbf{50.0} (57.6\%) & \textbf{44.1} (55.5\%) & \textbf{47.7} (55.8\%) & \textbf{64.5} (69.8\%) & \textbf{57.8} (65.2\%) & \textbf{65.6} (72.6\%) \\
\bottomrule
\end{tabular}%
}

\end{table}

\paragraph{Fine-grained understanding.}
\label{sec:fine_grained_results}

Table~\ref{tab:qwen_reading_benchmarks} compares MiCo with FastV, VisPruner, and ApET on TextVQA, ChartQA, and DocVQA using Qwen2.5-VL-7B and Qwen3-VL-8B. MiCo leads all twelve displayed settings; at $T=128$ on Qwen2.5 its ChartQA score is 70.68 versus 36.68 for VisPruner, the strongest displayed baseline. The full comparison with eight external methods, including settings led by HoloV or MMTok, is in Appendix~\ref{sec:appendix_qwen_reading}.

\paragraph{VTC-Bench.}
\label{sec:vtc_results}

VTC-Bench~\citep{liao2026vtcbench} uses a fixed reference model to define downsampling-sensitive (Group A) and downsampling-robust (Group B) questions. We evaluate Qwen2.5-VL-7B on original images at $T=256/128$ (Table~\ref{tab:vtc_bench_qwen25_main}; column percentages are the reference image-area reductions used for grouping, not token budgets). MiCo outperforms FastV, VisPruner, and ApET in all 20 settings, retaining 75.8--84.9\% of baseline on Group A and 90.7--95.0\% on Group B at $T=128$. Details are in Appendix~\ref{sec:appendix_vtc_bench}.
% Sources read 2026-09-22:
% Feishu Docx WH5rdMOmEoTMKPxKoyGcuCTMnvd, revision 601,
% ChartQA/DocVQA native table doxcnmqvy6lAAphGldo3HBxTvnb.
% TextVQA: embedded spreadsheet ZWr6sFcIphhA49tbghjc7ESunkh,
% sheet pMs3TL, A1:F45 (native precision retained in the CSV below).
% Data snapshot: tables/data/qwen_reading_benchmarks.csv.
\begin{table}[!htbp]
\centering
\footnotesize
\setlength{\tabcolsep}{1.5pt}
\renewcommand{\arraystretch}{1.0}
\caption{TextVQA, ChartQA, and DocVQA scores (\%).}
\label{tab:qwen_reading_benchmarks}
\begin{tabular}{>{\raggedright\arraybackslash}p{\dimexpr0.13\linewidth-2\tabcolsep\relax}*{12}{>{\centering\arraybackslash}p{\dimexpr0.87\linewidth/12-2\tabcolsep\relax}}}
\toprule
& \multicolumn{6}{c}{Qwen2.5-VL-7B} & \multicolumn{6}{c}{Qwen3-VL-8B} \\
\cmidrule(lr){2-7}\cmidrule(lr){8-13}
& \multicolumn{3}{c}{$T=256$} & \multicolumn{3}{c}{$T=128$}
& \multicolumn{3}{c}{$T=256$} & \multicolumn{3}{c}{$T=128$} \\
\cmidrule(lr){2-4}\cmidrule(lr){5-7}\cmidrule(lr){8-10}\cmidrule(lr){11-13}
Method & Text & Chart & Doc
& Text & Chart & Doc
& Text & Chart & Doc
& Text & Chart & Doc \\
\midrule
Vanilla & 85.06 & 87.28 & 94.86 & 85.06 & 87.28 & 94.86 & 84.28 & 83.04 & 95.72 & 84.28 & 83.04 & 95.72 \\
\midrule
FastV & \underline{78.44} & \underline{68.28} & \underline{38.22} & 46.45 & 33.20 & 14.54 & 62.18 & 28.64 & 14.79 & 32.88 & 16.24 & 9.83 \\
VisPruner & 64.02 & 56.08 & 27.66 & \underline{49.40} & \underline{36.68} & \underline{19.69} & \underline{71.96} & \underline{63.52} & \underline{39.63} & \underline{54.58} & \underline{39.64} & 23.00 \\
ApET & 69.17 & 50.60 & 23.12 & 45.82 & 31.16 & 14.35 & 59.54 & 54.52 & 39.25 & 41.27 & 35.44 & \underline{25.59} \\
\midrule
\rowcolor{micoRose}
\textbf{MiCo} & \textbf{80.96} & \textbf{81.16} & \textbf{57.44} & \textbf{71.74} & \textbf{70.68} & \textbf{38.58} & \textbf{79.69} & \textbf{66.80} & \textbf{64.98} & \textbf{73.44} & \textbf{50.12} & \textbf{46.35} \\
\bottomrule
\end{tabular}
\end{table}

\begin{table}[!htb]
    % The VTC tabular is 253.5pt wide at its natural size (0.64\linewidth), so the
    % table keeps that width on the left and the figure takes the remaining column.
    \begin{minipage}[t]{0.645\linewidth}
    \vspace{0pt}
    % Compact main-text view of the accepted Group A/B CSV snapshots.
% Values are rounded from source Rel, never recomputed from rounded Acc.
\centering
\scriptsize
\setlength{\tabcolsep}{3pt}
\renewcommand{\arraystretch}{1.0}
\caption{VTC-Bench Rel (\%) on Qwen2.5-VL-7B. Entries show Group A (B): downsampling-sensitive (robust) questions under the reference model; $\Delta=|\mathrm{Rel}_{B}-\mathrm{Rel}_{A}|$ is the gap in percentage points.}
\label{tab:vtc_bench_qwen25_main}
\begin{tabular}{l*{5}{c}}
\toprule
\multirow{2}{*}{Method} & \multicolumn{5}{c}{Reference image-area reduction} \\
\cmidrule(lr){2-6}
& 75.00\% & 88.89\% & 93.75\% & 96.00\% & 99.00\% \\
\midrule
Vanilla & 100.0 {\scriptsize (100.0)} & 100.0 {\scriptsize (100.0)} & 100.0 {\scriptsize (100.0)} & 100.0 {\scriptsize (100.0)} & 100.0 {\scriptsize (100.0)} \\
\midrule
\rowcolor{black!4}
\multicolumn{6}{c}{\textbf{Token budget} $T=256$} \\
FastV & \underline{76.1} {\scriptsize (\underline{90.8})} & \underline{80.0} {\scriptsize (91.7)} & \underline{81.0} {\scriptsize (92.4)} & \underline{82.3} {\scriptsize (92.5)} & \underline{85.1} {\scriptsize (93.9)} \\
VisPruner & 74.4 {\scriptsize (89.7)} & 78.3 {\scriptsize (\underline{91.8})} & 79.3 {\scriptsize (\underline{93.2})} & 80.1 {\scriptsize (\underline{93.6})} & 83.4 {\scriptsize (\underline{94.8})} \\
ApET & 71.8 {\scriptsize (87.6)} & 76.0 {\scriptsize (90.0)} & 76.8 {\scriptsize (91.8)} & 77.4 {\scriptsize (92.5)} & 80.7 {\scriptsize (93.4)} \\
\rowcolor{micoRose}
\textbf{MiCo} & \textbf{90.1} {\scriptsize (\textbf{96.1})} & \textbf{91.6} {\scriptsize (\textbf{96.7})} & \textbf{91.9} {\scriptsize (\textbf{97.3})} & \textbf{92.4} {\scriptsize (\textbf{97.3})} & \textbf{93.3} {\scriptsize (\textbf{98.3})} \\
\midrule
\rowcolor{black!4}
\multicolumn{6}{c}{\textbf{Token budget} $T=128$} \\
FastV & 48.9 {\scriptsize (66.8)} & 50.1 {\scriptsize (70.1)} & 48.9 {\scriptsize (73.6)} & 48.9 {\scriptsize (75.0)} & 53.6 {\scriptsize (76.3)} \\
VisPruner & \underline{62.7} {\scriptsize (\underline{82.0})} & \underline{64.9} {\scriptsize (\underline{85.9})} & \underline{66.1} {\scriptsize (\underline{88.7})} & \underline{67.7} {\scriptsize (\underline{89.6})} & \underline{72.3} {\scriptsize (\underline{90.9})} \\
ApET & 56.5 {\scriptsize (77.8)} & 56.8 {\scriptsize (82.1)} & 58.4 {\scriptsize (85.6)} & 60.4 {\scriptsize (87.0)} & 66.2 {\scriptsize (88.2)} \\
\rowcolor{micoRose}
\textbf{MiCo} & \textbf{75.8} {\scriptsize (\textbf{90.7})} & \textbf{79.1} {\scriptsize (\textbf{92.6})} & \textbf{81.5} {\scriptsize (\textbf{93.4})} & \textbf{82.3} {\scriptsize (\textbf{93.7})} & \textbf{84.9} {\scriptsize (\textbf{95.0})} \\
\bottomrule
\end{tabular}

    \end{minipage}\hfill
    \begin{minipage}[t]{0.335\linewidth}
    \vspace{0pt}
    \centering
    \captionsetup{skip=4pt}
    % Rendered by visual-composer/s1-budget-sensitivity/plot_main_pair.py
    % (--metric "Average Acc" --vscan "qwen25=75.2,internvl3=74.3" --stack --compact --width 1.85 --height 1.75 --font 6.4).
    % Figure height and caption length are tuned so this column ends level with the table's bottom rule.
    \includegraphics[width=\linewidth]{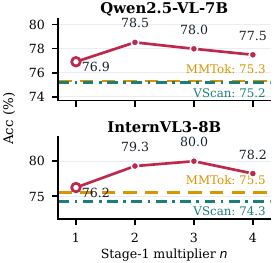}
    \captionof{figure}{Stage-1 budget sensitivity at $T=128$: eight-benchmark Acc for $n=1$--$4$ (hollow: MiCo-S1 reference) vs.\ MMTok and VScan.}
    \label{fig:s1_budget_main}
    \end{minipage}
\end{table}

\subsection{Ablation Study}
\label{sec:ablation_studies}

We ablate the components of MiCo's two-stage design and the generality of the mutual information coverage objective.

\paragraph{Stage-1 budget and objective's effectiveness.}
We change the Stage-1 pool size $nT$ at a fixed layer-average budget of $T=128$ and report the eight-benchmark Acc on Qwen2.5-VL-7B and InternVL3-8B in Figure~\ref{fig:s1_budget_main}. MiCo exceeds MMTok and VScan at every $n$ on both models, proving our gains do not depend on the pool size used in Stage~1. Specifically, at $n=1$ the pool equals the budget and MiCo reduces to its single-stage implementation (MiCo-S1); every $n>1$ improves on it, proving that the LLM-side refinement works, while $n=1$ still beats MMTok and VScan, proving that our objective is promising even in a single stage. Appendix~\ref{sec:appendix_s1_budget} gives details, and Appendix~\ref{sec:appendix_full_results} records MiCo-S1's results on more models.

\paragraph{Two-stage components.}
Figure~\ref{fig:two_stage_components} ablates the components of MiCo's two stages. The complete configuration achieves the highest seven-benchmark mean, Acc$_7=78.4$, and Acc$_7$ generally decreases as more components are disabled, showing that all signals contribute.

\begin{figure}[!htbp]
    \centering
    \captionsetup{skip=4pt}
    \includegraphics[width=0.86\linewidth]{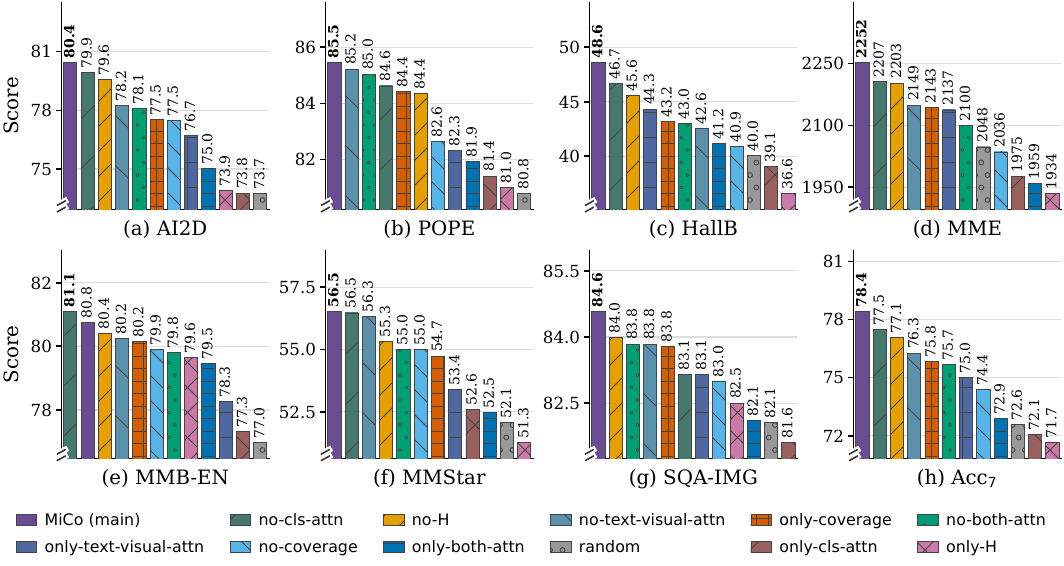}
    \caption{Two-stage component comparison on Qwen2.5-VL-7B at $T=128$.}
    \label{fig:two_stage_components}
\end{figure}

\paragraph{Proxy substitutions.}
We evaluate replacements for the information, coverage, reliability, and task-relevance proxies on Qwen2.5-VL-7B at $T=128$ in Table~\ref{tab:ablation_proxy_substitutions}. The information proxy is replaced with normalized
hidden-state entropy, the coverage and reliability proxies with similarity
measures over visual tokens, and the task-relevance proxy with similarity
measures over text tokens. MiCo achieves the highest score, and all variants except the coverage replacement still outperform
MMTok, supporting the applicability and robustness of our mutual information coverage
objective beyond MiCo's proxy choices. Appendix~\ref{sec:appendix_proxy_substitutions} details the replacements, and Appendix~\ref{sec:appendix_task_relevance} compares further task-relevance variants.

\begin{table}[!htbp]
\centering
% Sources and precision: tables/data/proxy_substitutions_20260925.json.
% MMTok is shown as the external reference; substitution rows are ordered by Acc.
\centering
\footnotesize
\caption{Proxy substitutions on Qwen2.5-VL-7B at $T=128$. The second line in
each cell reports the score difference $\Delta$ from MMTok.}
\label{tab:ablation_proxy_substitutions}
\micoTableFit{%
\begin{tabular}{l*{10}{c}}
\toprule
Replaced proxy & AI2D & POPE & HallB & MME & $\mathrm{MMB}_{\mathrm{EN}}$ & $\mathrm{MMB}_{\mathrm{CN}}$ & MMStar & SQA & Acc & Rel \\
\midrule
MMTok (ICLR'26) & 76.3 & 84.2 & 43.3 & 2120.7 & 79.8 & 77.5 & 53.7 & 81.4 & 75.3 & 90.8\% \\
\midrule
Replace Coverage & \shortstack{70.14\\[-1pt]\scriptsize{\textcolor{black!60}{$\Delta\,-6.16$}}} & \shortstack{78.10\\[-1pt]\scriptsize{\textcolor{black!60}{$\Delta\,-6.10$}}} & \shortstack{35.80\\[-1pt]\scriptsize{\textcolor{black!60}{$\Delta\,-7.50$}}} & \shortstack{1688.69\\[-1pt]\scriptsize{\textcolor{black!60}{$\Delta\,-432.01$}}} & \shortstack{75.69\\[-1pt]\scriptsize{\textcolor{black!60}{$\Delta\,-4.11$}}} & \shortstack{73.71\\[-1pt]\scriptsize{\textcolor{black!60}{$\Delta\,-3.79$}}} & \shortstack{47.20\\[-1pt]\scriptsize{\textcolor{black!60}{$\Delta\,-6.50$}}} & \shortstack{81.31\\[-1pt]\scriptsize{\textcolor{black!60}{$\Delta\,-0.09$}}} & \shortstack{68.30\\[-1pt]\scriptsize{\textcolor{black!60}{$\Delta\,-7.00$}}} & \shortstack{82.34\%\\[-1pt]\scriptsize{\textcolor{black!60}{$\Delta\,-8.46$}}} \\
Replace $H$ & \shortstack{78.47\\[-1pt]\scriptsize{\textcolor{black!60}{$\Delta\,+2.17$}}} & \shortstack{84.49\\[-1pt]\scriptsize{\textcolor{black!60}{$\Delta\,+0.29$}}} & \shortstack{45.43\\[-1pt]\scriptsize{\textcolor{black!60}{$\Delta\,+2.13$}}} & \shortstack{2152.60\\[-1pt]\scriptsize{\textcolor{black!60}{$\Delta\,+31.90$}}} & \shortstack{\underline{80.84}\\[-1pt]\scriptsize{\textcolor{black!60}{$\Delta\,+1.04$}}} & \shortstack{78.26\\[-1pt]\scriptsize{\textcolor{black!60}{$\Delta\,+0.76$}}} & \shortstack{52.60\\[-1pt]\scriptsize{\textcolor{black!60}{$\Delta\,-1.10$}}} & \shortstack{83.04\\[-1pt]\scriptsize{\textcolor{black!60}{$\Delta\,+1.64$}}} & \shortstack{76.35\\[-1pt]\scriptsize{\textcolor{black!60}{$\Delta\,+1.05$}}} & \shortstack{92.04\%\\[-1pt]\scriptsize{\textcolor{black!60}{$\Delta\,+1.24$}}} \\
Replace Relevance & \shortstack{79.08\\[-1pt]\scriptsize{\textcolor{black!60}{$\Delta\,+2.78$}}} & \shortstack{\textbf{85.54}\\[-1pt]\scriptsize{\textcolor{black!60}{$\Delta\,+1.34$}}} & \shortstack{45.88\\[-1pt]\scriptsize{\textcolor{black!60}{$\Delta\,+2.58$}}} & \shortstack{2142.01\\[-1pt]\scriptsize{\textcolor{black!60}{$\Delta\,+21.31$}}} & \shortstack{79.73\\[-1pt]\scriptsize{\textcolor{black!60}{$\Delta\,-0.07$}}} & \shortstack{\underline{78.95}\\[-1pt]\scriptsize{\textcolor{black!60}{$\Delta\,+1.45$}}} & \shortstack{56.47\\[-1pt]\scriptsize{\textcolor{black!60}{$\Delta\,+2.77$}}} & \shortstack{\underline{84.28}\\[-1pt]\scriptsize{\textcolor{black!60}{$\Delta\,+2.88$}}} & \shortstack{77.13\\[-1pt]\scriptsize{\textcolor{black!60}{$\Delta\,+1.83$}}} & \shortstack{92.98\%\\[-1pt]\scriptsize{\textcolor{black!60}{$\Delta\,+2.18$}}} \\
Replace Reliability & \shortstack{\underline{79.83}\\[-1pt]\scriptsize{\textcolor{black!60}{$\Delta\,+3.53$}}} & \shortstack{84.59\\[-1pt]\scriptsize{\textcolor{black!60}{$\Delta\,+0.39$}}} & \shortstack{\underline{46.13}\\[-1pt]\scriptsize{\textcolor{black!60}{$\Delta\,+2.83$}}} & \shortstack{\underline{2207.75}\\[-1pt]\scriptsize{\textcolor{black!60}{$\Delta\,+87.05$}}} & \shortstack{\textbf{81.10}\\[-1pt]\scriptsize{\textcolor{black!60}{$\Delta\,+1.30$}}} & \shortstack{\underline{78.95}\\[-1pt]\scriptsize{\textcolor{black!60}{$\Delta\,+1.45$}}} & \shortstack{\textbf{56.73}\\[-1pt]\scriptsize{\textcolor{black!60}{$\Delta\,+3.03$}}} & \shortstack{83.34\\[-1pt]\scriptsize{\textcolor{black!60}{$\Delta\,+1.94$}}} & \shortstack{\underline{77.63}\\[-1pt]\scriptsize{\textcolor{black!60}{$\Delta\,+2.33$}}} & \shortstack{\underline{93.59\%}\\[-1pt]\scriptsize{\textcolor{black!60}{$\Delta\,+2.79$}}} \\
\rowcolor{micoRose}
\textbf{MiCo} & \shortstack{\textbf{80.44}\\[-1pt]\scriptsize{\textcolor{black!60}{$\Delta\,+4.14$}}} & \shortstack{\underline{85.46}\\[-1pt]\scriptsize{\textcolor{black!60}{$\Delta\,+1.26$}}} & \shortstack{\textbf{48.60}\\[-1pt]\scriptsize{\textcolor{black!60}{$\Delta\,+5.30$}}} & \shortstack{\textbf{2252.37}\\[-1pt]\scriptsize{\textcolor{black!60}{$\Delta\,+131.67$}}} & \shortstack{80.76\\[-1pt]\scriptsize{\textcolor{black!60}{$\Delta\,+0.96$}}} & \shortstack{\textbf{79.12}\\[-1pt]\scriptsize{\textcolor{black!60}{$\Delta\,+1.62$}}} & \shortstack{\underline{56.53}\\[-1pt]\scriptsize{\textcolor{black!60}{$\Delta\,+2.83$}}} & \shortstack{\textbf{84.58}\\[-1pt]\scriptsize{\textcolor{black!60}{$\Delta\,+3.18$}}} & \shortstack{\textbf{78.51}\\[-1pt]\scriptsize{\textcolor{black!60}{$\Delta\,+3.21$}}} & \shortstack{\textbf{94.65\%}\\[-1pt]\scriptsize{\textcolor{black!60}{$\Delta\,+3.85$}}} \\
\bottomrule
\end{tabular}%
}

\end{table}
\FloatBarrier

\section{Conclusion}

We derive a general visual token pruning objective from task log-loss through a semantic erasure model and instantiate its factors with observable proxies, yielding a monotone submodular coverage objective that MiCo optimizes greedily in two stages. Across diverse MLLMs and image/video benchmarks, MiCo retains strong performance while reducing inference cost. We hope the objective encourages better proxies and further improves the performance--speed trade-off.

\vspace{-6pt}
\subsection*{AI use statement}
\vspace{-3pt}

We used large language models to aid and polish the writing, for retrieval and discovery of related work, and for research ideation and execution. The authors reviewed and verified all AI-assisted content and take full responsibility for the final content of this work.

\vspace{-6pt}
\subsection*{Reproducibility statement}
\vspace{-3pt}

MiCo is training-free and all experiments use publicly available pretrained models and benchmarks. Implementation details, per-model configurations, evaluation protocols, and proofs are given in the appendix. We will open-source the code and configuration files so that all reported results can be reproduced.

\vspace{-6pt}
\subsection*{Ethics statement}
\vspace{-3pt}

This work uses only publicly available pretrained models and standard public benchmarks, involves no human subjects or newly collected data, and introduces no capabilities beyond those of the underlying models. We do not foresee ethical concerns beyond those already associated with the base models.

\bibliography{main,custom}
\bibliographystyle{iclr2027_conference}

\clearpage
\appendix
% Appendix order: experimental setup, theory, implementation, model-specific implementation,
% efficiency, ablations, layer selection, complete results, supplementary benchmarks,
% limitations.
\section{Benchmark Definitions and Evaluation Protocol}
\label{sec:appendix_benchmarks}

This section defines the benchmark inventory, task coverage, and scoring conventions used in the
experiments. The two main image suites overlap: the LLaVA tables use ten benchmark tracks, while
the Qwen-VL and InternVL tables use eight. Their union is twelve core image tracks when the
English and Chinese MMBench evaluations are counted separately. We additionally report eight
supplementary image tracks (RefCOCO, RefCOCO+, RefCOCOg, gRefCOCO, ChartQA, DocVQA,
OCRBench, and VTC-Bench) and four video benchmarks (VideoMME, MVBench, LongVideoBench, and MLVU).
In total, this appendix covers twenty image tracks (twelve core and eight supplementary, counting
VTC-Bench as one track) and four video benchmarks; supplementary tracks never enter a main-table Acc.

\subsection{Core Image Benchmarks}
\label{sec:appendix_core_image_benchmarks}

The LLaVA tables use GQA, SQA-IMG, TextVQA, POPE, MME, MMBench-EN, MMBench-CN, SEED, AI2D, and
MMMU; the Qwen-VL and InternVL tables use AI2D, POPE, HallusionBench, MME, MMBench-EN, MMBench-CN,
MMStar, and SQA-IMG. A benchmark enters a model's Acc only if it appears in that model's main table.

\paragraph{General visual understanding and reasoning.}
GQA~\cite{hudson2019gqa} evaluates compositional visual reasoning over objects, attributes, and
relations grounded in scene graphs. We use the balanced evaluation split and report exact-match
accuracy; the standard split contains 12,578 questions. ScienceQA-IMG~\cite{lu2022learn} is the
image-grounded subset of the multimodal science multiple-choice benchmark. It requires selecting
an answer after combining the question, explanatory context, and a diagram or image; we report
multiple-choice accuracy on the released image subset. AI2D~\cite{kembhavi2016diagram} evaluates
question answering over scientific diagrams and their associated text. Its released collection
contains more than 5,000 diagrams and 15,000 questions; we report answer accuracy on the test split (3,088 questions). MMMU~\cite{yue2024mmmu}
contains 11.5K college-level multimodal questions spanning six broad disciplines and 30 subjects;
we use the recorded single-image evaluation protocol on the validation split and report multiple-choice accuracy. MMStar~\cite{chen2024mmstar}
is a 1,500-question diagnostic suite covering coarse and fine-grained perception, reasoning, and
knowledge-intensive visual tasks; we report its official accuracy.

\paragraph{Text and broad multimodal understanding.}
TextVQA~\cite{singh2019towards} tests reading and reasoning over scene text; the validation split
contains 5,000 questions and is scored with the VQA consensus metric. MMBench~\cite{liu2024mmbench}
evaluates 20 ability dimensions, including recognition, localization, OCR, and reasoning. MMBench-EN
and MMBench-CN are treated as two separate tracks, evaluated with the released language-specific
scorers and Circular accuracy. SEED-Bench~\cite{li2024seedbench} provides objective multiple-choice
evaluation across multimodal capability dimensions; we use its image-only evaluation subset and
report the official accuracy.

\paragraph{Hallucination diagnostics.}
POPE~\cite{li2023evaluating} probes object hallucination with binary object-presence questions over
its random, popular, and adversarial categories (3,000 released questions each). The reported
metric follows each family's official evaluator: the LLaVA tables report the average F1 score over
the three categories, as computed by the LLaVA-1.5 evaluation scripts, whereas the Qwen-VL and
InternVL tables report the overall accuracy returned by VLMEvalKit, whose runner expands the
category outputs to 9,000 scored entries. HallusionBench~\cite{guan2024hallusionbench} is a
diagnostic suite for language hallucination and visual illusion; we report VLMEvalKit's overall
average, i.e., the mean of its all-question accuracy (aAcc), figure-level accuracy (fAcc), and
question-pair accuracy (qAcc).

\paragraph{Perception and cognition.}
MME~\cite{fu2023mme} covers perception and cognition in 14 subtasks with 2,374 questions over
1,187 images (two questions per image). The MME column keeps the benchmark's native summed score,
whose scale differs by family: the LLaVA tables report the Perception score (maximum 2,000),
following the LLaVA evaluation scripts, whereas the Qwen-VL and InternVL tables report the
Perception+Cognition total (maximum 2,800) returned by VLMEvalKit. In both cases the summed score is
divided by 20 only when forming Acc, so MME enters the LLaVA aggregates on a 0--100 scale and the
Qwen-VL/InternVL aggregates on a 0--140 scale.

\subsection{Supplementary Image Benchmarks}
\label{sec:appendix_supplementary_image_benchmarks}

\paragraph{Text, chart, and document understanding.}
ChartQA~\cite{masry2022chartqa} requires visual and logical reasoning over charts. We use the test
split and its relaxed accuracy metric (2,500 questions in the evaluation record). DocVQA~\cite{mathew2021docvqa}
evaluates question answering over document images; we use the validation split and report ANLS over
5,349 questions. OCRBench~\cite{liu2023ocrbench} aggregates 29 OCR-oriented datasets covering text
recognition, scene-text VQA, document VQA, key-information extraction, and handwritten mathematical
expression recognition. We report its released aggregate score. The 100 OCRBench samples used to select the
refinement layer (Appendix~\ref{sec:appendix_image_calibration}) never enter a main-table Acc.

\paragraph{Referring expression comprehension.}
RefCOCO, RefCOCO+, and RefCOCOg evaluate localization of the region described by a natural-language
expression. RefCOCO and RefCOCO+ distinguish person-centric testA and object-centric testB, while
RefCOCOg uses longer, more descriptive expressions; we report the standard REC score on the released
evaluation split for each dataset. The parenthesized values in Table~\ref{tab:ablation_grounding}
normalize each score by the matching Vanilla result. gRefCOCO extends this setting to expressions
that refer to one object, multiple objects, or no target object~\cite{he2023grec}. We use the
released val, testA, and testB splits and report $\mathrm{Pr}@(F_1=1,\,\mathrm{GIoU}\ge0.5)$,
computed with the released GREC evaluation code; our protocol applies a generalized-IoU box-matching
threshold in place of the $\mathrm{IoU}\ge0.5$ threshold of the original metric definition.

\paragraph{Visual information preservation.}
VTC-Bench~\cite{liao2026vtcbench} measures whether a token-compressed model preserves visual
information that is sensitive to image downsampling. Its fixed Qwen2-VL reference partitions each
question into Group~A (correct at full resolution but incorrect after downsampling) or Group~B
(correct in both conditions) for reference factors $d\in\{2,3,4,5,10\}$. We reuse these groups
without modification, run Qwen2.5-VL-7B on the original images at $T=256$ and $T=128$, and
report the official task scores, the group-wise mean, and the Group~B--Group~A gap. The reference
image-area reductions (75.00\%, 88.89\%, 93.75\%, 96.00\%, and 99.00\%) define the groups; they are
not the pruning ratios of our target model.

\subsection{Video Benchmarks}
\label{sec:appendix_video_benchmarks}

All video results use the same LLaVA-Video-7B input and decoding protocol described in
Appendix~\ref{sec:appendix_budget} (video input protocol). VideoMME~\cite{fu2025video} evaluates short, medium, and long videos
across multiple visual domains; we use the no-subtitle setting and report the official aggregate
percentage. MVBench~\cite{li2024mvbench} contains 20 multiple-choice tasks spanning spatial and
temporal perception, action, object interaction, and episodic reasoning; we report the mean over its
official task scores. LongVideoBench~\cite{wu2024longvideobench} tests long-context video-language
understanding with interleaved video and text, including videos up to one hour; its official
aggregate is returned as a fraction and is converted to a percentage for the tables. MLVU~\cite{zhou2025mlvu} is a multi-task long-video suite covering diverse video genres and durations;
we report its official macro-average over the seven task categories rather than pooling all
questions. The four video scores are averaged with equal weight to form the video Acc.

\subsection{Aggregation and Decoding Conventions}
\label{sec:appendix_benchmark_protocol}

Unless a benchmark specifies a task-native metric above, scores are percentages and higher is
better. For an image model, Acc is the unweighted mean of the benchmark tracks listed in its main
table. MME contributes $\mathrm{MME}/20$ to this mean, while its native summed score remains
visible in the MME column. Rel is the aggregate Acc divided by the corresponding Vanilla Acc and
reported as a percentage. English and Chinese MMBench are separate terms in the mean; supplementary
benchmarks and calibration examples never enter the main-table Acc unless explicitly stated in a
table caption.

We use batch size one and deterministic greedy decoding without sampling. Each model family keeps
its native processor, prompt format, image geometry, and official scorer. Empty answers and
generation-cap truncations remain in the relevant denominator. For video, we follow each benchmark's
standard scoring protocol and then average the four resulting scores to obtain video Acc.

\FloatBarrier

\section{Information-Theoretic Derivations}
\label{sec:appendix_theory}

This appendix supplies the standard identities used in Section~\ref{sec:method} and their application to our recovery model, followed by the optimization properties of MiCo's coverage proxy. The log-loss and erasure-channel identities are established results. MiCo applies them to the random semantic-query model defined in the main text.

\paragraph{Assumptions and scope.}
Throughout this appendix, $Y$ and the semantic variables $U_v$ are discrete, all relevant conditional entropies and losses are finite, and $q,g$ are fixed. For each fixed retained set and recovery channel, the query index $J$ is sampled independently of the source and recovered variables. The answer-side model assumes $Y\perp\widehat{\mathbf U}_V(S)\mid\mathbf U_V,q,g$. In the erasure model, the coverage and reliability gates are conditionally independent of each other and of $U_v$; reliability is held fixed with respect to $S$. The greedy result additionally fixes the proxy weights and similarities within one selector call.

\subsection{Log-Loss, Bayes Risk, and Recoverable Information}
\label{sec:appendix_logloss}

Let $\pi_{\mathrm{data}}$ be the underlying data-generating joint distribution, including the answer $Y$, visual information $Z$, prompt $Q$, and geometric context $G$. Fix $Q=q$ and $G=g$, and use $\mid q,g$ as shorthand for conditioning on these values. A predictor supplies $\pi_{\mathrm{pred}}(\cdot\mid Z,q,g)$; the true conditional distribution $\pi_{\mathrm{data}}(\cdot\mid Z,q,g)$ is the analytical reference. All expectations and entropies below are under $\pi_{\mathrm{data}}$, and the relevant losses and entropies are assumed finite.

For a discrete answer sequence $Y=(y_1,\ldots,y_M)$, including its end-of-sequence token, the negative log-likelihood is
\begin{equation}
\begin{aligned}
\ell(\pi_{\mathrm{pred}};Y,Z,q,g)
&=-\log\pi_{\mathrm{pred}}(Y\mid Z,q,g)\\
&=-\sum_{t=1}^{M}\log\pi_{\mathrm{pred}}(y_t\mid y_{<t},Z,q,g).
\end{aligned}
\label{eq:task_nll}
\end{equation}
Adding and subtracting the log of the true conditional distribution gives the standard entropy--divergence decomposition for logarithmic loss~\citep[Example~3]{gneiting2007strictly}:
\begin{equation}
\begin{aligned}
\mathcal L(\pi_{\mathrm{pred}};Z,q,g)
&=\mathbb E_{\pi_{\mathrm{data}}}[-\log\pi_{\mathrm{pred}}(Y\mid Z,q,g)\mid q,g]\\
&=\mathbb E_{\pi_{\mathrm{data}}}[-\log\pi_{\mathrm{data}}(Y\mid Z,q,g)\mid q,g]\\
&\quad+\mathbb E_{\pi_{\mathrm{data}}}\!\left[\log\frac{\pi_{\mathrm{data}}(Y\mid Z,q,g)}{\pi_{\mathrm{pred}}(Y\mid Z,q,g)}\,\middle|\,q,g\right]\\
&=H_{\pi_{\mathrm{data}}}(Y\mid Z,q,g)\\
&\quad+\mathbb E_{Z\mid q,g}D_{\mathrm{KL}}\!\left(
\pi_{\mathrm{data}}(\cdot\mid Z,q,g)\,\Vert\,\pi_{\mathrm{pred}}(\cdot\mid Z,q,g)
\right).
\end{aligned}
\label{eq:task_loss_decomposition}
\end{equation}
The first term is the uncertainty remaining given the available information. The second is the predictor's discrepancy from the true conditional distribution. It is nonnegative and vanishes at $\pi_{\mathrm{pred}}=\pi_{\mathrm{data}}(\cdot\mid Z,q,g)$. Taking the infimum over all conditional predictive distributions yields the identity stated in Section~\ref{sec:method}:
\begin{equation}
R_Z^*(q,g):=\inf_{\pi_{\mathrm{pred}}}\mathcal L(\pi_{\mathrm{pred}};Z,q,g)=H(Y\mid Z,q,g).
\label{eq:bayes_risk_identity}
\end{equation}
This infimum is an analytical reference, not an assumption that a fixed pretrained predictor attains it.

Substituting the complete visual source $\mathbf U_V$ and its reconstruction $\widehat{\mathbf U}_V(S)$ gives
\begin{equation}
\begin{aligned}
R_{\mathrm{full}}^*(q,g)&=H(Y\mid\mathbf U_V,q,g),\\
R_S^*(q,g)&=H(Y\mid\widehat{\mathbf U}_V(S),q,g).
\end{aligned}
\label{eq:bayes_risks}
\end{equation}
Assume $Y\perp\widehat{\mathbf U}_V(S)\mid\mathbf U_V,q,g$. Applying the conditional-information identity for the benefit of side information under log-loss~\citep[Corollary~1]{jiao2015justification},
\begin{equation}
\begin{aligned}
\Delta R^*(S;q,g)
&=H(Y\mid\widehat{\mathbf U}_V(S),q,g)-H(Y\mid\mathbf U_V,q,g)\\
&=H(Y\mid\widehat{\mathbf U}_V(S),q,g)
-H(Y\mid\mathbf U_V,\widehat{\mathbf U}_V(S),q,g)\\
&=I\!\left(Y;\mathbf U_V\mid\widehat{\mathbf U}_V(S),q,g\right)\\
&\le H\!\left(\mathbf U_V\mid\widehat{\mathbf U}_V(S),q,g\right)\\
&=H(\mathbf U_V\mid q,g)
-I\!\left(\mathbf U_V;\widehat{\mathbf U}_V(S)\mid q,g\right).
\end{aligned}
\label{eq:loss_bound_derivation}
\end{equation}
The second equality uses the conditional-independence assumption. The inequality follows because the semantic variables are discrete and conditional mutual information is at most the conditional entropy of either variable~\citep[Theorem~3.4]{polyanskiy2025information}. Since $H(\mathbf U_V\mid q,g)$ does not depend on $S$, maximizing the full-source recoverable mutual information minimizes this upper bound.

The same conditional mutual information is the expected KL divergence between the Bayes-optimal full and reconstructed-source answer distributions:
\begin{equation}
\begin{aligned}
\Delta R^*(S;q,g)
&=\mathbb E_{\mathbf U_V,\widehat{\mathbf U}_V(S)\mid q,g}\\
&\quad
D_{\mathrm{KL}}\!\left(
\pi_{\mathrm{data}}(\cdot\mid\mathbf U_V,q,g)
\,\Vert\,\pi_{\mathrm{data}}(\cdot\mid\widehat{\mathbf U}_V(S),q,g)
\right).
\end{aligned}
\label{eq:bayes_output_kl}
\end{equation}
This identity concerns the true conditional distributions, not the outputs of an arbitrary fixed MLLM.

\paragraph{Task-weighted recovery and the answer-risk bound.}
The main text defines a semantic query by drawing $J$ with $\Pr(J=v\mid q,g)=\Pr(J=v\mid Q=q)$, independently of $(\mathbf U_V,\widehat{\mathbf U}_V(S))$ given $q,g$. The retained set and recovery channel are fixed before $J$ is drawn. Expanding the queried mutual information defined in \eqref{eq:weighted_coverage} gives
\begin{equation}
\begin{aligned}
\mathcal I_{\mathrm{task}}(S;q,g)
&=I\!\left(U_J;\widehat U_J(S)\mid J,q,g\right)\\
&=\sum_{v\in V}\Pr(J=v\mid q,g)
I\!\left(U_v;\widehat U_v(S)\mid J=v,q,g\right)\\
&=\sum_{v\in V}\Pr(J=v\mid Q=q)
I\!\left(U_v;\widehat U_v(S)\mid q,g\right).
\end{aligned}
\label{eq:task_information_expansion}
\end{equation}
The second equality expands the conditional mutual information over the values of $J$. The third equality uses $\Pr(J=v\mid q,g)=\Pr(J=v\mid Q=q)$ together with the conditional independence of $J$ from $(\mathbf U_V,\widehat{\mathbf U}_V(S))$ given $q,g$, which lets the conditioning on $J=v$ be dropped. Thus the relevance weights arise from averaging over the queried source.

Both $U_v$ and $\widehat U_v(S)$ are deterministic coordinate projections of their corresponding full vectors. Applying the data-processing inequality~\citep[Theorem~3.7]{polyanskiy2025information} to each projection gives
\begin{equation}
\begin{aligned}
I\!\left(U_v;\widehat U_v(S)\mid q,g\right)
&\le I\!\left(\mathbf U_V;\widehat U_v(S)\mid q,g\right)\\
&\le I\!\left(\mathbf U_V;\widehat{\mathbf U}_V(S)\mid q,g\right).
\end{aligned}
\label{eq:coordinate_information_bound}
\end{equation}
The query probabilities are nonnegative and sum to one, so
\begin{equation}
\begin{aligned}
\mathcal I_{\mathrm{task}}(S;q,g)
&\le\sum_{v\in V}\Pr(J=v\mid Q=q)
I\!\left(\mathbf U_V;\widehat{\mathbf U}_V(S)\mid q,g\right)\\
&=I\!\left(\mathbf U_V;\widehat{\mathbf U}_V(S)\mid q,g\right).
\end{aligned}
\label{eq:task_information_bound_derivation}
\end{equation}
This bounds the queried mutual information by whole-source recovered information without requiring independence among the source variables $U_v$. Combining it with \eqref{eq:loss_bound_derivation} yields
\begin{equation}
\begin{aligned}
\Delta R^*(S;q,g)
&\le H(\mathbf U_V\mid q,g)
-I\!\left(\mathbf U_V;\widehat{\mathbf U}_V(S)\mid q,g\right)\\
&\le H(\mathbf U_V\mid q,g)-\mathcal I_{\mathrm{task}}(S;q,g),
\end{aligned}
\label{eq:task_risk_bound_derivation}
\end{equation}
which establishes \eqref{eq:task_information_risk_bound}. With the query distribution fixed, the entropy term is independent of $S$, so maximizing $\mathcal I_{\mathrm{task}}$ minimizes this looser upper bound. The query distribution is a modeling choice; a retained set that minimizes this bound need not minimize the answer risk itself.

\paragraph{Log-loss interpretation of the semantic query.}

A predictor given $J,q,g$ has prior Bayes risk $R_{\mathrm{query,prior}}^*(q,g)=H(U_J\mid J,q,g)$. Observing the recovered unit reduces its risk to
\begin{equation}
\begin{aligned}
R_{\mathrm{query,recovered}}^*(S;q,g)
&=H(U_J\mid\widehat U_J(S),J,q,g)\\
&=\sum_{v\in V}\Pr(J=v\mid Q=q)H(U_v\mid q,g)
-\mathcal I_{\mathrm{task}}(S;q,g).
\end{aligned}
\label{eq:query_loss_information}
\end{equation}
The first term is the prior Bayes risk $R_{\mathrm{query,prior}}^*(q,g)$ and is independent of $S$. Maximizing $\mathcal I_{\mathrm{task}}$ therefore minimizes the Bayes log-loss of the modeled semantic query. Its prediction target is $U_J$, rather than the answer $Y$. For the answer task, \eqref{eq:task_information_bound_derivation} and \eqref{eq:task_information_risk_bound} instead establish that this criterion minimizes a looser upper bound on excess Bayes risk, not the answer loss of a fixed pretrained MLLM.

\subsection{Semantic Erasure with Coverage and Reliability Gates}
\label{sec:appendix_erasure}

Fix the prompt $Q=q$, geometric context $G=g$, and retained set $S$. The coverage probability is $M_v(S;q,g)=\Pr(E_v=1\mid q,g)$, and the representation reliability is $\Pr(C_v=1\mid q,g)$. For a fixed recovery channel, we treat this reliability probability as invariant to $S$. As stated in Section~\ref{sec:method}, $C_v$ and $E_v$ are conditionally independent, and the pair $(C_v,E_v)$ is independent of $U_v$ given $q,g$. These are assumptions of the semantic erasure model.

The event $C_vE_v=1$ is exactly the event that both binary gates equal one. Conditional independence therefore gives
\begin{equation}
\begin{aligned}
\Pr(C_vE_v=1\mid q,g)
&=\Pr(C_v=1,E_v=1\mid q,g)\\
&=\Pr(C_v=1\mid q,g)\Pr(E_v=1\mid q,g)\\
&=\Pr(C_v=1\mid q,g)M_v(S;q,g),\\
\Pr(C_vE_v=0\mid q,g)
&=1-\Pr(C_v=1\mid q,g)M_v(S;q,g).
\end{aligned}
\label{eq:joint_success_probability}
\end{equation}
Let $R_v=C_vE_v$. The erasure symbol $\bot$ lies outside the alphabet of $U_v$, so observing $\widehat U_v(S)$ also identifies $R_v$. If $R_v=1$, the output reveals $U_v$ exactly. If $R_v=0$, the output is $\bot$ and the independence assumption leaves uncertainty $H(U_v\mid q,g)$. Thus
\begin{equation}
\begin{aligned}
H\!\left(U_v\mid\widehat U_v(S),q,g\right)
&=\Pr(R_v=1\mid q,g)\cdot 0\\
&\quad+\Pr(R_v=0\mid q,g)H(U_v\mid q,g)\\
&=\left[1-\Pr(C_v=1\mid q,g)M_v(S;q,g)\right]H(U_v\mid q,g).
\end{aligned}
\label{eq:erasure_entropy}
\end{equation}
Subtracting this residual entropy yields
\begin{equation}
\begin{aligned}
I\!\left(U_v;\widehat U_v(S)\mid q,g\right)
&=H(U_v\mid q,g)-H\!\left(U_v\mid\widehat U_v(S),q,g\right)\\
&=H(U_v\mid q,g)\Pr(C_v=1\mid q,g)M_v(S;q,g),
\end{aligned}
\label{eq:erasure_information_derivation}
\end{equation}
which is the standard erasure-channel mutual-information identity~\citep[Example~33.6]{polyanskiy2025information} with success probability $\Pr(C_v=1\mid q,g)M_v(S;q,g)$. Substituting it into the random-query expansion in \eqref{eq:task_information_expansion} gives \eqref{eq:weighted_coverage}. The argument requires no independence among the source variables $U_v$. The coverage probability remains abstract in this derivation; Section~\ref{sec:recover_variants} specifies MiCo's coverage model and observable proxies.

\subsection{Coverage Properties and Greedy Approximation}
\label{sec:appendix_coverage_guarantee}

The following result applies the standard weighted facility-location and greedy-maximization arguments~\citep{krause2014submodular,nemhauser1978analysis} to MiCo's observable objective. Fix one selector call on a finite token set $V$ with features $\{x_v\}_{v\in V}$. Its source weights $\widehat W_v:=\widehat{\mathcal H}_v\widehat{\mathcal R}_v\widehat{\mathcal P}_v\ge0$ (Eq.~\ref{eq:mico_stage_proxies}) and similarities $\kappa_X(v,i)=[\cos(x_v,x_i)]_+\in[0,1]$ remain fixed as the retained set grows. The objective is
\begin{equation}
\widehat{\mathcal I}_{\mathrm{task}}(S)
=\sum_{v\in V}\widehat W_v\widehat M_v(S),
\qquad
\widehat M_v(S)=\max_{s\in S}\kappa_X(v,s),
\label{eq:proxy_coverage_objective}
\end{equation}
with $\widehat M_v(\varnothing)=0$.

\begin{proposition}[Coverage properties and greedy guarantee]
\label{prop:coverage_greedy}
The objective in \eqref{eq:proxy_coverage_objective} is normalized, nonnegative, monotone, and submodular. Let $b=\min(B,|V|)$. If $b=0$, the empty set is optimal. For $b\ge1$, let $S^\star$ maximize this objective over $S\subseteq V$ with $|S|\le b$, and let $S_b$ be the set returned after $b$ exact greedy additions starting from the empty set. Then
\begin{equation}
\begin{aligned}
\widehat{\mathcal I}_{\mathrm{task}}(S_b)
&\ge\left[1-\left(1-\frac1b\right)^b\right]
\widehat{\mathcal I}_{\mathrm{task}}(S^\star)\\
&\ge(1-1/e)\widehat{\mathcal I}_{\mathrm{task}}(S^\star).
\end{aligned}
\label{eq:coverage_greedy_ratio}
\end{equation}
\end{proposition}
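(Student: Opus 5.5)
The plan is to treat \eqref{eq:proxy_coverage_objective} as a weighted facility-location function. We first establish the four structural properties coordinatewise, then run the standard Nemhauser--Wolsey--Fisher argument for the greedy bound.

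\emph{Normalization and nonnegativity.} Normalization is immediate from the convention $\widehat M_v(\varnothing)=0$. Nonnegativity follows because $\widehat W_v\ge0$ and $\kappa_X\in[0,1]$.

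\emph{Monotonicity and submodularity.} For each fixed $v$, the map $S\mapsto\widehat M_v(S)=\max_{s\in S}\kappa_X(v,s)$ (with value $0$ on $\varnothing$) is monotone, since a maximum over a larger set is no smaller. Because $\kappa_X\ge0$, the empty-set convention agrees with the max including a virtual zero element.

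For submodularity we compute the marginal gain $\widehat M_v(S\cup\{i\})-\widehat M_v(S)=[\kappa_X(v,i)-\widehat M_v(S)]_+$. If $A\subseteq B$, then $\widehat M_v(A)\le\widehat M_v(B)$. Since $t\mapsto[c-t]_+$ is nonincreasing, the gain at $A$ dominates the gain at $B$ for every $i\notin B$. A nonnegative weighted sum preserves monotonicity and diminishing returns, so the objective inherits all four properties.

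\emph{Greedy guarantee.} The case $b=0$ is trivial, since the only feasible set is $\varnothing$. For $b\ge1$, let $S_k$ be the greedy set after $k$ additions and let $f=\widehat{\mathcal I}_{\mathrm{task}}$. Fix an optimal set $S^\star$ with $|S^\star|\le b$. The argument has three steps.
\begin{enumerate}[leftmargin=*,nosep]
\item By monotonicity, $f(S^\star)\le f(S^\star\cup S_k)$.
\item Telescoping over the elements of $S^\star\setminus S_k$ and applying submodularity gives $f(S^\star\cup S_k)\le f(S_k)+\sum_{i\in S^\star\setminus S_k}\Delta f(i\mid S_k)$.
\item Each term in that sum is at most $f(S_{k+1})-f(S_k)$ by the greedy choice. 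This holds as long as $V\setminus S_k$ is nonempty, which is true since $k<b\le|V|$.
\end{enumerate}
Combining these steps, $f(S^\star)-f(S_{k+1})\le(1-1/b)\,[f(S^\star)-f(S_k)]$. Iterating from $f(S_0)=0$ yields $f(S_b)\ge[1-(1-1/b)^b]f(S^\star)$. The final inequality then follows from $(1-1/b)^b\le e^{-1}$, which comes from $\log(1-x)\le -x$.

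\emph{Expected obstacle.} The only delicate point is bookkeeping. The greedy procedure must make exactly $b$ additions, which requires $|V|\ge b$, and ties must be allowed. The weights and similarities must stay fixed across the selector call, as stipulated in the assumptions. With those in place, each step is standard and can be cited to \citet{nemhauser1978analysis,krause2014submodular}.
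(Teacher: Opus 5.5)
Your proposal is correct and follows essentially the same route as the paper. Both establish monotonicity and diminishing returns coordinatewise through the marginal gain $\sum_{v}\widehat W_v[\kappa_X(v,i)-\widehat M_v(S)]_+$, then apply the standard Nemhauser--Wolsey--Fisher recursion $D_{t+1}\le(1-1/b)D_t$ together with $(1-1/b)^b\le e^{-1}$; your remarks that the empty-set convention matches a maximum with a virtual zero element (since $\kappa_X\ge0$) and that $k<b\le|V|$ keeps each greedy step feasible are bookkeeping the paper leaves implicit.
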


\begin{proof}
Normalization follows from $\widehat M_v(\varnothing)=0$. Nonnegative weights and $0\le\widehat M_v(S)\le1$ give
\[
0\le\widehat{\mathcal I}_{\mathrm{task}}(S)\le\sum_{v\in V}\widehat W_v.
\]
For $A\subseteq D\subseteq V$, taking a maximum over the larger set gives $\widehat M_v(A)\le\widehat M_v(D)$, so the objective is monotone. To establish submodularity, consider an unselected token $i\in V\setminus D$. Its marginal gain is
\begin{equation}
\begin{aligned}
\Delta\widehat{\mathcal I}_{\mathrm{task}}(i\mid A)
&=\sum_{v\in V}\widehat W_v[\kappa_X(v,i)-\widehat M_v(A)]_+\\
&\ge\sum_{v\in V}\widehat W_v[\kappa_X(v,i)-\widehat M_v(D)]_+\\
&=\Delta\widehat{\mathcal I}_{\mathrm{task}}(i\mid D).
\end{aligned}
\label{eq:coverage_diminishing_returns}
\end{equation}
This is the diminishing-returns definition of submodularity. Multiplication by fixed nonnegative source weights preserves the inequality.

For the approximation bound, let $S_t$ denote the first $t$ greedy selections, with $S_0=\varnothing$ and $0\le t<b$. Monotonicity, diminishing returns, and the greedy choice imply
\begin{equation}
\begin{aligned}
\widehat{\mathcal I}_{\mathrm{task}}(S^\star)-\widehat{\mathcal I}_{\mathrm{task}}(S_t)
&\le\widehat{\mathcal I}_{\mathrm{task}}(S_t\cup S^\star)-\widehat{\mathcal I}_{\mathrm{task}}(S_t)\\
&\le\sum_{i\in S^\star\setminus S_t}\Delta\widehat{\mathcal I}_{\mathrm{task}}(i\mid S_t)\\
&\le b\bigl[\widehat{\mathcal I}_{\mathrm{task}}(S_{t+1})-\widehat{\mathcal I}_{\mathrm{task}}(S_t)\bigr].
\end{aligned}
\label{eq:coverage_greedy_step}
\end{equation}
Writing the remaining objective gap as $D_t=\widehat{\mathcal I}_{\mathrm{task}}(S^\star)-\widehat{\mathcal I}_{\mathrm{task}}(S_t)$ gives $D_{t+1}\le(1-1/b)D_t$. Since $D_0=\widehat{\mathcal I}_{\mathrm{task}}(S^\star)$, iterating for $b$ steps proves the first inequality in \eqref{eq:coverage_greedy_ratio}. The second follows from $(1-1/b)^b\le e^{-1}$.
\end{proof}

Greedy selection attains at least a $(1-1/e)$ fraction of the optimal coverage value. Equivalently, the objective gap is at most $\widehat{\mathcal I}_{\mathrm{task}}(S^\star)/e$. It is not a bound on answer loss or benchmark error. The proof requires nonnegative similarities, not a positive-semidefinite similarity matrix.

The guarantee is local to each fixed selector call. In Stage~2, the comparison optimum uses the candidate pool and contextualized features produced after Stage~1. When crops or tiles receive separate budgets, it applies within each group with its assigned budget. It does not establish the same approximation ratio for joint optimization of both stages or for unrestricted token allocation across groups. If $B\ge|V|$, all available tokens are retained and the solution is exact.

\FloatBarrier

\section{Implementation and Evaluation Details}
\label{sec:appendix_implementation}

\subsection{Shared MiCo Selector}
\label{sec:appendix_selector}

Both pruning stages use Algorithm~\ref{alg:weighted_coverage}. Its inputs are the visual features at the current stage, three nonnegative proxy vectors, and a retained-token budget. In both stages, coverage is the positive cosine similarity between visual tokens in that stage's feature space. Text-to-visual attention affects the Stage-2 weights, not the coverage kernel. Feature magnitudes are measured before the normalization used to compute cosine similarity.

The proxy-based version of \eqref{eq:weighted_coverage} is $\widehat{\mathcal I}_{\mathrm{task}}(S;X)=\sum_{v\in V}\widehat W_v\widehat M_v(S)$, where $\widehat W_v=\widehat{\mathcal H}_v\widehat{\mathcal R}_v\widehat{\mathcal P}_v$, $\widehat M_v(S)=\max_{s\in S}\kappa_X(v,s)$, and $\widehat M_v(\varnothing)=0$. The marginal gain used to select the next token is
\begin{equation}
\Delta\widehat{\mathcal I}_{\mathrm{task}}(i\mid S;X)=\sum_{v\in V}\widehat W_v[\kappa_X(v,i)-\widehat M_v(S)]_+.
\label{eq:coverage_gain}
\end{equation}
The fixed-call assumptions, submodularity proof, and approximation guarantee are given in Appendix~\ref{sec:appendix_coverage_guarantee}; this section focuses on the executable selector.

\begin{algorithm}[!t]
\caption{MiCo}
\label{alg:weighted_coverage}
\begin{algorithmic}[1]
\REQUIRE Visual features $X=\{x_v\}_{v\in V}$; nonnegative proxies $\widehat{\mathcal H},\widehat{\mathcal R},\widehat{\mathcal P}$; retained budget $B$
\ENSURE Retained visual-token indices $S$ in their original order
\STATE $\widehat W_v\leftarrow\widehat{\mathcal H}_v\widehat{\mathcal R}_v\widehat{\mathcal P}_v$ for every $v\in V$
\STATE $\kappa(v,i)\leftarrow[\cos(x_v,x_i)]_+$ for all $v,i\in V$
\STATE $S\leftarrow\varnothing$; $\widehat M_v\leftarrow0$ for every $v\in V$
\FOR{$t=1$ to $\min(B,|V|)$}
    \STATE $i^\star\leftarrow\arg\max_{i\in V\setminus S}\sum_{v\in V}\widehat W_v[\kappa(v,i)-\widehat M_v]_+$
    \STATE $S\leftarrow S\cup\{i^\star\}$
    \STATE $\widehat M_v\leftarrow\max\{\widehat M_v,\kappa(v,i^\star)\}$ for every $v\in V$
\ENDFOR
\RETURN $S$ sorted by the original token order
\end{algorithmic}
\end{algorithm}

Table~\ref{tab:mico_stage_proxies} summarizes the stage-specific proxy assignments. The features $x_v^{\mathrm{vis}}$ and $x_v^{(\ell_\star)}$ denote visual-encoder outputs and contextualized visual states, respectively. Here $\overline A^{\mathrm{vis}}_{\mathrm{CLS}\to v}$ averages visual CLS attention over heads, and $\overline A^{(\ell_\star)}_{\mathrm{text}\to v}$ averages text-to-visual attention over available text queries and heads. For vision encoders without a CLS token, the architecture-native pooled visual-attention analogue replaces CLS attention. Setting a proxy to one gives that factor uniform weight; feature norms and attention are proxies rather than direct measurements of semantic entropy or reliability.

\begin{table}[H]
\centering
\small
\setlength{\tabcolsep}{6pt}
\caption{Stage-specific inputs to MiCo's shared weighted-coverage selector.}
\label{tab:mico_stage_proxies}
\begin{tabular}{lccc}
\toprule
\textbf{Stage} & \textbf{Energy $\widehat{\mathcal H}_v$} & \textbf{Reliability $\widehat{\mathcal R}_v$} & \textbf{Task relevance $\widehat{\mathcal P}_v$} \\
\midrule
Stage 1 & $\lVert x_v^{\mathrm{vis}}\rVert_2$ & $\overline A^{\mathrm{vis}}_{\mathrm{CLS}\to v}$ & $1$ \\
Stage 2 & $\lVert x_v^{(\ell_\star)}\rVert_2$ & $1$ & $\overline A^{(\ell_\star)}_{\mathrm{text}\to v}$ \\
\bottomrule
\end{tabular}
\end{table}

\FloatBarrier
\subsection{Empirical Checks for the Proxy Assignments}
\label{sec:appendix_proxy_checks}

The theoretical factors are latent, so these experiments validate the observable proxies against
operational references rather than claiming to measure the factors exactly. The checks below are
diagnostic evidence for the intended role of each proxy; they do not change the selector or its weights.

\paragraph{Stage-1 information amount.}
We use dense COCO-Stuff labels to construct an empirical entropy reference. The image pool is split
into disjoint A/B halves of 2,500 COCO val2017 images. For each aspect bin $a$ and native visual
position $v$, A supplies category counts $n_{a,v,c}$ over the 182 stored COCO-Stuff classes. With the
fixed Jeffreys smoothing used in the analysis,
\[
\widehat p_{a,v,c}=\frac{n_{a,v,c}+0.5}{N_{a,v}+0.5\times182},\qquad
\widehat H_{a,v}=-\sum_{c=1}^{182}\widehat p_{a,v,c}\log\widehat p_{a,v,c},
\]
where $N_{a,v}=\sum_c n_{a,v,c}$ excludes the dataset's void label. On the independent B pool,
we extract the raw pre-projection Stage-1 feature $x_{i,a,v}$ and compare $\widehat H_{a,v}$ with its
position-wise mean norm $\overline n_{a,v}=|B_{a,v}|^{-1}\sum_{i\in B_{a,v}}\lVert x_{i,a,v}\rVert_2$.
% TODO(authors): confirm which estimate was pre-specified; the text and the caption previously disagreed.
Figure~\ref{fig:proxy_h_entropy_norm} shows the two-sided 1\% trimmed fit, which gives Spearman $\rho=0.36832$; the pre-specified
primary estimate is the untrimmed $\rho=0.35943$ (Tukey $1.5\times\mathrm{IQR}$ trimming: $0.31021$). Thus the evidence is a
moderate position-level association for the Stage-1 norm, not a token-level identity with
$H(U_v\mid q,g)$: the corresponding per-token association is $\rho=0.00699$, and this diagnostic
contains no prompt variable. It therefore estimates an $H(U_v\mid G)$-type quantity and is used
only to support the task-agnostic Stage-1 proxy.

\begin{figure}[!htbp]
    \centering
    \includegraphics[width=0.96\linewidth]{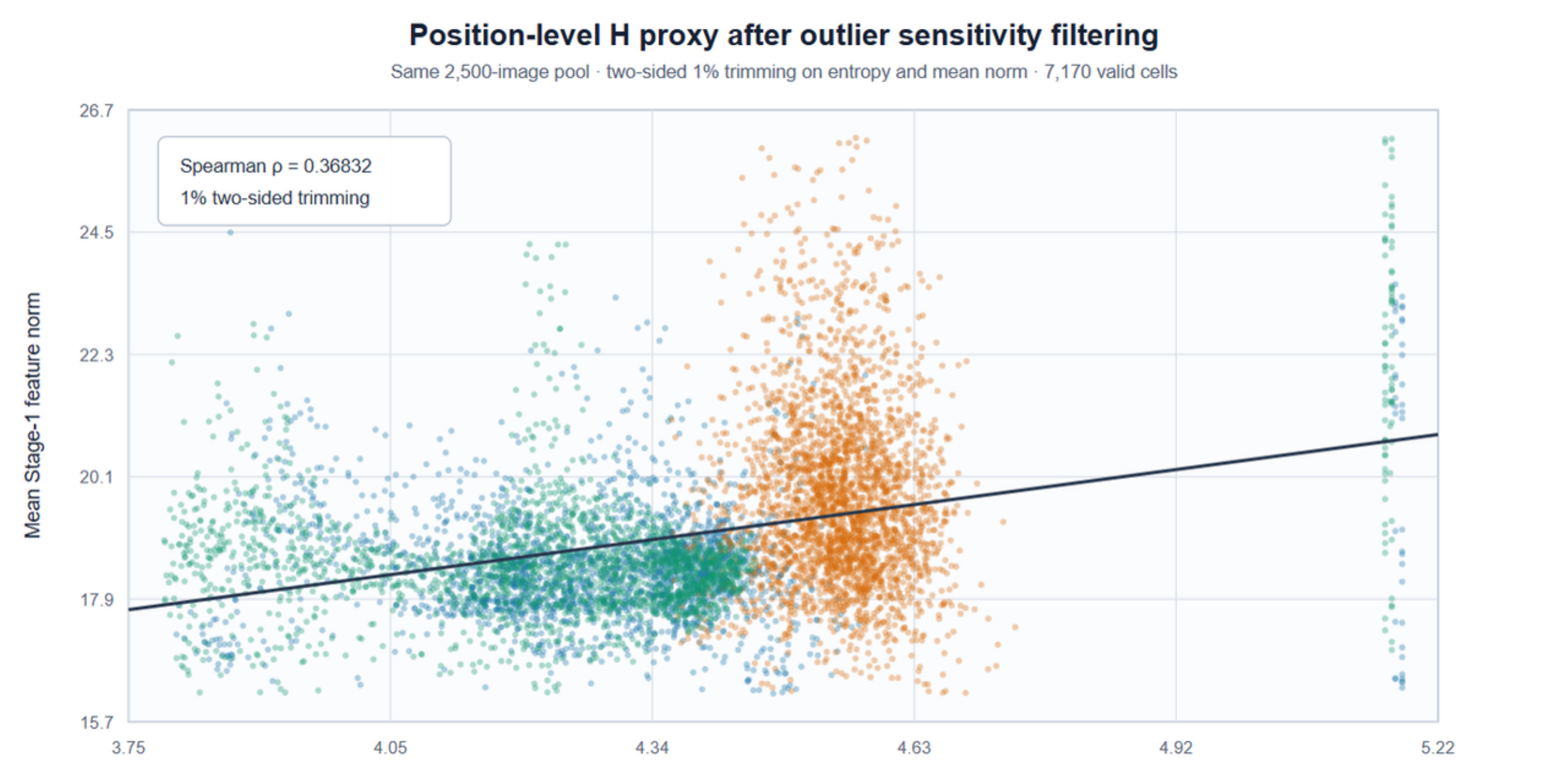}\\[-0.2em]
    {\small Empirical category entropy $H(U_v\mid G)$ (nats)}\\[-0.1em]
    {\scriptsize
      \textcolor{blue!70!black}{\rule[0.25ex]{0.8em}{0.8em}}\,aspect 1\quad
      \textcolor{yellow!75!black}{\rule[0.25ex]{0.8em}{0.8em}}\,aspect 2\quad
      \textcolor{cyan!55!black}{\rule[0.25ex]{0.8em}{0.8em}}\,aspect 3\quad
      \textcolor{black!65}{Line: least-squares trend}}
    \caption{Position-level validation of the Stage-1 information-amount proxy. Dense-label entropy from
    COCO-Stuff is compared with the mean raw Stage-1 feature norm at matched aspect bin and native
    position. The displayed fit uses a two-sided 1\% trim ($\rho=0.36832$); the pre-specified primary estimate is untrimmed ($\rho=0.35943$). This is a
    geometry-conditioned $H(U_v\mid G)$ diagnostic rather than a direct measurement of
    prompt-conditioned entropy.}
    \label{fig:proxy_h_entropy_norm}
\end{figure}

\paragraph{Stage-1 representation reliability.}
We use annotated-category decoding as an operational reference for whether a visual representation
preserves semantic content. On LLaVA-NeXT-7B, we train linear and MLP probes on 1,688 clean COCO
objects, select checkpoints on 382 validation objects, and test on 200 objects from new, disjoint
images. Each probe receives the unit-normalized mean visual feature within the target region;
neither CLS attention nor feature norm is supplied as an extra input. We use three fixed head seeds
and compare target-region masking with equal-area context masking. Decoding correctness is an
observable diagnostic, not a direct observation of the latent reliability gate $C_v$.

Figure~\ref{fig:proxy_decoding_reference} provides three complementary checks. Panel~(a) shows
that target masking reduces category accuracy from 98.0\% to 81.4\% at 75\% replacement, whereas
equal-area context masking leaves it at 97.9\%; full target masking reduces accuracy to 64.3\%.
This establishes that the decoding reference is sensitive to target content. Panels~(b) and~(c)
then test whether the reliability proxy orders successful and unsuccessful decoding trials. Within
each fixed object, masking arm, strength, and head seed, concordance is the fraction of
correct--incorrect pairs whose CLS-attention score is higher on the correct trial, with half credit
for ties. Scores average CLS-to-patch attention over heads and the fixed target region; feature norm
is a comparison score averaged over the same region. Concordance is computed from six perturbation
repeats, then averaged across defined seeds, strengths, and images, with no-association reference 0.5.
All-correct and all-incorrect strata have undefined concordance and are not assigned a chance score.

For target masking, CLS-attention concordance is 0.557 (95\% CI [0.479, 0.629]) with the linear
probe and 0.564 ([0.484, 0.636]) with the MLP, compared with feature-norm estimates of 0.459 and
0.499. Thus the positive attention trend persists under both probe capacities, providing suggestive
support for its reliability role. Both intervals include 0.5, so this is not a statistically established
above-chance association. Target concordance uses 57 and 59 informative images, respectively;
context concordance uses only 4 and 3 and is correspondingly uncertain. Confidence intervals use
1,000 image-cluster bootstrap resamples; all 200 images contribute to decoding accuracy.

\begin{figure}[!htbp]
    \centering
    \includegraphics[width=\linewidth]{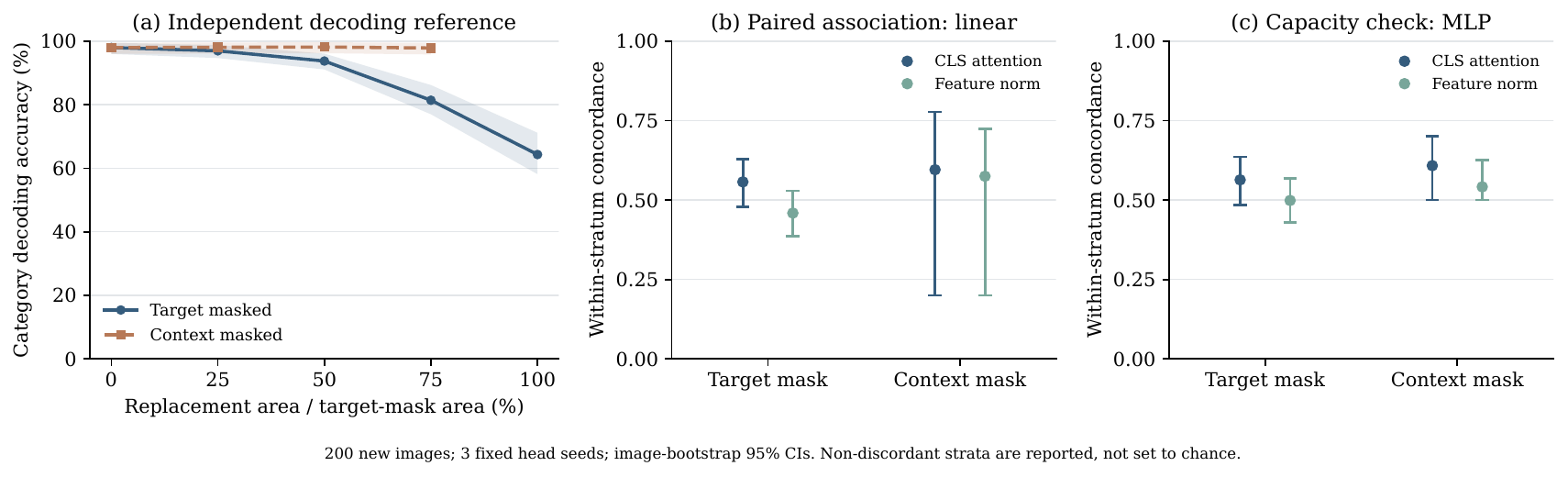}
    \caption{Three complementary diagnostics for Stage-1 reliability on LLaVA-NeXT-7B.
    (a) Category decoding is sensitive to target masking, but largely stable under context masking.
    (b) CLS attention has a positive concordance point estimate with decoding success under a linear
    probe. (c) The same direction persists with an MLP probe. Shading and error bars show
    image-bootstrap 95\% confidence intervals over 200 test images and three fixed head seeds;
    concordance excludes strata without both successes and failures. The attention trends are
    suggestive, not conclusive, because the target-mask intervals include the 0.5 reference.}
    \label{fig:proxy_decoding_reference}
\end{figure}

\paragraph{Stage-2 task relevance.}
For each held-out RefCOCO image--query pair, we convert the annotated box--cell overlap into a
normalized target distribution over the Stage-1 survivor cells. This is an operational query-relevance
target conditioned on the image and survivor pool, not a direct observation of $J$. We compare it with
the text-to-visual attention averaged over text queries and heads, without post-hoc calibration.
Figure~\ref{fig:proxy_query_attention} shows the local attention-mass interval $[0,0.08]$, using
the original rank-decile bins without rebinning. Across the complete held-out set of 480 queries from
240 images, the mean per-query Spearman correlation is $\rho=0.0697$ (image-bootstrap 95\% CI
$[0.0433,0.0976]$), indicating a weak positive spatial association. The higher-attention bins outside
this view have decreasing target mass, so the local trend does not establish full-range probability
calibration.

\begin{figure}[!htbp]
    \centering
    \includegraphics[width=0.78\linewidth]{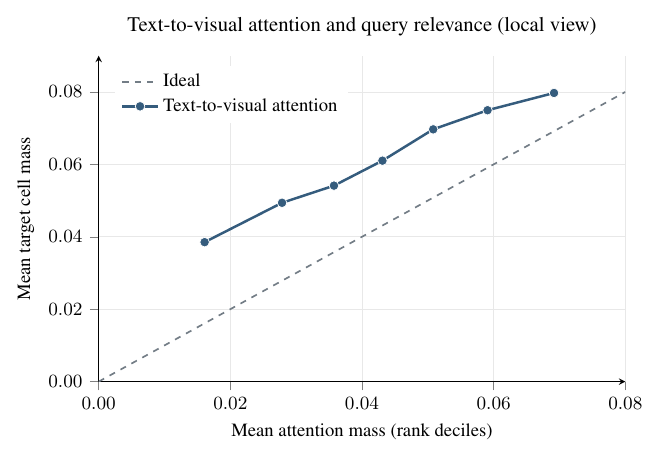}
    \caption{Local-range view of the Stage-2 task-relevance diagnostic on LLaVA-NeXT-7B,
    restricted to attention masses $0$--$0.08$. Points retain the original rank-decile means;
    the dashed line is the ideal calibration reference. The target is normalized RefCOCO box--cell
    overlap conditioned on the Stage-1 survivor pool. Higher-attention bins are outside the displayed
    range; the correlation reported in the text uses the complete held-out data.}
    \label{fig:proxy_query_attention}
\end{figure}
\FloatBarrier

\subsection{Architecture and Budget Mapping}
\label{sec:appendix_budget}

Let $N_0$ be the original visual-token count and $T$ the target layer-average count. Unless otherwise stated, Stage~1 retains a pool of $N_1=\min(2T,N_0)$ tokens. With $L$ decoder blocks, refinement uses the text-to-visual attention of the one-based block $K=\ell_\star$ and prunes immediately after it, where $1\le\ell_\star<L$, so that the first $\ell_\star$ blocks process $N_1$ visual tokens and the remaining $L-\ell_\star$ blocks process $R$. Before integer rounding and clipping, matching the target average gives
\begin{equation}
T=\frac{\ell_\star N_1+(L-\ell_\star)R}{L}.
\label{eq:layer_average_target}
\end{equation}
Solving for the post-refinement count and enforcing integer and range constraints yields
\begin{equation}
R=\operatorname{clip}\!\left(\operatorname{round}\frac{TL-\ell_\star N_1}{L-\ell_\star},1,N_1\right).
\label{eq:layer_average_budget}
\end{equation}
Rounding and clipping can make the realized layer average differ from $T$. The selector can be applied directly to budget $T$ when decoder-stage refinement is omitted. Table~\ref{tab:pruning_schedule} gives the architecture-specific schedules. Its listed $R$ assumes the pool $N_1=2T$ before integer rounding; video budgets are per frame.

\begin{table}[!htbp]
\centering
\small
\setlength{\tabcolsep}{6pt}
\caption{MiCo budgets and refinement layers by architecture; video budgets are per frame.}
\label{tab:pruning_schedule}
\begin{tabular}{lcccc}
\toprule
\textbf{Backbone} & \textbf{Decoder blocks $L$} & \textbf{Refinement $\ell_\star$} & \textbf{Stage-1 keep} & \textbf{Stage-2 keep $R$} \\
\midrule
LLaVA-1.5/NeXT-7B & 32 & 7 & $2T$ & $\operatorname{round}(18T/25)$ \\
LLaVA-1.5-13B & 40 & 8 & $2T$ & $\operatorname{round}(3T/4)$ \\
LLaVA-NeXT-13B & 40 & 14 & $2T$ & $\operatorname{round}(6T/13)$ \\
Qwen2.5-VL-7B & 28 & 2 & $2T$ & $\operatorname{round}(12T/13)$ \\
InternVL3-8B & 28 & 4 & $2T$ & $\operatorname{round}(5T/6)$ \\
Qwen3-VL-8B & 36 & 6 & $2T$ & $\operatorname{round}(4T/5)$ \\
Qwen3.5-9B & 32 & 12 & $2T$ & $\operatorname{round}_{\rm even}(2T/5)$ \\
LLaVA-Video-7B & 28 & 6 & $2T$ & $\operatorname{round}(8T/11)$ \\
\bottomrule
\end{tabular}
\end{table}

For multi-crop LLaVA-NeXT inputs and dynamically tiled InternVL inputs, the Stage~1 budget is distributed over the natural visual groups. Integer division and a minimum of one token per group can make the realized count differ slightly from $T$ or $2T$. LLaVA, Qwen3-VL, and Qwen3.5 recompute $R$ using \eqref{eq:layer_average_budget} and the realized Stage~1 survivor count. For Qwen2.5-VL and InternVL, $R$ is instead computed from the nominal $N_1=2T$ rather than the realized survivor count, so their realized layer average can differ slightly when clipping or group rounding prevents exactly $2T$ survivors.

\paragraph{Video input protocol.}
LLaVA-Video-7B is evaluated through lmms-eval with the video repository's native task prompts, generation settings, and official scorers. Each video is uniformly sampled to at most 64 frames (\texttt{force\_sample=False}, so shorter videos contribute fewer frames), and every frame is encoded by the SigLIP vision tower and spatially pooled by the model's native pooling to 169 visual tokens, so an unpruned 64-frame input carries $64\times169$ visual tokens. Video budgets are nominal per-frame layer averages: $T=64/32/16$ correspond to pruning 62.1\%/81.1\%/90.5\% of the 169 tokens per frame. MiCo applies the Stage~1 pool ($2T$ per frame) and the refinement mapping of \eqref{eq:layer_average_budget} to the concatenated frame tokens, and Table~\ref{tab:mico_video_latency_breakdown} reports the realized per-frame layer averages. Efficiency measurements fix 64 frames at $384\times384$ resolution with FP16 and SDPA on one A800 80GB GPU (Appendix~\ref{sec:appendix_efficiency}).

\subsection{Computational Complexity}
\label{sec:appendix_complexity}

Following the standard accounting for token-pruning selectors~\cite{zhang2024sparsevlm}, constructing a dense visual-token similarity matrix for $N$ tokens of width $d$ costs $O(N^2d)$ time and $O(N^2)$ memory. Exact greedy coverage with a retained budget $B$ adds $O(BN^2)$ arithmetic for the marginal updates. Thus, for the two-stage path with original token count $N_0$, Stage~1 pool $N_1$, and final budget $R$, the selector-specific work scales as $O(N_0^2d_1+N_1N_0^2+N_1^2d_2+RN_1^2)$, up to implementation constants, where $d_1$ and $d_2$ are the widths of the vision-encoder features and the decoder hidden states.
For video inputs, Stage~1 is applied to each frame separately, so the first two terms use the per-frame token count (169 for LLaVA-Video-7B) and grow only linearly with the number of frames, whereas Stage~2 operates on the concatenated survivor pool $N_1=F\cdot 2T$ over all $F$ frames; the quadratic Stage~2 terms therefore dominate the selector cost for long videos (Appendix~\ref{sec:appendix_efficiency}).

We validate this scaling on the executed implementations by counting logical dense-equivalent FLOPs, including the vision encoder, projector, decoder, output head, and both selector stages. These counts are an operation-accounting check rather than hardware-instruction counts; the per-component breakdown and synchronized latency validation are given in Appendix~\ref{sec:appendix_efficiency} and Table~\ref{tab:mico_gs_full_flops}.

\FloatBarrier

\subsection{Qwen3.5-9B: Hybrid-Decoder Adaptation}
\label{sec:appendix_qwen35_implementation}

Qwen3.5-9B~\cite{qwen35_9b_modelcard} has 32 decoder blocks, repeating three Gated DeltaNet blocks followed by one full-attention block. We retain all decoder blocks and remove only visual tokens. Both evaluated budgets ($T=256$ and $T=128$) use the same configuration: a Stage~1 pool of $N_1=\min(2T,N_0)$ tokens (multiplier $n=2$, as for the other models), refinement after the 1-based block $K=12$, which is a full-attention block, and full-attention QK relevance averaged over all non-visual prompt positions.

Stage~1 follows the CLS-free Qwen adaptation. We observe the final vision-block attention, pool its query and key axes over the native $2\times2$ merger groups, and average over query groups and heads. The coverage features are the means of the four pre-merger patch features in each group. Their norms multiply the pooled attention to form source weights for the shared greedy coverage selector. The selected group indices retain the corresponding native merged embeddings; the vision encoder and merger execute unchanged, and selection occurs before the first decoder block.

Stage~2 uses block~12's native Q/K projections, Q/K normalization, rotary positions, and grouped-query key expansion. For each non-visual prompt query and head, we normalize QK scores over the current visual keys, then average the probabilities over queries and heads. The block executes normally. The block's output visual hidden states serve as the Stage~2 coverage features, and their $\ell_2$ norms multiplied by these relevance scores give the Stage~2 source weights; the weights are divided by their sum (uniform weights are used if the sum is zero), which does not change the greedy selection. Stage~1 attention weights are not multiplied in again. We keep only the selected visual states, keep all text states, retain each surviving token's original multimodal (mRoPE) position index, and run blocks~13--32 on the shortened sequence.

The second-stage budget is the instance of \eqref{eq:layer_average_budget} with $L=32$ and $K=12$, $R=\operatorname{round}\big((32T-12N_1)/20\big)$ with $N_1=\min(2T,N_0)$, where $\operatorname{round}$ is round-half-to-even; instead of clipping, the Qwen3.5 path raises an error if $R$ falls outside $[1,N_1]$. For $N_0\ge2T$, the schedules are $512\rightarrow102$ at $T=256$ and $256\rightarrow51$ at $T=128$, realizing layer averages of 255.750 and 127.875. During cached decoding each block keeps the prefill state it computed: blocks~1--12 keep KV caches (full-attention blocks) or recurrent states (Gated DeltaNet blocks) built from the full $N_1$-token sequence and are not truncated after pruning, whereas blocks~13--32 hold caches built from the pruned sequence. Generated tokens receive position indices that continue from the original, unpruned prompt length.

The evaluated scope is one unpadded image per request, batch size one, SDPA, greedy cached generation, thinking disabled, and a 2,048-token generation cap. Empty and capped answers remain in the scoring denominator. Each configuration has 23,715 predictions: AI2D 3,088; POPE 5,127 expanded to 9,000 category entries; HallusionBench 951; MME 2,374 questions (1,187 images, two questions each); MMBench EN/CN 4,329 entries and 1,164 Circular groups each; MMStar 1,500; and ScienceQA-IMG 2,017. MMStar's released questions are used unchanged, including items whose answer options are malformed in the original release.

As with Qwen2.5-VL and Qwen3-VL, we use one fixed configuration at both budgets (here $n=2$, $K=12$; per-model layers are listed in Table~\ref{tab:layer_configurations}) and report the resulting eight-task evaluation. Acc follows the common Qwen convention (MME$/20$).

\FloatBarrier

\section{Efficiency Measurement Details}
\label{sec:appendix_efficiency}

\subsection{Measurement Protocol}

The main text reports one-token latency for the LLaVA-NeXT backbones and complete-answer latency for LLaVA-Video, together with peak allocated memory and logical FLOPs. Latency uses synchronized CUDA wall time after ten warm-up requests; FLOPs are counted in a separate pass and are not used as a latency proxy. Loading, preprocessing, and host-to-device transfer are outside the generation timer. Budgets are nominal layer-average targets; the token counts realized by the executed decoder schedules are reported for the MiCo breakdowns in Tables~\ref{tab:mico_gs_latency_components} and~\ref{tab:mico_video_latency_breakdown}.

% TODO(authors): state which attention backend the baseline rows and the NeXT-13B rows use.
The NeXT-7B MiCo rows use the default MiCo implementation, which fuses the greedy selector and runs the language model with FlashAttention2; the backend changes kernel fusion and memory traffic, but not the selector objective, token budget, or selected tokens. We report logical dense-equivalent FLOPs rather than hardware instruction counts. The accounting includes the executed vision path, projector, decoder, output head, and pruning stages; fused attention is counted by its dense-equivalent arithmetic, while indexing, sorting, synchronization, and diagnostic-only work are excluded consistently.

\subsection{Backbone-level efficiency results}

\paragraph{LLaVA-NeXT backbones.}
Table~\ref{tab:efficiency_next7b} reports median latency, peak-memory, and logical-FLOP comparisons at $T=640,320,160$ for NeXT-7B and NeXT-13B under one protocol (one A800 80GB GPU, batch size one, one generated token, ten warm-ups and 50 measured requests). The two backbones are placed side by side for direct comparison; the Pareto figures use arithmetic-mean timing, so their speedups differ slightly from median-based ratios. Under the layer-average accounting of Appendix~\ref{sec:appendix_budget}, FastV and SparseVLM have no feasible $T=160$ schedule on NeXT-7B and only a degenerate one on NeXT-13B (Appendix~\ref{sec:appendix_full_results}), so their $T=160$ rows are left blank.

% One longtable with the two backbones as column groups, so the comparison stays side by side
% but can break across pages instead of leaving a gap when the block does not fit.
% Merged from computational_efficiency_next7b.tex / computational_efficiency_next13b.tex so the
% comparison can break across pages (longtable). 7B: historical comparison rows, MiCo uses the
% refreshed GS cost controls. 13B: historical matched comparison; latency uses medians and speedups
% use the displayed Vanilla reference. Bold/underline marks (13B only) are carried over unchanged.
\begingroup
\scriptsize
\setlength{\tabcolsep}{3.5pt}
\setlength{\LTleft}{\fill}
\setlength{\LTright}{\fill}
\setlength{\LTcapwidth}{\linewidth}
\renewcommand{\arraystretch}{1.05}
\begin{longtable}{@{}l*{6}{c}@{}}
\caption{LLaVA-NeXT-7B and LLaVA-NeXT-13B efficiency comparison (median latency; speedups and reductions are relative to each backbone's Vanilla row).}
\label{tab:efficiency_next7b}\label{tab:efficiency_next13b}\\
\toprule
\multirow{2}{*}{Method} & \multicolumn{3}{c}{LLaVA-NeXT-7B} & \multicolumn{3}{c}{LLaVA-NeXT-13B} \\
\cmidrule(lr){2-4}\cmidrule(lr){5-7}
& \shortstack{Latency (ms)\\(Speedup)} & \shortstack{Peak memory (GiB)\\(Reduction)} & \shortstack{Logical FLOPs (T)\\(Reduction)} & \shortstack{Latency (ms)\\(Speedup)} & \shortstack{Peak memory (GiB)\\(Reduction)} & \shortstack{Logical FLOPs (T)\\(Reduction)} \\
\midrule
\endfirsthead
\multicolumn{7}{l}{\textit{Table \thetable\ (continued): LLaVA-NeXT-7B and LLaVA-NeXT-13B efficiency comparison}}\\
\toprule
\multirow{2}{*}{Method} & \multicolumn{3}{c}{LLaVA-NeXT-7B} & \multicolumn{3}{c}{LLaVA-NeXT-13B} \\
\cmidrule(lr){2-4}\cmidrule(lr){5-7}
& \shortstack{Latency (ms)\\(Speedup)} & \shortstack{Peak memory (GiB)\\(Reduction)} & \shortstack{Logical FLOPs (T)\\(Reduction)} & \shortstack{Latency (ms)\\(Speedup)} & \shortstack{Peak memory (GiB)\\(Reduction)} & \shortstack{Logical FLOPs (T)\\(Reduction)} \\
\midrule
\endhead
\midrule
\multicolumn{7}{r}{\textit{Continued on next page}}\\
\endfoot
\bottomrule
\endlastfoot
Vanilla & 290.0 (1.00$\times$) & 15.8 (0.0\%) & 45.6 (0.0\%) & 484.1 (1.00$\times$) & 28.4 (0.0\%) & 85.1 (0.0\%) \\
\midrule
\multicolumn{7}{c}{\textit{Budget $T=640$ tokens per image}} \\
\midrule
FastV & 113.0 (2.57$\times$) & 16.0 (-1.7\%) & 11.8 (74.1\%) & \underline{154.9} (3.13$\times$) & 28.1 (1.1\%) & 20.9 (75.4\%) \\
SparseVLM & 137.1 (2.11$\times$) & 16.0 (-1.7\%) & 11.8 (74.2\%) & 176.3 (2.75$\times$) & 28.1 (1.1\%) & 21.0 (75.3\%) \\
DivPrune & 108.7 (2.67$\times$) & 14.2 (9.7\%) & 11.7 (74.4\%) & \textbf{154.2} (3.14$\times$) & \textbf{25.9} (8.8\%) & \underline{20.8} (75.6\%) \\
VisionZip & 118.8 (2.44$\times$) & 15.2 (3.9\%) & 11.7 (74.4\%) & 173.6 (2.79$\times$) & 27.0 (4.9\%) & \underline{20.8} (75.6\%) \\
VisPruner & 128.0 (2.27$\times$) & 14.7 (7.1\%) & 11.7 (74.4\%) & 179.2 (2.70$\times$) & \underline{26.5} (6.7\%) & \underline{20.8} (75.6\%) \\
HoloV & 117.4 (2.47$\times$) & 15.2 (3.9\%) & 11.7 (74.3\%) & 176.1 (2.75$\times$) & 27.0 (4.9\%) & \underline{20.8} (75.6\%) \\
MMTok & 208.3 (1.39$\times$) & 16.1 (-2.2\%) & 11.7 (74.4\%) & 259.2 (1.87$\times$) & 28.0 (1.4\%) & \underline{20.8} (75.6\%) \\
ApET & 125.4 (2.31$\times$) & 23.3 (-48.0\%) & 10.5 (77.1\%) & 185.9 (2.60$\times$) & 37.8 (-33.1\%) & \textbf{18.7} (78.0\%) \\
\rowcolor{micoRose}
\textbf{MiCo} & 135.5 (2.14$\times$) & 13.8 (12.8\%) & 11.6 (74.5\%) & 222.4 (2.18$\times$) & 27.0 (4.9\%) & \underline{20.8} (75.6\%) \\
\midrule
\multicolumn{7}{c}{\textit{Budget $T=320$ tokens per image}} \\
\midrule
FastV & 107.0 (2.71$\times$) & 16.0 (-1.7\%) & 7.4 (83.8\%) & 128.1 (3.78$\times$) & 28.5 (-0.4\%) & 12.5 (85.3\%) \\
SparseVLM & 130.4 (2.22$\times$) & 19.0 (-20.1\%) & 7.4 (83.8\%) & 173.6 (2.79$\times$) & 32.0 (-12.7\%) & 12.3 (85.5\%) \\
DivPrune & 86.5 (3.35$\times$) & 14.0 (11.1\%) & 7.3 (84.0\%) & \textbf{112.0} (4.32$\times$) & \textbf{26.0} (8.5\%) & 12.3 (85.5\%) \\
VisionZip & 102.4 (2.83$\times$) & 15.2 (3.9\%) & 7.3 (84.1\%) & 126.5 (3.83$\times$) & 27.0 (4.9\%) & \underline{12.2} (85.7\%) \\
VisPruner & 112.4 (2.58$\times$) & 14.7 (7.1\%) & 7.3 (84.0\%) & 139.5 (3.47$\times$) & \underline{26.5} (6.7\%) & 12.3 (85.5\%) \\
HoloV & 102.6 (2.83$\times$) & 15.2 (3.9\%) & 7.3 (84.1\%) & 125.8 (3.85$\times$) & 27.0 (4.9\%) & \underline{12.2} (85.7\%) \\
MMTok & 151.9 (1.91$\times$) & 16.1 (-2.2\%) & 7.3 (84.0\%) & 181.4 (2.67$\times$) & 28.0 (1.4\%) & 12.3 (85.5\%) \\
ApET & 103.4 (2.80$\times$) & 16.0 (-1.2\%) & 6.8 (85.0\%) & \underline{124.7} (3.88$\times$) & 28.5 (-0.4\%) & \textbf{11.5} (86.5\%) \\
\rowcolor{micoRose}
\textbf{MiCo} & 99.4 (2.92$\times$) & 13.5 (14.1\%) & 7.2 (84.3\%) & 151.1 (3.20$\times$) & 27.0 (4.9\%) & \underline{12.2} (85.7\%) \\
\midrule
\multicolumn{7}{c}{\textit{Budget $T=160$ tokens per image}} \\
\midrule
FastV & -- & -- & -- & -- & -- & -- \\
SparseVLM & -- & -- & -- & -- & -- & -- \\
DivPrune & 81.9 (3.54$\times$) & 14.0 (11.2\%) & 5.1 (88.8\%) & \textbf{92.8} (5.22$\times$) & \textbf{25.9} (8.8\%) & 8.1 (90.5\%) \\
VisionZip & 97.1 (2.99$\times$) & 15.2 (3.9\%) & 5.1 (88.8\%) & 114.1 (4.24$\times$) & 27.0 (4.9\%) & \underline{8.0} (90.6\%) \\
VisPruner & 113.1 (2.56$\times$) & 14.7 (7.1\%) & 5.1 (88.8\%) & 124.6 (3.89$\times$) & 26.5 (6.7\%) & 8.1 (90.5\%) \\
HoloV & 101.1 (2.87$\times$) & 15.2 (3.9\%) & 5.1 (88.8\%) & 114.1 (4.24$\times$) & 27.0 (4.9\%) & \underline{8.0} (90.6\%) \\
MMTok & 131.9 (2.20$\times$) & 16.1 (-2.2\%) & 5.1 (88.8\%) & 144.3 (3.35$\times$) & 28.0 (1.4\%) & 8.1 (90.5\%) \\
ApET & 91.4 (3.17$\times$) & 14.0 (11.1\%) & 5.0 (89.0\%) & \underline{108.4} (4.47$\times$) & \underline{26.1} (8.1\%) & \textbf{7.9} (90.7\%) \\
\rowcolor{micoRose}
\textbf{MiCo} & 88.1 (3.29$\times$) & 13.4 (14.8\%) & 5.0 (89.0\%) & 123.6 (3.92$\times$) & 27.0 (4.9\%) & \underline{8.0} (90.6\%) \\
\end{longtable}
\endgroup

\FloatBarrier
\subsection{Component-level breakdown}

\paragraph{Component timing.}
Tables~\ref{tab:mico_gs_latency_components} and~\ref{tab:mico_video_latency_breakdown} report the NeXT-7B and Video-7B timing breakdowns with the same row structure. The vision encoder, the projector, and the three pruning spans (Stage~1 selection, Stage~2 relevance, and Stage~2 selection) are CUDA event intervals recorded inside one generate call; the remaining span covers the decoder layers, the output head, and runtime/control work, so each column sums to the profiled generate span. The headline latency and peak-memory values are those reported in Tables~\ref{tab:efficiency_next7b} and~\ref{tab:llava_video}; the component spans below are diagnostic. For NeXT-7B, the profile uses one A800 80GB GPU with $K=7$, ten warm-ups, and 50 measured MME requests per budget. Within its remaining span, the decoder layers and final norm take 45.49/48.66/61.29\,ms at $T=160/320/640$, the output head 0.42/0.53/0.82\,ms, and runtime/control 2.94/2.55/3.09\,ms; the instrumented passes take 90.17/102.20/138.91\,ms of wall time against un-instrumented reference passes of 87.95/98.75/134.92\,ms before and 87.06/100.95/135.03\,ms after profiling, so observer overhead is about 1--4\%. For Video-7B, the spans are diagnostic means over two timed requests per budget (after warm-up) on the profiling cohort, a fixed set of videos sampled at 64 frames, and the decoder was not instrumented separately ($\dagger$): its remaining span is the generate span minus the named spans.
The two backbones differ in which stage dominates the pruning cost. On NeXT-7B, Stage~1 selects $2T$ tokens from the 2,880 multi-crop tokens while Stage~2 selects $R$ from those survivors, so Stage~1 runs more greedy iterations and is the larger span (28.3 versus 13.9\,ms at $T=640$). On Video-7B, Stage~1 runs independently per frame over 169 tokens (batched across the 64 frames), whereas Stage~2 operates on the concatenated pool of $64\times2T$ survivors; its pairwise similarities and greedy loop over up to 8,192 tokens therefore dominate the pruning cost (247.6\,ms at $T=64$) and fall by roughly $3\times$ with each halving of $T$, while the per-frame Stage~1 stays at 5--11\,ms.

\begin{table}[!htbp]
\centering
\begin{minipage}[t]{0.48\textwidth}
\centering
\centering
\captionof{table}{MiCo timing on LLaVA-NeXT-7B ($K=7$). Component spans are means over 50 measured requests from paired instrumented profiles; complete-call latency is measured separately.}
\label{tab:mico_gs_latency_components}
\resizebox{\linewidth}{!}{%
\begin{tabular}{lrrr}
\toprule
Component / measurement (ms unless noted) & $T=160$ & $T=320$ & $T=640$ \\
\midrule
Vision encoder & 28.87 & 29.59 & 30.48 \\
Multimodal projector & 0.62 & 0.62 & 0.62 \\
Stage 1 selector & 7.76 & 14.23 & 28.27 \\
Stage 2 relevance (text attention) & 0.39 & 0.09 & 0.10 \\
Stage 2 selector & 2.66 & 5.47 & 13.84 \\
\textit{Pruning subtotal} & \textit{10.81} & \textit{19.79} & \textit{42.21} \\
Decoder, output head and runtime & 48.84 & 51.73 & 65.21 \\
Profiled generate span & 89.14 & 101.73 & 138.52 \\
\midrule
Formal complete-call latency & 88.1 & 99.4 & 135.5 \\
Peak allocated memory (GiB) & 13.4 & 13.5 & 13.8 \\
Actual visual tokens per image (layer average) & 159.84 & 319.69 & 640.16 \\
\bottomrule
\end{tabular}%
}

\end{minipage}\hfill
\begin{minipage}[t]{0.48\textwidth}
\centering
\centering
\captionof{table}{MiCo timing on LLaVA-Video-7B ($K=6$). Component spans are diagnostic means over two timed requests after warm-up; complete-answer latency is measured separately.}
\label{tab:mico_video_latency_breakdown}
\resizebox{\linewidth}{!}{%
\begin{tabular}{lrrr}
\toprule
Component / measurement (ms unless noted) & $T=16$ & $T=32$ & $T=64$ \\
\midrule
Vision encoder & 629.82 & 629.80 & 630.01 \\
Multimodal projector & 8.16 & 8.17 & 8.17 \\
Stage 1 selector & 4.62 & 6.65 & 10.94 \\
Stage 2 relevance (text attention) & 1.17 & 1.98 & 4.02 \\
Stage 2 selector & 22.94 & 74.28 & 247.64 \\
\textit{Pruning subtotal} & \textit{28.73} & \textit{82.91} & \textit{262.60} \\
Decoder, output head and runtime$^\dagger$ & 113.27 & 199.04 & 394.25 \\
Profiled generate span & 779.98 & 919.92 & 1295.03 \\
\midrule
Formal complete-answer latency & 804.7 & 933.8 & 1262.5 \\
Peak allocated memory (GiB) & 21.32 & 21.32 & 21.32 \\
Actual visual tokens per frame & 16.00 & 31.99 & 64.00 \\
\bottomrule
\end{tabular}%
}

\end{minipage}
\end{table}

\FloatBarrier
\paragraph{Component FLOPs.}
Tables~\ref{tab:mico_gs_full_flops} and~\ref{tab:mico_video_flops_breakdown} report the corresponding model and pruning arithmetic at the same budgets as the timing tables ($T=160/320/640$ for NeXT-7B and per-frame $T=16/32/64$ for Video-7B), separating model arithmetic from pruning arithmetic. Runtime/control work is excluded from FLOPs. The NeXT-7B counter further splits the pruning arithmetic into Stage~1 and Stage~2, matching the timing rows; the Video counter reports the Stage-1 and Stage-2 proxy computations as one row and the Triton-implemented greedy selection of both stages as a separate row. The Video breakdown is counted on the fixed 64-frame profiling cohort, so its totals differ by less than 1\% from the native-input totals reported in Table~\ref{tab:llava_video}.

\begin{table}[!htbp]
\centering
\centering
\captionof{table}{MiCo FLOPs breakdown on LLaVA-NeXT-7B ($K=7$); percentages are shares of total FLOPs. Pruning arithmetic is attributed to Stage~1 and Stage~2 separately.}
\label{tab:mico_gs_full_flops}
\resizebox{\linewidth}{!}{%
\begin{tabular}{lrrr}
\toprule
Component (GFLOPs) & $T=160$ & $T=320$ & $T=640$ \\
\midrule
Vision encoder & 1836.289 (36.63\%) & 1836.289 (25.57\%) & 1836.289 (15.78\%) \\
Multimodal projector & 120.879 (2.41\%) & 120.879 (1.68\%) & 120.879 (1.04\%) \\
Decoder layers and final normalization & 3002.246 (59.90\%) & 5136.922 (71.54\%) & 9520.805 (81.81\%) \\
Vocabulary output head & 48.350 (0.96\%) & 78.496 (1.09\%) & 139.052 (1.19\%) \\
\midrule
Stage 1: scores, similarity and selection & 3.729 (0.07\%) & 4.047 (0.06\%) & 4.684 (0.04\%) \\
Stage 2: extra attention / relevance & 0.113 (0.002\%) & 0.206 (0.003\%) & 0.391 (0.003\%) \\
Stage 2: scores, similarity and selection & 0.881 (0.02\%) & 3.651 (0.05\%) & 15.713 (0.14\%) \\
Stage 2 subtotal & 0.994 (0.02\%) & 3.857 (0.05\%) & 16.104 (0.14\%) \\
\midrule
Model subtotal & 5007.764 (99.91\%) & 7172.586 (99.89\%) & 11617.024 (99.82\%) \\
Pruning subtotal & 4.723 (0.09\%) & 7.904 (0.11\%) & 20.788 (0.18\%) \\
Runtime/control (excluded from FLOPs) & \multicolumn{3}{c}{--} \\
\textbf{Total (GFLOPs)} & \textbf{5012.486 (100.00\%)} & \textbf{7180.490 (100.00\%)} & \textbf{11637.813 (100.00\%)} \\
\textbf{Total (TFLOPs)} & \textbf{5.012486} & \textbf{7.180490} & \textbf{11.637813} \\
\bottomrule
\end{tabular}%
}

\end{table}

\FloatBarrier
\begin{table}[!htbp]
\centering
\centering
\captionof{table}{MiCo FLOPs breakdown on LLaVA-Video-7B ($K=6$) on the fixed 64-frame profiling cohort; percentages are shares of total FLOPs. The two pruning paths are counted jointly; the Triton greedy term is listed separately.}
\label{tab:mico_video_flops_breakdown}
\resizebox{\linewidth}{!}{%
\begin{tabular}{lrrr}
\toprule
Component (GFLOPs) & $T=16$ & $T=32$ & $T=64$ \\
\midrule
Vision encoder & 41228.062 (69.53\%) & 41228.062 (54.73\%) & 41228.062 (37.19\%) \\
Multimodal projector & 1585.032 (2.67\%) & 1585.032 (2.10\%) & 1585.032 (1.43\%) \\
Visual pooling / plumbing & 0.155 (0.000\%) & 0.155 (0.000\%) & 0.155 (0.000\%) \\
Decoder layers and final normalization & 15444.005 (26.05\%) & 30518.109 (40.51\%) & 63934.284 (57.68\%) \\
Vocabulary output head & 929.368 (1.57\%) & 1738.100 (2.31\%) & 3357.739 (3.03\%) \\
\midrule
Pruning operators (Stage 1 and Stage 2, jointly counted) & 109.016 (0.18\%) & 261.046 (0.35\%) & 745.496 (0.67\%) \\
Triton greedy selection & 0.088 (0.000\%) & 0.386 (0.001\%) & 1.687 (0.002\%) \\
\midrule
Model subtotal & 59186.622 (99.82\%) & 75069.459 (99.65\%) & 110105.272 (99.33\%) \\
Pruning subtotal & 109.104 (0.18\%) & 261.432 (0.35\%) & 747.183 (0.67\%) \\
Runtime/control (excluded from FLOPs) & \multicolumn{3}{c}{--} \\
\textbf{Total (GFLOPs)} & \textbf{59295.726 (100.00\%)} & \textbf{75330.891 (100.00\%)} & \textbf{110852.455 (100.00\%)} \\
\textbf{Total (TFLOPs)} & \textbf{59.296} & \textbf{75.331} & \textbf{110.852} \\
\bottomrule
\end{tabular}%
}

\end{table}

\FloatBarrier

\section{Additional Ablation Studies}
\label{sec:appendix_13b}

\subsection{Two-Stage Component Ablation}
\label{sec:appendix_components}

Table~\ref{tab:ablation_stages} reports AI2D, POPE, HallB, MME, MMB-EN, MMStar, and SQA-IMG for Figure~\ref{fig:two_stage_components}, sorted by increasing Acc$_7$. Acc$_7$ averages exactly these seven benchmarks, using POPE accuracy and MME divided by 20, and differs from the eight-benchmark Acc in the main performance tables. The main figure retains the English MMBench evaluation; Figure~\ref{fig:two_stage_components_supplement} shows MMB-CN, ChartQA, and OCRBench, none of which contributes to Acc$_7$. All means use unrounded evaluation records. Separate columns specify Stage-1 CLS-attn and Stage-2 text-to-visual attention. The run groups and their full-component references are distinguished in the table note. The random setting samples tokens uniformly without replacement; it differs from the legacy all-off setting, in which every token receives the same score and the selector degenerates to top-$k$ in index order. All settings use $K=2$ and a Stage-1 pool of $N_1=2T$. Y/N mark whether a component is enabled; Stage-1 CLS attention and Stage-2 text-to-visual attention have separate columns, whereas $H$ and coverage are toggled jointly in both stages. The twelve plotted settings retain fixed colors and hatches, with bars sorted by score within each panel. Axis breaks mark truncated ranges; individual MME panels retain the original scale.

Within the paired-removal runs, disabling coverage lowers Acc$_7$ from 77.1 to 74.4 and disabling both attention signals lowers it to 75.7, the two largest drops among the tested switches; the feature-norm term $H$ has the smallest effect in this collection. These paired-removal runs were collected with an earlier Stage-1 attention extraction, so their absolute scores sit slightly below the main-evaluation configuration (Acc$_7=78.4$); the single-component, random, and split-attention runs (the last disable only Stage-1 CLS attention or only Stage-2 text-to-visual attention) use the main-evaluation extraction. Comparisons are therefore most informative within a collection, and the table marks the collection of every row. Table~\ref{tab:ablation_stages} reports the individual switches, scores, and run distinctions. Stage-1 CLS-attn uses the native pooled analogue on Qwen2.5. MiCo does not lead every individual benchmark: no-cls-attn scores higher on MMB-EN.

\begin{table}[!htb]
\centering
\centering
\footnotesize
\setlength{\tabcolsep}{2pt}
\renewcommand{\arraystretch}{1.08}
\caption{Two-stage component scores on Qwen2.5-VL-7B at $T=128$, $K=2$, and a Stage-1 pool of $N_1=2T$.}
\label{tab:ablation_stages}
\micoTableFit{%
\begin{tabular}{lcccccccccccc}
\toprule
Setting & \shortstack{CLS-attn\\(S1)} & \shortstack{Text-visual\\attn (S2)} & $H$ & Coverage & AI2D & POPE & HallB & MME & \shortstack{MMB\\EN} & MMStar & \shortstack{SQA\\IMG} & Acc$_7$ \\
\midrule
all-off & N & N & N & N & 70.5 & 67.8 & 34.6 & 1662.24 & 72.4 & 46.7 & 77.6 & 64.7 \\
only-H & N & N & Y & N & 73.9 & 81.0 & 36.6 & 1933.90 & 79.6 & 51.3 & 82.5 & 71.7 \\
only-cls-attn & Y & N & N & N & 73.8 & 81.4 & 39.1 & 1975.37 & 77.3 & 52.6 & 81.6 & 72.1 \\
random & N & N & N & N & 73.7 & 80.8 & 40.0 & 2047.53 & 77.0 & 52.1 & 82.1 & 72.6 \\
only-both-attn & Y & Y & N & N & 75.0 & 81.9 & 41.2 & 1958.55 & 79.5 & 52.5 & 82.1 & 72.9 \\
no-coverage & Y & Y & Y & N & 77.5 & 82.6 & 40.9 & 2035.51 & 79.9 & 55.0 & 83.0 & 74.4 \\
only-text-visual-attn & N & Y & N & N & 76.7 & 82.3 & 44.3 & 2137.36 & 78.3 & 53.4 & 83.1 & 75.0 \\
no-both-attn & N & N & Y & Y & 78.1 & 85.0 & 43.0 & 2099.88 & 79.8 & 55.0 & 83.8 & 75.7 \\
only-coverage & N & N & N & Y & 77.5 & 84.4 & 43.2 & 2142.60 & 80.2 & 54.7 & 83.8 & 75.8 \\
no-text-visual-attn & Y & N & Y & Y & 78.2 & 85.2 & 42.6 & 2149.09 & 80.2 & 56.3 & 83.8 & 76.3 \\
no-H & Y & Y & N & Y & 79.6 & 84.4 & 45.6 & 2202.81 & 80.4 & 55.3 & 84.0 & 77.1 \\
no-cls-attn & N & Y & Y & Y & 79.9 & 84.6 & 46.7 & 2206.65 & 81.1 & 56.5 & 83.1 & 77.5 \\
\rowcolor{micoPurple}
\textbf{MiCo} & Y & Y & Y & Y & 80.4 & 85.5 & 48.6 & 2252.37 & 80.8 & 56.5 & 84.6 & \textbf{78.4} \\
\bottomrule
\end{tabular}%
}
\par\smallskip
\begin{minipage}{\linewidth}
\scriptsize
\textit{Note.} CLS-attn denotes Qwen2.5's native pooled visual-attention analogue, not a literal CLS token. Text-visual-attn denotes text-to-visual attention in Stage 2; both-attn denotes both signals. MiCo uses the main-evaluation record. The no-H, no-both-attn, no-coverage, and all-off settings come from an earlier paired-removal run with a different Stage-1 attention extraction, whose full-component reference has Acc$_7=77.1$; the only-component, random, and split-attention runs use the main-evaluation extraction. Comparisons are most informative within one run. Scores and means use unrounded evaluation records; table values are rounded for display. In only-cls-attn and only-text-visual-attn, the stage without attention samples uniformly without replacement; random does so in both stages (one run, seed 0). All-off uses equal-score top-$k$ selection and remains appendix-only.
\end{minipage}

\end{table}

\begin{figure}[!htb]
    \centering
    \includegraphics[width=\linewidth]{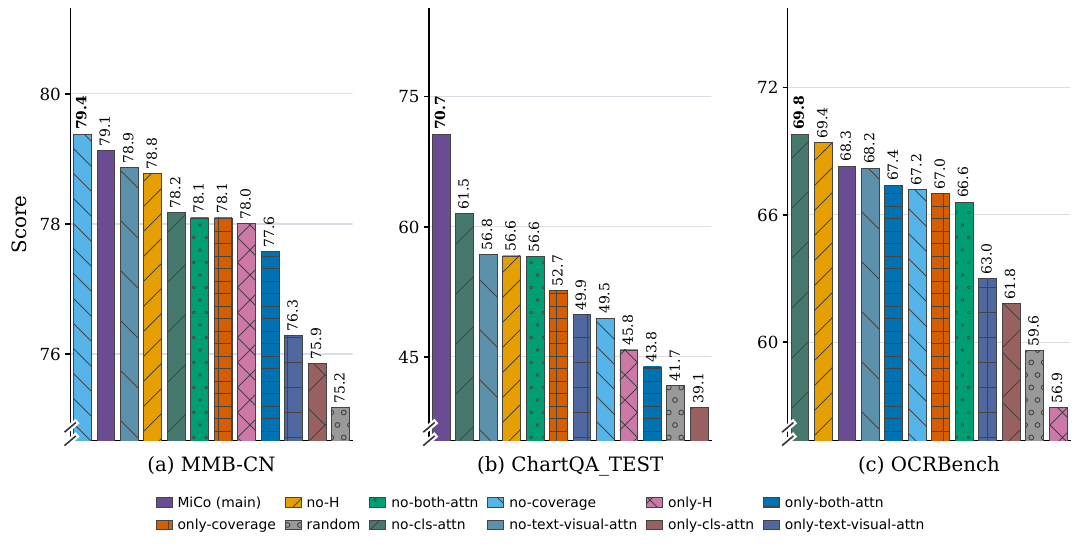}
    \caption{Additional component comparisons on Qwen2.5-VL-7B at $T=128$; these tasks are excluded from Acc$_7$.}
    \label{fig:two_stage_components_supplement}
\end{figure}
% No barrier here: the supplement figure is tall, so let the next subsection's text
% fill the page while the figure floats to the top of the following one.

\subsection{Proxy Substitutions}
\label{sec:appendix_proxy_substitutions}

Table~\ref{tab:ablation_proxy_substitutions} compares four proxy replacements on Qwen2.5-VL-7B at $T=128$. % TODO(authors): confirm the three similarity definitions below match the implementation.
The $H$ replacement scores each token by its normalized hidden-state entropy instead of its feature norm, in both stages. The coverage replacement drops the pairwise coverage term and scores each token by its similarity to the mean visual feature (visual-mean similarity), in both stages. The reliability replacement replaces the Stage-1 CLS-attention proxy with each token's similarity to the global pooled visual token (global-token similarity), and the relevance replacement replaces the Stage-2 text-to-visual attention with each token's mean similarity to the text tokens (mean text--visual similarity). Acc averages AI2D, POPE accuracy, HallB, MME$/20$, MMBench-EN/CN, MMStar, and SQA-IMG; Rel normalizes Acc by the unpruned reference of 82.95. Rows are ordered by increasing Acc, with MiCo last. Each column highlights the best and second-best distinct displayed values; MiCo leads the aggregate, but not every individual benchmark. The complete task-relevance comparison below retains the additional uniform, inverse (one minus) mean text--visual similarity, and final-token alternatives.
\FloatBarrier

\subsection{Stage-2 Task-Relevance Ablation}
\label{sec:appendix_task_relevance}

We replace only the Stage-2 task-relevance weight on Qwen2.5 at $T=128$ (Table~\ref{tab:ablation_text_rater}). All variants use $K=2$ and a Stage-1 pool of $2T$; uniform relevance sets the weight to one. Averaging text-to-visual attention over all text queries gives the highest Acc, 78.5, compared with 76.6 for uniform relevance, 77.1 for mean text--visual similarity, and 77.2 for its complement. Using only the final text token reaches 78.3. These results favor attention-based task relevance over raw cross-modal similarity, with a small further gain from aggregating text queries. All five variants exceed MMTok's Acc of 75.3, the strongest external baseline at this budget.

The aggregate uses the eight-benchmark Acc of the Qwen main tables, with MME divided by 20. POPE reports overall accuracy, MMBench uses Circular evaluation, and SQA uses image-question accuracy.

\begin{table}[!htbp]
    \centering
    % Rel uses unrounded taskrel Acc8 from figures/data/pope_accuracy_ablations.json
% divided by the Qwen2.5 main-table unpruned reference, 82.95.
\centering
\footnotesize
\setlength{\tabcolsep}{2pt}
\renewcommand{\arraystretch}{1.08}
\caption{Stage-2 task-relevance ablation on Qwen2.5-VL-7B at $T=128$, $K=2$, and a Stage-1 pool of $N_1=2T$.}
\label{tab:ablation_text_rater}
\begin{tabular}{p{\dimexpr0.21\linewidth-2\tabcolsep\relax}
  >{\centering\arraybackslash}p{\dimexpr0.05\linewidth-2\tabcolsep\relax}
  *{3}{>{\centering\arraybackslash}p{\dimexpr0.067\linewidth-2\tabcolsep\relax}}
  >{\centering\arraybackslash}p{\dimexpr0.105\linewidth-2\tabcolsep\relax}
  *{2}{>{\centering\arraybackslash}p{\dimexpr0.067\linewidth-2\tabcolsep\relax}}
  >{\centering\arraybackslash}p{\dimexpr0.09\linewidth-2\tabcolsep\relax}
  >{\centering\arraybackslash}p{\dimexpr0.07\linewidth-2\tabcolsep\relax}
  >{\centering\arraybackslash}p{\dimexpr0.065\linewidth-2\tabcolsep\relax}
  >{\centering\arraybackslash}p{\dimexpr0.075\linewidth-2\tabcolsep\relax}}
\toprule
Method / relevance & $T$ & AI2D & POPE & HallB & MME & \shortstack{MMB\\EN} & \shortstack{MMB\\CN} & MMStar & \shortstack{SQA\\IMG} & Acc & Rel \\
\midrule
MMTok (ICLR'26) & 128 & 76.3 & 84.2 & 43.3 & 2120.7 & 79.8 & 77.5 & 53.7 & 81.4 & 75.3 & 90.8\% \\
\midrule
uniform & 128 & 78.2 & 85.2 & 42.6 & 2149.09 & 80.2 & 78.9 & \underline{56.3} & 83.8 & 76.6 & 92.3\% \\
text-vision-sim & 128 & 79.1 & 85.5 & 45.9 & 2142.01 & 79.7 & \underline{79} & \textbf{56.5} & \underline{84.3} & 77.1 & 93.0\% \\
1-(text-vision-sim) & 128 & 79.4 & \textbf{86.2} & 45.9 & 2170.42 & 80.3 & 78.7 & 55.5 & 83.4 & 77.2 & 93.1\% \\
last-token-attn & 128 & \textbf{80.5} & \underline{85.6} & \underline{48.3} & \underline{2235.97} & \underline{80.6} & \underline{79} & 56.1 & \underline{84.3} & \underline{78.3} & \underline{94.4\%} \\
\rowcolor{micoPurple}
\textbf{MiCo} (all-text-attn) & 128 & \underline{80.4} & 85.5 & \textbf{48.6} & \textbf{2252.37} & \textbf{80.8} & \textbf{79.1} & \textbf{56.5} & \textbf{84.6} & \textbf{78.5} & \textbf{94.7\%} \\
\bottomrule
\end{tabular}

\end{table}
\FloatBarrier

\subsection{Stage-1 Budget Ablations}
\label{sec:appendix_s1_budget}

To test the trade-off between a broad candidate pool and later-layer compression, we vary the Stage~1 pool $N_1=nT$ for $n=1,\dots,5$ on Qwen2.5-VL-7B while keeping the layer-average budget fixed (the figures show $n=1$--$4$; the tables also include $n=5$). Increasing the initial pool leaves fewer tokens for the remaining decoder layers under this constraint. Qwen2.5 uses 28 decoder blocks and $K=2$; for each $n$, the Stage-2 count $R$ is recomputed from \eqref{eq:layer_average_budget} with $N_1=nT$, so the layer average stays at $T$ (Appendix~\ref{sec:appendix_budget}).

Figure~\ref{fig:s1_budget_sensitivity} shows four illustrative benchmarks and the full eight-benchmark mean at $T=128$. All four pool sizes ($n=1$--$4$) exceed the strongest external baseline on the displayed metrics. The eight-benchmark mean is higher at $n=2$--$4$ than at $n=1$ and peaks at $n=2$, supporting a moderate initial candidate pool. This improvement is not uniform on each task: MMB-EN at $n=3$ is 0.09 points below $n=1$. The tables below report all benchmark results at both budgets, including $n=5$, alongside the InternVL3 comparison.

\begin{figure}[!htbp]
    \centering
    \includegraphics[width=\linewidth]{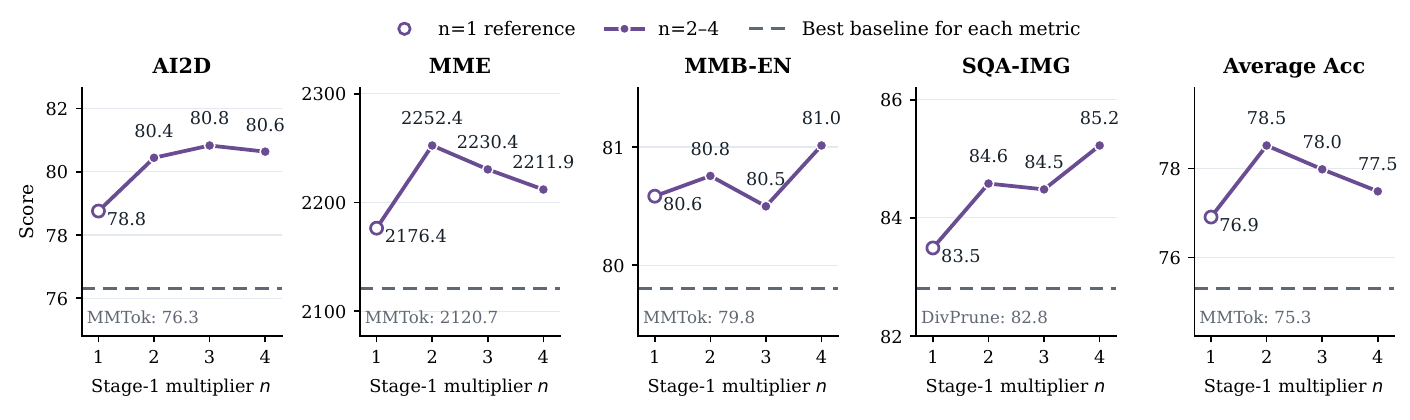}
    \caption{Stage-1 budget sensitivity on Qwen2.5-VL-7B at $T=128$: four illustrative tasks and the eight-task mean.}
    \label{fig:s1_budget_sensitivity}
\end{figure}

Tables~\ref{tab:ablation_hyperparams} and~\ref{tab:ablation_s1_intern} report the full Stage-1 budget ablations for Qwen2.5-VL-7B and InternVL3-8B at $T=256$ and $T=128$, including $n=5$. Figures~\ref{fig:s1_budget_sensitivity} and~\ref{fig:s1_budget_sensitivity_internvl3} display $n=1$--$4$ for Qwen2.5 and InternVL3, respectively. The $n=1$ rows repeat the main-evaluation MiCo-S1 records of Tables~\ref{tab:pruning_comparison_qwen25_full} and~\ref{tab:pruning_comparison_internvl3_full}, which both figures also use as their $n=1$ reference. InternVL3 uses 28 decoder blocks and $K=4$, and its displayed Average Acc peaks at $n=3$. Its plotted $n=2$ point is the main-evaluation MiCo record from Table~\ref{tab:pruning_comparison_internvl3_full} (Acc 79.3); Table~\ref{tab:ablation_s1_intern} instead lists the separately collected budget-ablation run for that setting (Acc 79.2).

Each figure displays four illustrative benchmarks chosen after inspecting the results, with different selections for the two models. These panels do not summarize every task: the complete eight-benchmark mean is shown separately, and the tables retain all benchmark scores. On both models, all four pool sizes exceed the strongest external baseline in this mean, and $n=2$--$4$ score higher than $n=1$. Individual tasks do not always follow this pattern; for example, Qwen2.5 MMB-EN at $n=3$ is 0.09 points below $n=1$. Dashed lines show the strongest external pruner per metric; hollow $n=1$ points identify the main-evaluation MiCo-S1 reference. Panel ranges differ, and MME retains its original scale outside the mean. Acc uses unrounded scores with MME$/20$. InternVL distributes the token budget evenly across its dynamic tiles, assigning any remainder to the first tiles.

\begin{figure}[!htbp]
    \centering
    \includegraphics[width=\linewidth]{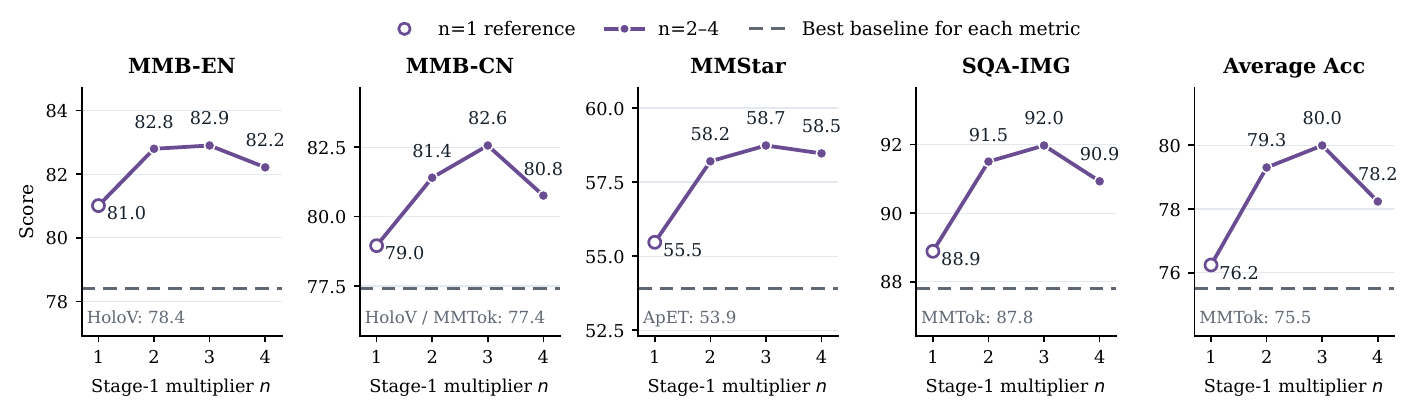}
    \caption{Stage-1 budget sensitivity on InternVL3-8B at $T=128$: four illustrative tasks and the eight-task mean.}
    \label{fig:s1_budget_sensitivity_internvl3}
\end{figure}

\begin{table}[H]
\centering
\footnotesize
\setlength{\tabcolsep}{2pt}
\renewcommand{\arraystretch}{1.08}
\caption{Stage-1 budget ablation on Qwen2.5-VL-7B with refinement at $K=2$.}
\label{tab:ablation_hyperparams}
\micoTableFit{%
\begin{tabular}{lcccccccccccc}
\toprule
S1 setting & $T$ & \shortstack{S1\\tokens} & S2 $R$ & AI2D & POPE & HallB & MME & \shortstack{MMB\\EN} & \shortstack{MMB\\CN} & MMStar & \shortstack{SQA\\IMG} & Acc \\
\midrule
n=1 (T) & 256 & 256 & 256 & 81.7 & 86.1 & 48.6 & 2282.2 & 82.0 & 80.5 & 59.5 & 85.5 & 79.8 \\
n=2 (2T) & 256 & 512 & 236 & 83.8 & 86.3 & 51.2 & 2291.0 & 83.2 & 81.3 & 60.1 & 86.2 & 80.8 \\
n=3 (3T) & 256 & 768 & 217 & 82.4 & 86.4 & 50.6 & 2289.3 & 82.3 & 81.4 & 60.7 & 86.4 & 80.6 \\
n=4 (4T) & 256 & 1024 & 197 & 82.4 & 86.1 & 50.0 & 2266.5 & 82.4 & 81.3 & 59.6 & 86.3 & 80.2 \\
n=5 (5T) & 256 & 1280 & 177 & 82.0 & 85.6 & 49.9 & 2250.1 & 81.9 & 80.6 & 59.1 & 85.8 & 79.7 \\
n=1 (T) & 128 & 128 & 128 & 78.8 & 84.3 & 46.3 & 2176.4 & 80.6 & 78.4 & 54.6 & 83.5 & 76.9 \\
n=2 (2T) & 128 & 256 & 118 & 80.4 & 85.5 & 48.6 & 2252.4 & 80.8 & 79.1 & 56.5 & 84.6 & 78.5 \\
n=3 (3T) & 128 & 384 & 108 & 80.8 & 85.0 & 46.5 & 2230.4 & 80.5 & 78.7 & 56.3 & 84.5 & 78.0 \\
n=4 (4T) & 128 & 512 & 98 & 80.6 & 84.2 & 45.3 & 2211.9 & 81.0 & 78.4 & 54.5 & 85.2 & 77.5 \\
n=5 (5T) & 128 & 640 & 89 & 79.9 & 84.0 & 45.4 & 2203.2 & 80.2 & 77.8 & 54.9 & 85.1 & 77.2 \\
\bottomrule
\end{tabular}%
}
\end{table}

\begin{table}[H]
\centering
\footnotesize
\setlength{\tabcolsep}{2pt}
\renewcommand{\arraystretch}{1.08}
\caption{Stage-1 budget ablation on InternVL3-8B with refinement at $K=4$.}
\label{tab:ablation_s1_intern}
\micoTableFit{%
\begin{tabular}{lcccccccccccc}
\toprule
S1 setting & $T$ & \shortstack{S1\\tokens} & S2 $R$ & AI2D & POPE & HallB & MME & \shortstack{MMB\\EN} & \shortstack{MMB\\CN} & MMStar & \shortstack{SQA\\IMG} & Acc \\
\midrule
n=1 (T) & 256 & 256 & 256 & 80.1 & 90.8 & 44.1 & 2279.1 & 83.7 & 83.2 & 59.7 & 93.4 & 81.1 \\
n=2 (2T) & 256 & 512 & 213 & 83.0 & 90.2 & 45.1 & 2293.7 & 85.0 & 84.5 & 63.0 & 95.6 & 82.6 \\
n=3 (3T) & 256 & 768 & 171 & 82.9 & 90.0 & 45.3 & 2302.5 & 85.3 & 84.7 & 63.5 & 95.6 & 82.8 \\
n=4 (4T) & 256 & 1024 & 128 & 82.1 & 89.9 & 43.2 & 2281.9 & 84.8 & 83.7 & 61.7 & 94.6 & 81.8 \\
n=5 (5T) & 256 & 1280 & 85 & 79.7 & 89.0 & 40.7 & 2260.5 & 82.6 & 83.0 & 59.5 & 92.6 & 80.0 \\
n=1 (T) & 128 & 128 & 128 & 73.9 & 89.3 & 36.8 & 2112.5 & 81.0 & 79.0 & 55.5 & 88.9 & 76.2 \\
n=2 (2T) & 128 & 256 & 107 & 77.5 & 89.7 & 40.4 & 2252.3 & 82.6 & 81.4 & 57.8 & 91.7 & 79.2 \\
n=3 (3T) & 128 & 384 & 85 & 79.4 & 89.6 & 40.6 & 2284.1 & 82.9 & 82.6 & 58.7 & 92.0 & 80.0 \\
n=4 (4T) & 128 & 512 & 64 & 78.2 & 88.6 & 38.7 & 2159.8 & 82.2 & 80.8 & 58.5 & 90.9 & 78.2 \\
n=5 (5T) & 128 & 640 & 43 & 75.6 & 87.8 & 37.5 & 2149.2 & 80.0 & 78.4 & 54.5 & 88.3 & 76.2 \\
\bottomrule
\end{tabular}%
}
\end{table}

\FloatBarrier

\section{Refinement-Layer Selection and Sensitivity}
\label{sec:appendix_layer_selection}

\subsection{Layer Configuration Summary}
\label{sec:appendix_layer_config}

We use one fixed, architecture-specific refinement layer in each reported configuration. Table~\ref{tab:layer_configurations} summarizes the selected layers, and Appendix~\ref{sec:appendix_image_calibration} gives the selection procedure and representative examples.

\begin{table}[!htbp]
    \centering
    \footnotesize
    \setlength{\tabcolsep}{3pt}
    \renewcommand{\arraystretch}{1.12}
    \caption{Refinement-layer configurations used in the reported experiments.}
    \label{tab:layer_configurations}
    \begin{tabular}{p{0.30\linewidth}p{0.60\linewidth}}
    \toprule
    \textbf{Model} & \textbf{Refinement-layer configuration} \\
    \midrule
    LLaVA-1.5-7B & $K=7$ \\
    LLaVA-1.5-13B & $K=8$ \\
    LLaVA-NeXT-7B & $K=7$ \\
    LLaVA-NeXT-13B & $K=14$ \\
    Qwen3-VL-8B & $K=6$ \\
    InternVL3-8B & $K=4$ \\
    Qwen2.5-VL-7B & $K=2$ \\
    Qwen3.5-9B & $K=12$ \\
    LLaVA-Video-7B & $K=6$ \\
    \bottomrule
    \end{tabular}
\end{table}

Detailed selection examples are given in Appendix~\ref{sec:appendix_image_calibration}, while the complete LLaVA sweep tables are collected with the full image-model results.

\subsection{Image-Model Calibration}
\label{sec:appendix_image_calibration}

For the representative OCR-based checks on Qwen3-VL and InternVL3, we hold the same 100 OCRBench samples and target budget $T=128$ fixed, sweep the one-based decoder-layer cutoff $K$ over candidate blocks, run the same two-stage pruning pipeline for each candidate, and select the layer with the highest OCR accuracy. The LLaVA calibration plots use the corresponding fixed sample and budget settings stated in the captions. Answer-token KL divergence is reported as a supplementary diagnostic. Figure~\ref{fig:ocrbench_layer_selection} shows the size-grouped LLaVA calibration, while Figures~\ref{fig:ocrbench_layer_selection_qwen} and~\ref{fig:ocrbench_layer_selection_intern} give representative Qwen3-VL-8B and InternVL3-8B examples at $T=128$. Accuracy curves count correct answers; stars mark the selected maxima, and hollow entries mark the MiCo-S1 (Stage-1-only) references. Layer indices are one-based, with pruning after $K$.

\begin{figure}[!htbp]
    \centering
    \includegraphics[width=\linewidth]{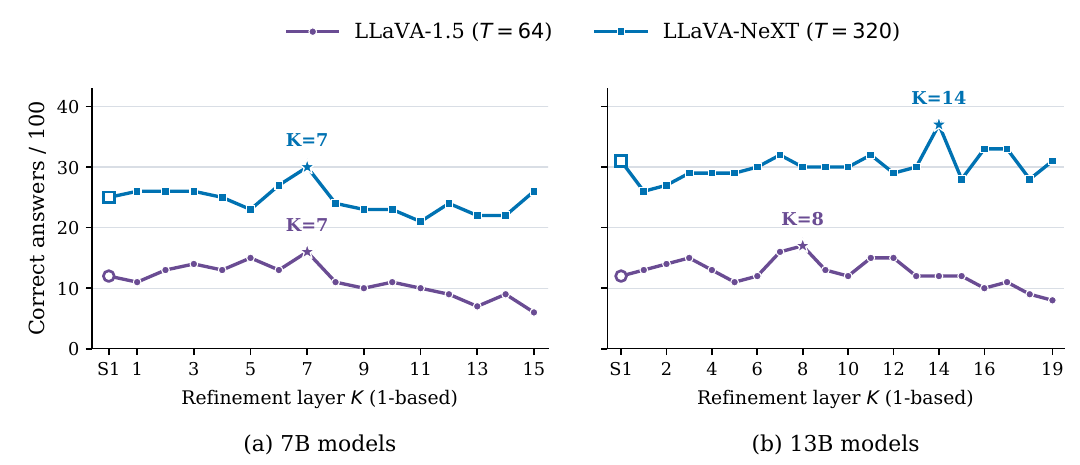}
    \caption{Refinement-layer selection on 100 OCRBench samples: LLaVA-1.5 at $T=64$ and LLaVA-NeXT at $T=320$.}
    \label{fig:ocrbench_layer_selection}
\end{figure}

\begin{figure}[!htbp]
    \centering
    \begin{subfigure}[t]{0.49\linewidth}
        \centering
        \includegraphics[width=\linewidth]{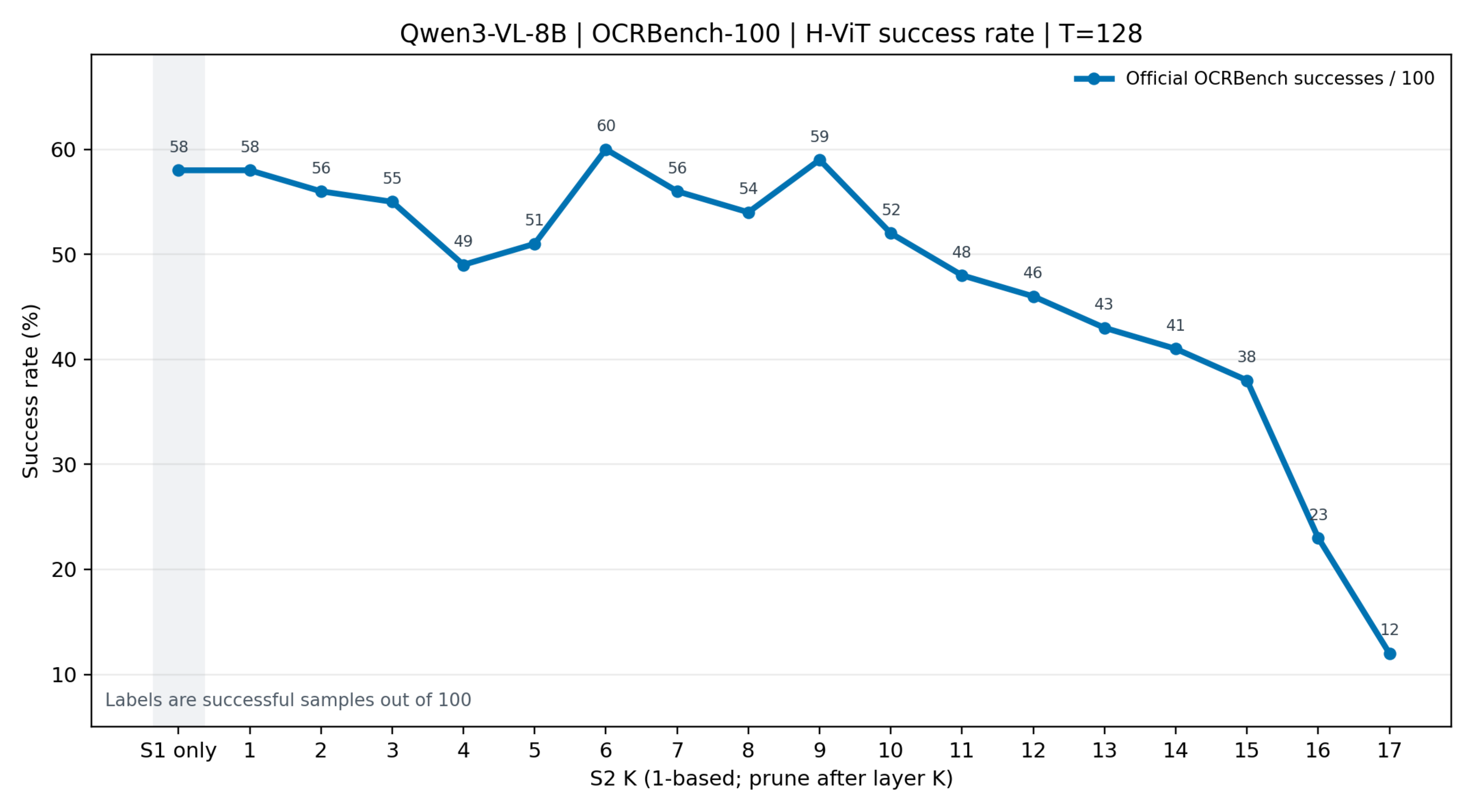}
        \caption{Qwen3-VL-8B}
        \label{fig:ocrbench_layer_selection_qwen}
    \end{subfigure}\hfill
    \begin{subfigure}[t]{0.49\linewidth}
        \centering
        \includegraphics[width=\linewidth]{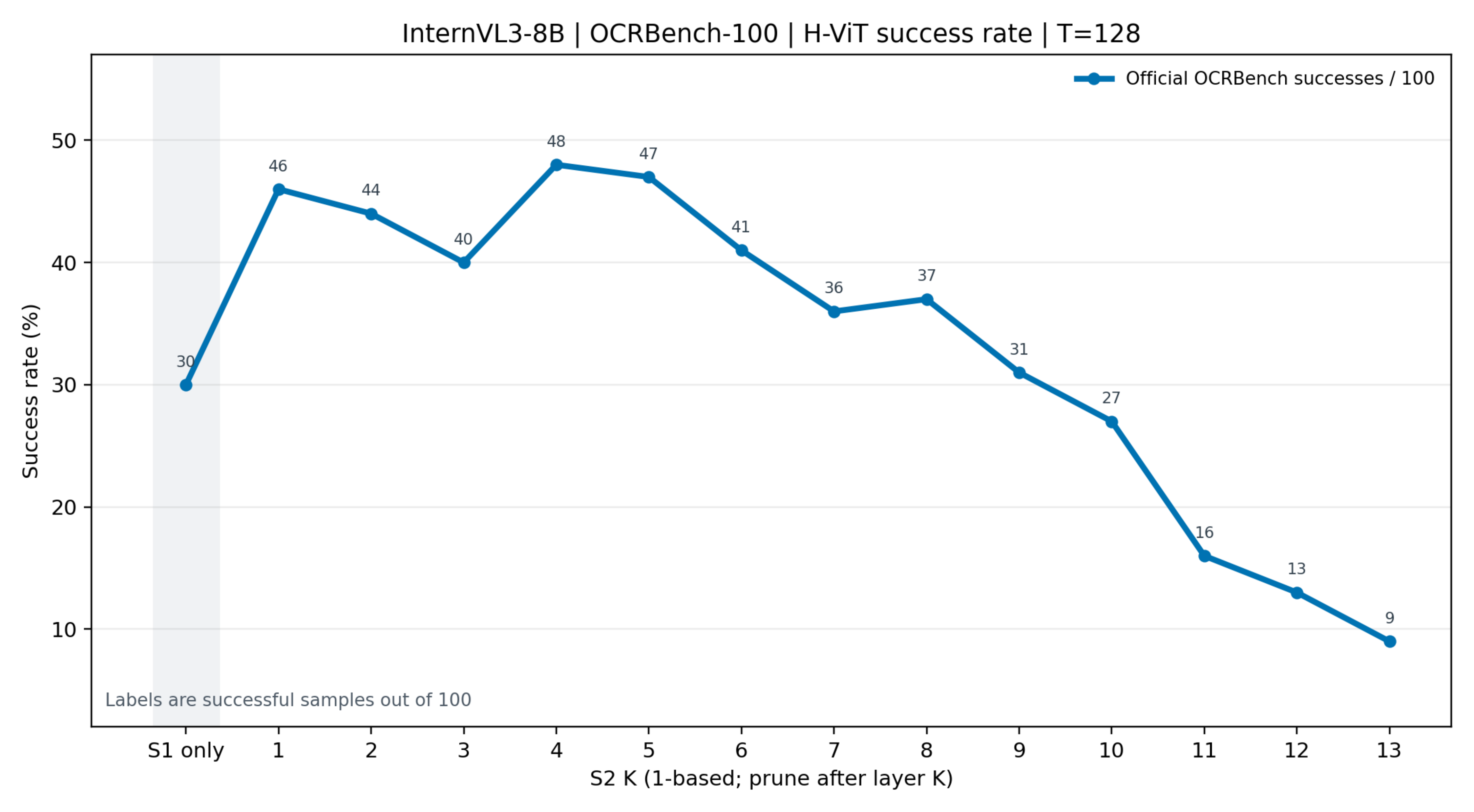}
        \caption{InternVL3-8B}
        \label{fig:ocrbench_layer_selection_intern}
    \end{subfigure}
    \caption{Representative refinement-layer selection for Qwen3-VL-8B and InternVL3-8B on 100 OCRBench samples at $T=128$.}
    \label{fig:ocrbench_layer_selection_qwen_intern}
\end{figure}

\FloatBarrier
\subsection{LLaVA Refinement-Layer Sweeps}
\label{sec:appendix_llava_layers}

Figures~\ref{fig:llava_layer_sensitivity}, \ref{fig:llava_layer_sensitivity_64_320}, and~\ref{fig:llava_layer_sensitivity_128_640} report the refinement-layer sweeps at $T=32/160$, $64/320$, and $128/640$, respectively, for LLaVA-1.5/LLaVA-NeXT. Each figure groups models by parameter scale, with one curve for LLaVA-1.5 and one for LLaVA-NeXT. The selected layers remain $K=7/8$ for LLaVA-1.5-7B/13B and $K=7/14$ for LLaVA-NeXT-7B/13B; they are not reselected from the downstream scores. The left/right panels show 7B/13B models and sweep $K=1$--$15$/$1$--$19$, respectively. Diamonds mark calibration-selected layers; stars mark sweep maxima, including ties. Curves use the ten-benchmark mean Acc$_{10}$ defined in Appendix~\ref{sec:appendix_full_results}, and refinement occurs after the one-based layer $K$.

At the most aggressive budgets ($T=32$ for LLaVA-1.5 and $T=160$ for LLaVA-NeXT), the calibration-selected layers on LLaVA-NeXT-7B/13B are within 0.23/0.05 Acc points of the observed sweep maxima; the corresponding gaps on LLaVA-1.5-7B/13B are 1.19/0.45 points. The calibration-selected layer therefore need not maximize the downstream ten-benchmark mean, although it is close to the observed peak on both NeXT models.

% The three sweep figures are rendered by visual-composer/llava-layer-sweeps/plot_layer_sweeps.py
% (7.2 x 2.0 in, shown at full text width with one-line captions): flat enough that all three sit
% close together under the section text.
{\centering
\includegraphics[width=\linewidth]{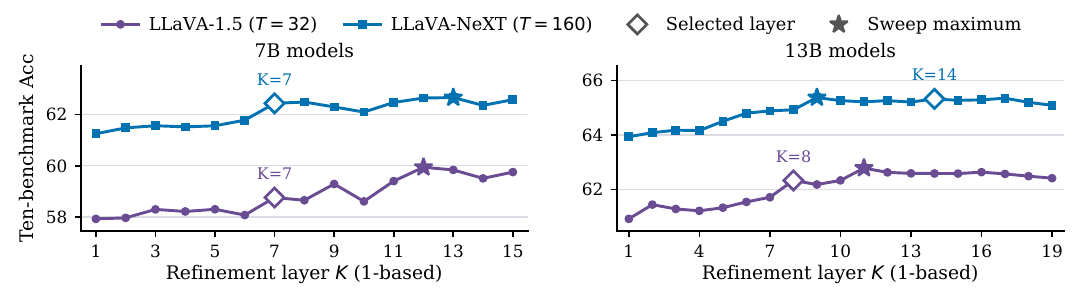}\par
\captionof{figure}{Refinement-layer sweeps at 94.4\% pruning: LLaVA-1.5 ($T=32$) and LLaVA-NeXT ($T=160$).}
\label{fig:llava_layer_sensitivity}\par}

{\centering
\includegraphics[width=\linewidth]{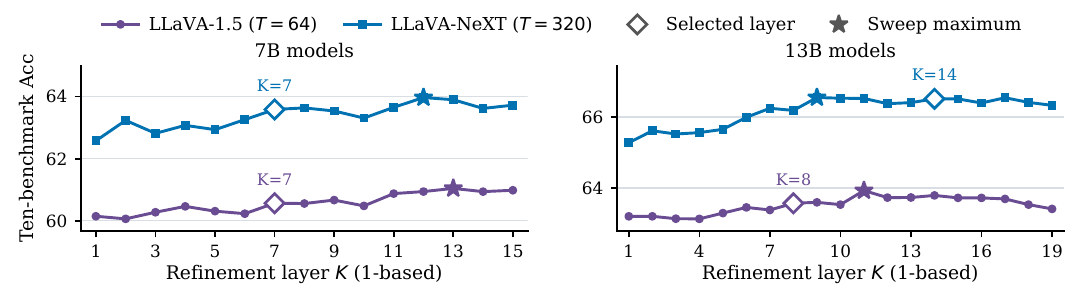}\par
\captionof{figure}{Refinement-layer sweeps at 88.9\% pruning: LLaVA-1.5 ($T=64$) and LLaVA-NeXT ($T=320$).}
\label{fig:llava_layer_sensitivity_64_320}\par}

{\centering
\includegraphics[width=\linewidth]{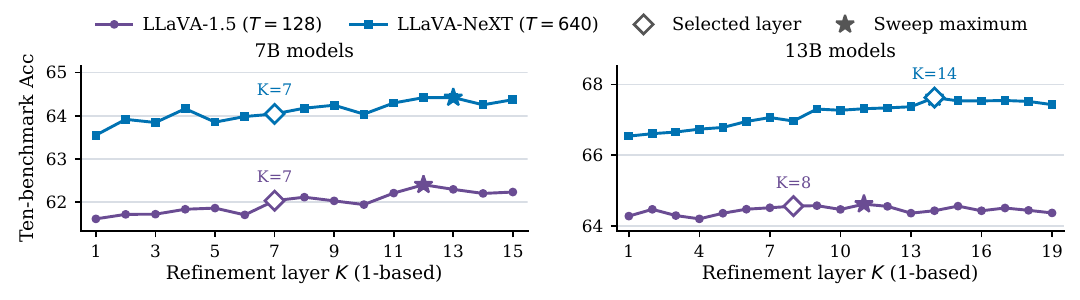}\par
\captionof{figure}{Refinement-layer sweeps at 77.8\% pruning: LLaVA-1.5 ($T=128$) and LLaVA-NeXT ($T=640$).}
\label{fig:llava_layer_sensitivity_128_640}\par}
\FloatBarrier

\subsection{Answer-Token KL Diagnostics}

Figure~\ref{fig:ocrbench_layer_selection_kl} reports answer-token KL divergence as a complementary diagnostic, showing the change in KL from the unpruned baseline relative to the Stage-1-only reference. For LLaVA, the refinement layers were selected from the OCRBench accuracy curves at $T=64$ (LLaVA-1.5) and $T=320$ (LLaVA-NeXT) in Figure~\ref{fig:ocrbench_layer_selection}; the KL diagnostic was run at the most aggressive budgets, $T=32$ and $T=160$, to check the selected layers there. The representative Qwen3 and InternVL3 panels use $T=128$ in both the accuracy and KL figures. Each $\Delta$KL subtracts Stage-1-only KL from the refined model's answer-token KL to the unpruned baseline; lower is better. Shading denotes paired-bootstrap 95\% confidence intervals.

\begin{figure}[!htbp]
    \centering
    \begin{subfigure}{0.24\linewidth}
        \centering
        \includegraphics[width=\linewidth]{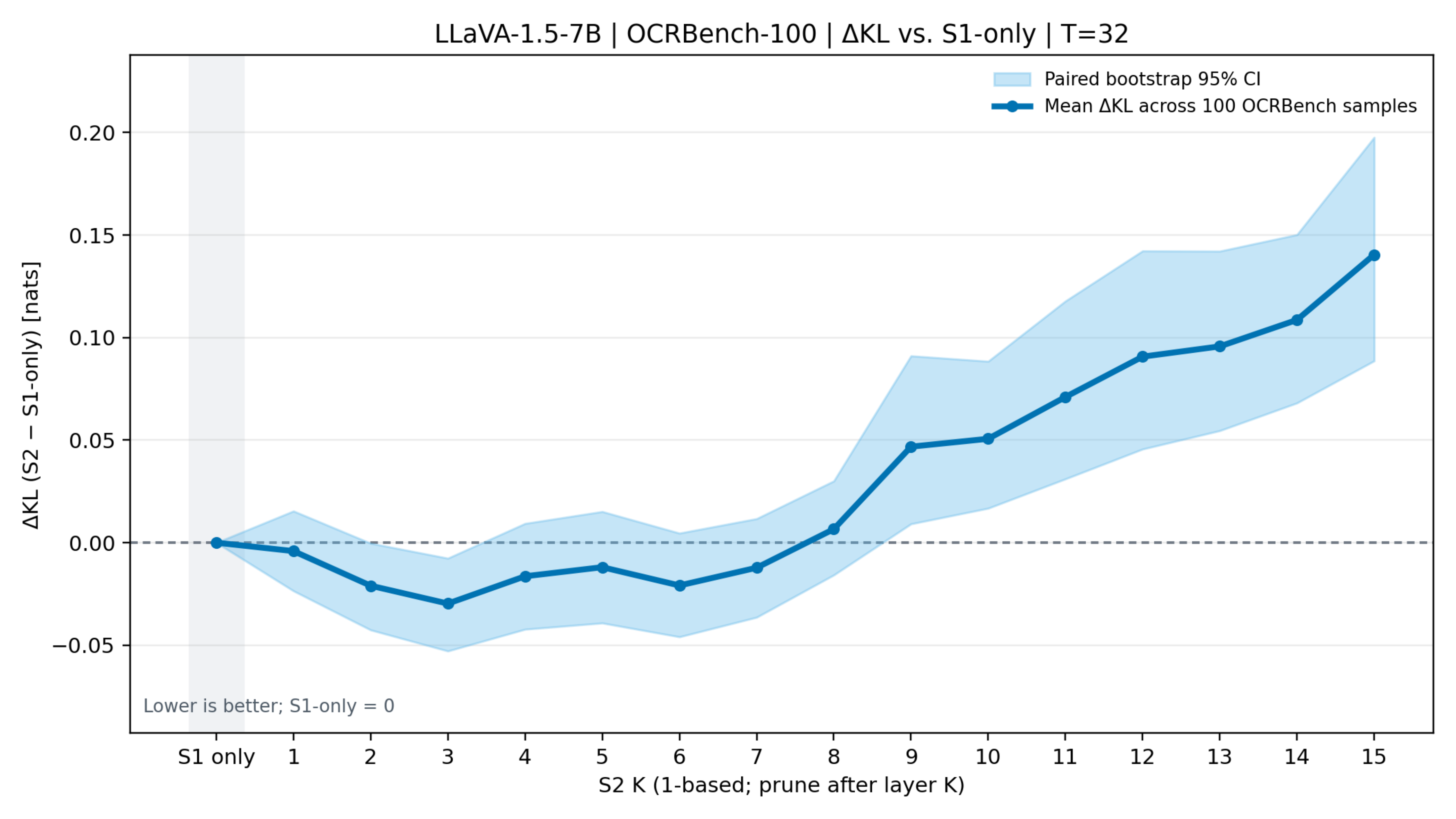}
        \caption{LLaVA-1.5-7B}
    \end{subfigure}\hfill
    \begin{subfigure}{0.24\linewidth}
        \centering
        \includegraphics[width=\linewidth]{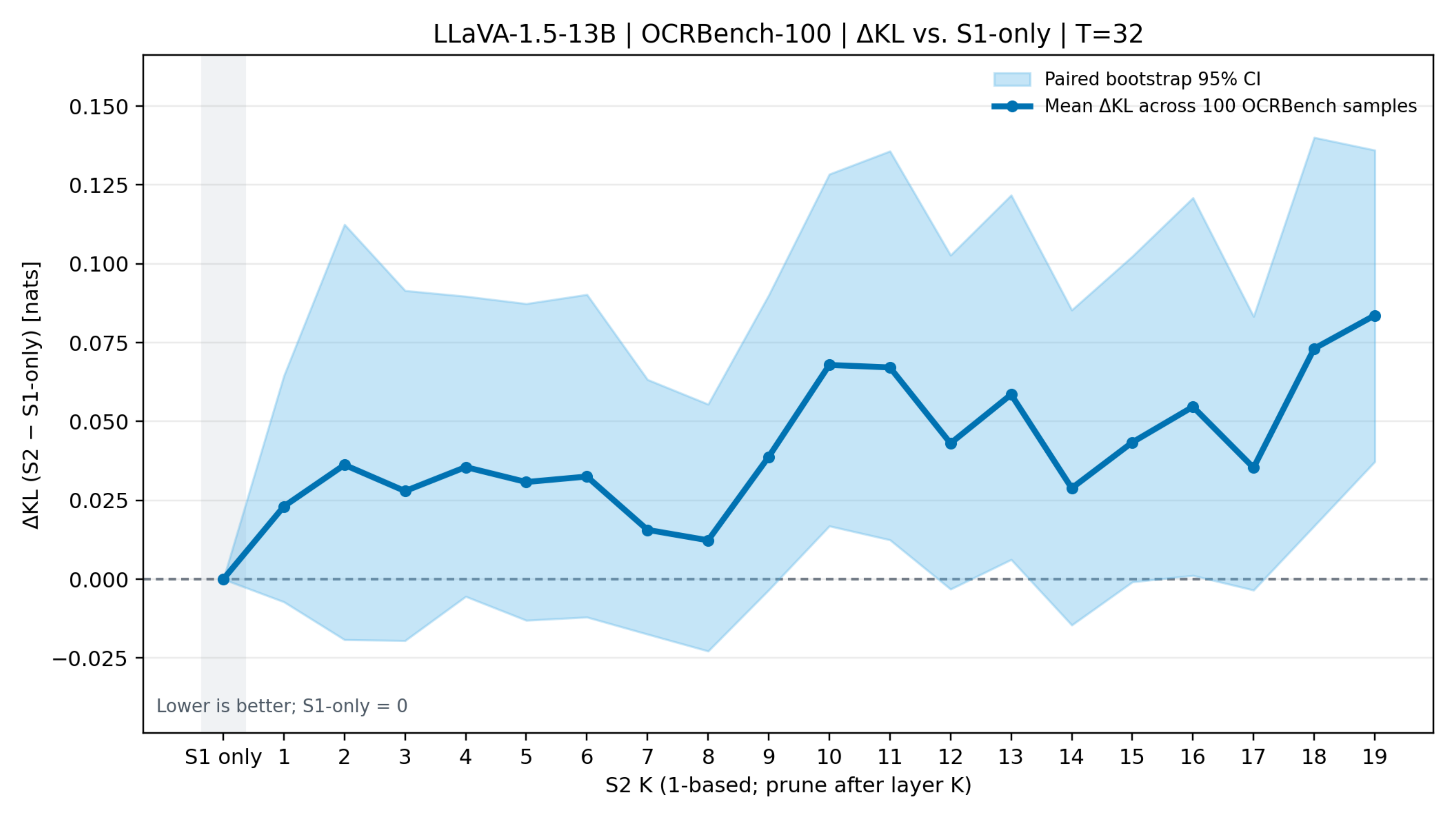}
        \caption{LLaVA-1.5-13B}
    \end{subfigure}\hfill
    \begin{subfigure}{0.24\linewidth}
        \centering
        \includegraphics[width=\linewidth]{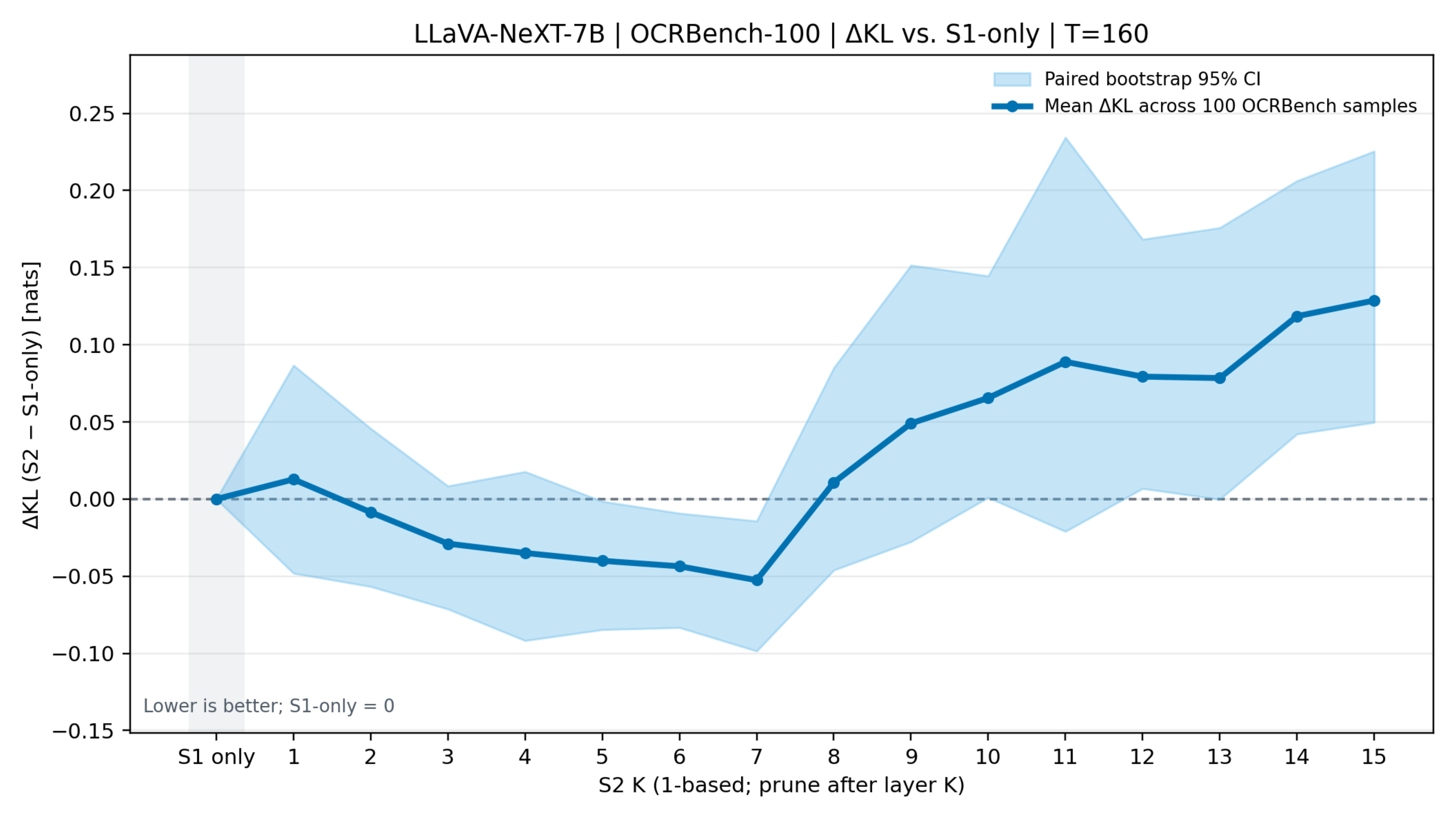}
        \caption{LLaVA-NeXT-7B}
    \end{subfigure}\hfill
    \begin{subfigure}{0.24\linewidth}
        \centering
        \includegraphics[width=\linewidth]{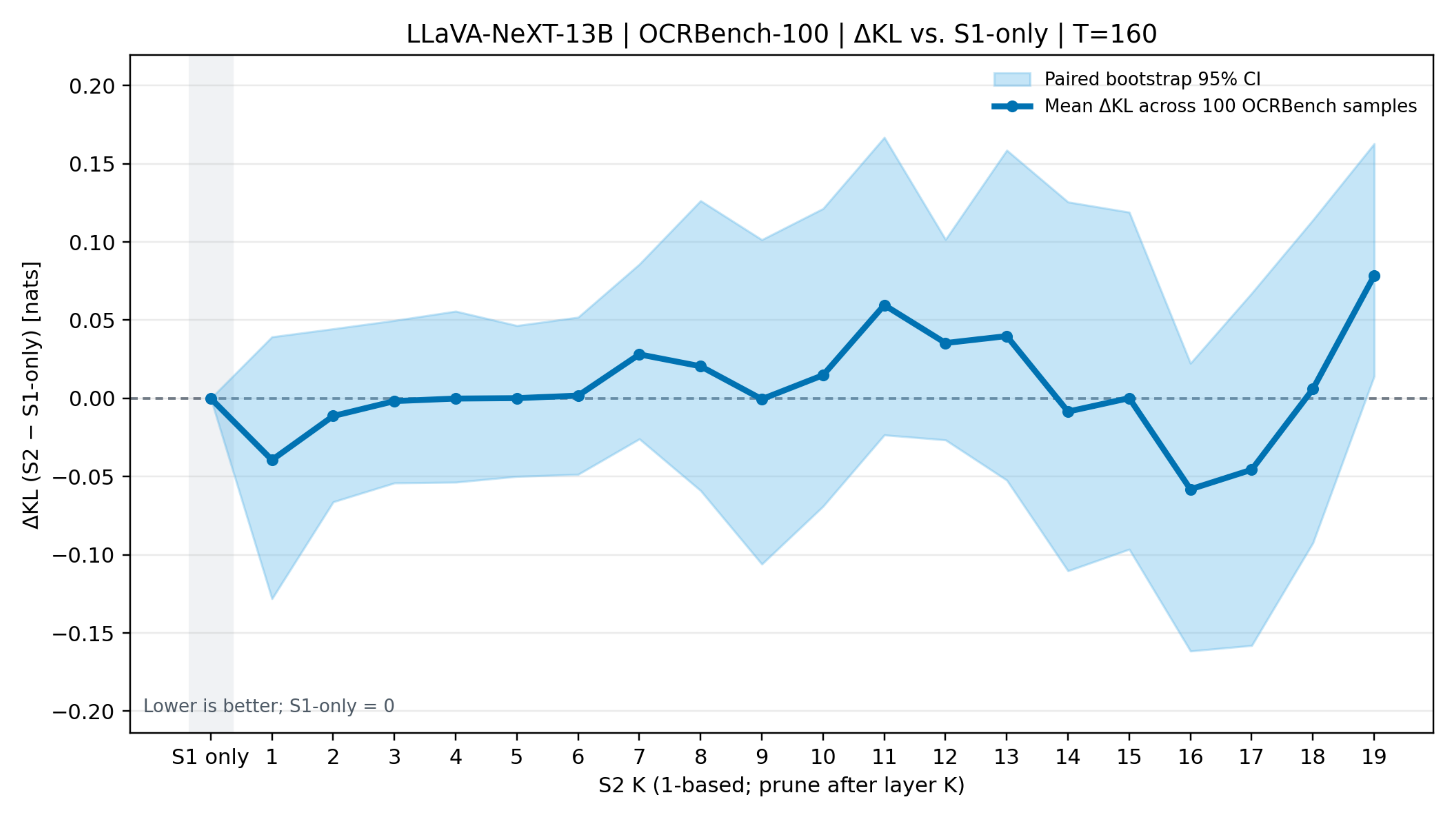}
        \caption{LLaVA-NeXT-13B}
    \end{subfigure}

    \vspace{4pt}
    \begin{subfigure}{0.48\linewidth}
        \centering
        \includegraphics[width=\linewidth]{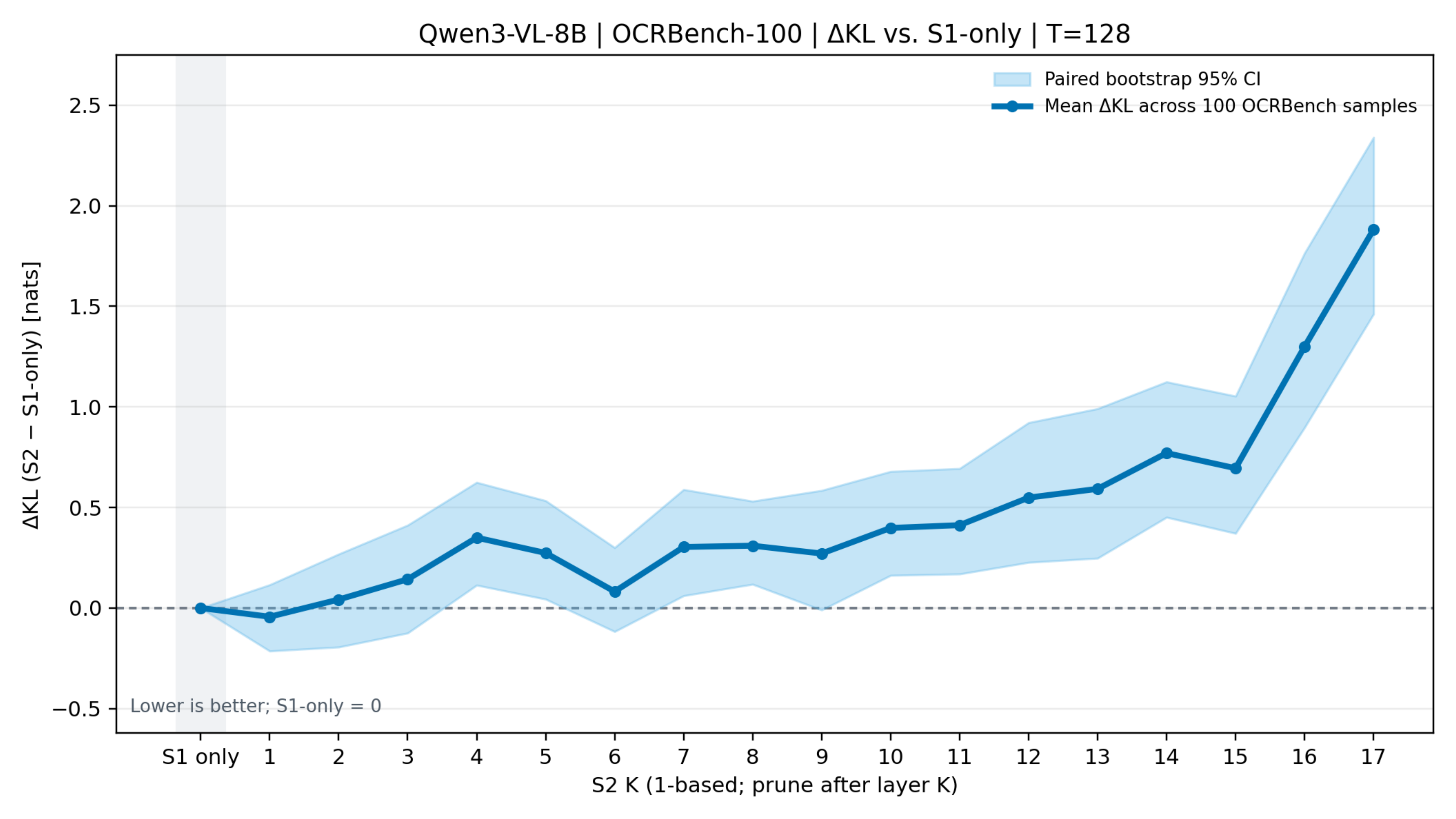}
        \caption{Qwen3-VL-8B}
    \end{subfigure}\hfill
    \begin{subfigure}{0.48\linewidth}
        \centering
        \includegraphics[width=\linewidth]{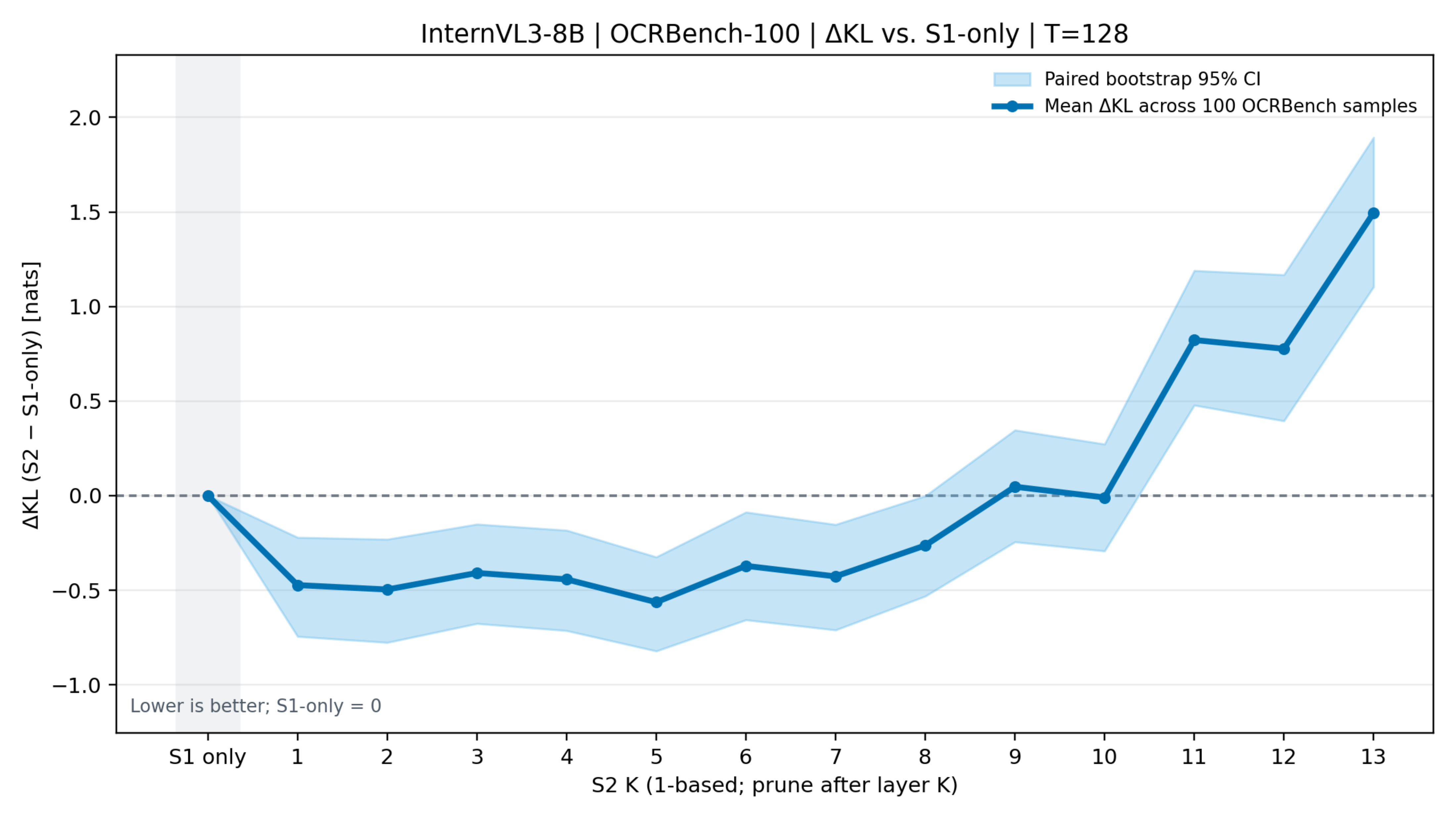}
        \caption{InternVL3-8B}
    \end{subfigure}
    \caption{Layer-wise $\Delta$KL on 100 OCRBench samples (lower is better); shading shows paired-bootstrap 95\% intervals.}
    \label{fig:ocrbench_layer_selection_kl}
\end{figure}
% No barrier here: the KL figure is tall, so let the short video subsection fill the page;
% the barrier in appendix.tex still flushes the figure before Appendix G.

\subsection{Video Refinement Layer}
\label{sec:appendix_video_layer}

For LLaVA-Video, we select the refinement layer once for the video setting and fix it at $K=6$ for all reported video budgets; it is not re-tuned per budget or per benchmark. The video results therefore use the same fixed configuration throughout the reported efficiency and performance comparisons.

\FloatBarrier

\section{Complete Image-Model Results}
\label{sec:appendix_full_results}

The main text reports detailed results for LLaVA-NeXT-13B and summarizes the remaining image models by family.  For lookup, this appendix keeps the complete benchmark scores in two cross-page tables: one for the LLaVA series and one for Qwen-VL and InternVL.  Each model is a labeled block within its family table, with the original budgets, methods, MiCo-S1 rows, and highlighting preserved.

\subsection{LLaVA Series}

The first longtable consolidates LLaVA-1.5-7B/13B and LLaVA-NeXT-7B/13B.  All four models share the same ten benchmark columns, followed by the aggregate Acc and relative retention (Rel).  FastV and SparseVLM keep every visual token through their first two decoder layers, so at the 94.4\% budget ($T=32$ on LLaVA-1.5, $T=160$ on LLaVA-NeXT) the layer-average accounting of Appendix~\ref{sec:appendix_budget} leaves them no feasible schedule on the 32-layer 7B models and only a degenerate one on the 13B models; they are therefore not reported at that budget, and Tables~\ref{tab:efficiency_next7b} and~\ref{tab:ablation_ocrbench} leave the same entries blank.

% Consolidated cross-page table for the complete LLaVA-series results.
\begingroup
\scriptsize
\setlength{\tabcolsep}{0pt}
\setlength{\LTleft}{0pt}
\setlength{\LTright}{0pt}
\setlength{\LTcapwidth}{\linewidth}
\renewcommand{\arraystretch}{1.02}
% [inline block 0: 1 envs, 20449 chars -> data_tex | \begin{longtable}{@{} >{\raggedright\arraybackslash}p{\dimexpr0.18\linewidth-2\tabcolsep\relax}%...]

\endgroup

\FloatBarrier
\subsection{Qwen-VL and InternVL Series}

The second longtable consolidates Qwen2.5-VL-7B, Qwen3-VL-8B, Qwen3.5-9B, and InternVL3-8B.  These models share the same eight benchmark columns, followed by Acc and Rel.  Qwen3.5 uses the fixed $K=12$ configuration of Appendix~\ref{sec:appendix_qwen35_implementation}.  VScan and SparseVLM are not reported on Qwen3.5: both prune inside the decoder at layers fixed by their released configurations, using those layers' self-attention maps, and in Qwen3.5 these layers fall on Gated DeltaNet (linear-attention) blocks that expose no attention maps. Moving their pruning layers to a full-attention block would change the methods as published, so we do not report them.

% Consolidated cross-page table for the complete Qwen-VL and InternVL results.
\begingroup
\scriptsize
\setlength{\tabcolsep}{0pt}
\setlength{\LTleft}{0pt}
\setlength{\LTright}{0pt}
\setlength{\LTcapwidth}{\linewidth}
\renewcommand{\arraystretch}{1.02}
\begin{longtable}{@{}
>{\raggedright\arraybackslash}p{\dimexpr0.18\linewidth-2\tabcolsep\relax}%
*{8}{>{\centering\arraybackslash}p{\dimexpr0.075\linewidth-2\tabcolsep\relax}}%
>{\centering\arraybackslash}p{\dimexpr0.11\linewidth-2\tabcolsep\relax}%
>{\centering\arraybackslash}p{\dimexpr0.11\linewidth-2\tabcolsep\relax}@{}}
\caption{Complete benchmark results for Qwen-VL and InternVL models, including MiCo-S1.}\label{tab:appendix_qwen_intern_full}\\
\toprule
Method & AI2D & POPE & HallB & MME & MMB-EN & MMB-CN & MMStar & SQA-IMG & Acc & Rel \\
\midrule
\endfirsthead
\multicolumn{11}{l}{\textit{Table \thetable\ (continued): Complete benchmark results for Qwen-VL and InternVL models, including MiCo-S1}}\\
\toprule
Method & AI2D & POPE & HallB & MME & MMB-EN & MMB-CN & MMStar & SQA-IMG & Acc & Rel \\
\midrule
\endhead
\midrule
\multicolumn{11}{r}{\textit{Continued on next page}}\\
\endfoot
\bottomrule
\endlastfoot
\rowcolor{black!4}
\multicolumn{11}{c}{\textbf{Qwen2.5-VL-7B} \label{tab:pruning_comparison_qwen25_full}}\\
\midrule
Vanilla & 84.9 & 87.7 & 55.9 & 2302.0 & 84.8 & 82.9 & 65.5 & 86.8 & 83.0 & 100.0\% \\
\midrule
\multicolumn{11}{c}{\textit{Retain 256 tokens (80.2\% pruned)}} \\
\midrule
FastV (ECCV'24) & 78.4 & 83.0 & 49.1 & 2169.0 & 80.5 & 78.8 & 55.5 & 83.6 & 77.2 & 93.0\% \\
SparseVLM (ICML'25) & 77.6 & 82.9 & 46.4 & 2207.5 & 80.9 & 79.4 & 55.5 & \textbf{87.7} & 77.6 & 93.5\% \\
DivPrune (CVPR'25) & 81.2 & 85.3 & 46.6 & 2167.0 & 81.8 & 80.9 & 57.9 & 84.8 & 78.4 & 94.5\% \\
VisionZip (CVPR'25) & 76.0 & 82.1 & 42.1 & 2101.2 & 78.7 & 78.6 & 54.1 & 82.8 & 74.9 & 90.3\% \\
VisPruner (ICCV'25) & 77.7 & 84.1 & 41.8 & 2084.1 & 80.7 & 79.7 & 56.3 & 84.6 & 76.1 & 91.8\% \\
VScan (TMLR'26) & 79.1 & \textbf{86.6} & \underline{49.7} & 2223.4 & 80.6 & 80.2 & 57.1 & 84.4 & 78.6 & 94.8\% \\
HoloV (NeurIPS'25) & \underline{82.5} & 86.2 & 47.5 & 2229.1 & 81.8 & \underline{81.0} & 58.9 & 85.1 & 79.3 & 95.6\% \\
MMTok (ICLR'26) & 81.0 & 85.7 & 47.9 & 2245.6 & \underline{83.1} & \underline{81.0} & 59.1 & 85.5 & 79.4 & 95.8\% \\
ApET (CVPR'26) & 77.8 & 82.6 & 44.4 & 2174.7 & 79.0 & 78.2 & 53.7 & 83.7 & 76.0 & 91.7\% \\
\rowcolor{micoPurple}
\textbf{MiCo-S1} & 81.7 & 86.1 & 48.6 & \underline{2282.2} & 82.0 & 80.5 & \underline{59.5} & 85.5 & \underline{79.8} & \underline{96.1\%} \\
\rowcolor{micoPurple}
\textbf{MiCo} & \textbf{83.8} & \underline{86.3} & \textbf{51.2} & \textbf{2291.0} & \textbf{83.2} & \textbf{81.3} & \textbf{60.1} & \underline{86.2} & \textbf{80.8} & \textbf{97.4\%} \\
\midrule
\multicolumn{11}{c}{\textit{Retain 128 tokens (90.1\% pruned)}} \\
\midrule
FastV (ECCV'24) & 69.9 & 67.7 & 41.0 & 1597.0 & 66.4 & 68.8 & 43.7 & 79.6 & 64.6 & 77.9\% \\
SparseVLM (ICML'25) & 70.7 & 72.1 & 37.7 & 1815.3 & 72.9 & 74.1 & 45.7 & 81.9 & 68.2 & 82.3\% \\
DivPrune (CVPR'25) & 75.9 & 83.9 & 44.5 & 2044.0 & 79.2 & \underline{78.6} & 52.3 & 82.8 & 74.9 & 90.3\% \\
VisionZip (CVPR'25) & 74.1 & 79.9 & 38.8 & 2013.4 & 79.3 & 76.4 & 52.7 & 80.9 & 72.8 & 87.8\% \\
VisPruner (ICCV'25) & 73.8 & 80.1 & 39.7 & 2007.8 & 78.7 & 77.1 & 52.3 & 82.1 & 73.0 & 88.0\% \\
VScan (TMLR'26) & 75.4 & \underline{84.8} & \underline{46.4} & 2024.5 & 79.2 & 78.0 & 53.5 & 82.9 & 75.2 & 90.6\% \\
HoloV (NeurIPS'25) & 74.3 & 80.3 & 39.3 & 1987.9 & 79.1 & 77.4 & 51.7 & 82.3 & 73.0 & 88.0\% \\
MMTok (ICLR'26) & 76.3 & 84.2 & 43.3 & 2120.7 & 79.8 & 77.5 & 53.7 & 81.4 & 75.3 & 90.8\% \\
ApET (CVPR'26) & 70.5 & 76.7 & 36.4 & 1911.3 & 74.6 & 73.4 & 47.7 & 79.3 & 69.3 & 83.5\% \\
\rowcolor{micoPurple}
\textbf{MiCo-S1} & \underline{78.8} & 84.3 & 46.3 & \underline{2176.4} & \underline{80.6} & 78.4 & \underline{54.6} & \underline{83.5} & \underline{76.9} & \underline{92.7\%} \\
\rowcolor{micoPurple}
\textbf{MiCo} & \textbf{80.4} & \textbf{85.5} & \textbf{48.6} & \textbf{2252.4} & \textbf{80.8} & \textbf{79.1} & \textbf{56.5} & \textbf{84.6} & \textbf{78.5} & \textbf{94.7\%} \\
\midrule
\rowcolor{black!4}
\multicolumn{11}{c}{\textbf{Qwen3-VL-8B} \label{tab:pruning_comparison_qwen3}}\\
\midrule
Vanilla & 84.1 & 89.7 & 56.2 & 2406.0 & 86.3 & 86.1 & 67.1 & 94.6 & 85.6 & 100.0\% \\
\midrule
\multicolumn{11}{c}{\textit{Retain 256 tokens}} \\
\midrule
FastV (ECCV'24) & 69.3 & 81.7 & 43.0 & 1913.0 & 77.3 & 76.7 & 51.1 & 82.0 & 72.1 & 84.3\% \\
SparseVLM (ICML'25) & 78.8 & 86.8 & 46.5 & 2304.5 & 83.8 & \textbf{83.7} & 58.3 & 91.2 & 80.5 & 94.1\% \\
DivPrune (CVPR'25) & 80.0 & 89.4 & 46.9 & 2236.0 & 83.6 & 82.3 & 58.3 & 88.0 & 80.0 & 93.6\% \\
VisionZip (CVPR'25) & 81.4 & 89.0 & 50.4 & 2334.2 & 83.8 & \underline{83.6} & 60.2 & \textbf{92.0} & 82.1 & 96.0\% \\
VisPruner (ICCV'25) & 79.9 & 89.2 & 47.7 & 2238.0 & 83.8 & 82.9 & 59.3 & 90.1 & 80.6 & 94.2\% \\
VScan (TMLR'26) & 80.2 & 87.8 & \textbf{51.8} & 2272.0 & \textbf{84.6} & 82.6 & 61.1 & 91.0 & 81.6 & 95.4\% \\
HoloV (NeurIPS'25) & \textbf{82.0} & 88.9 & 50.3 & 2305.2 & 83.4 & 83.5 & \underline{61.2} & \underline{91.6} & 82.0 & 95.9\% \\
MMTok (ICLR'26) & 81.6 & 89.3 & 49.6 & 2331.1 & \underline{84.5} & 82.6 & \textbf{61.4} & 90.2 & 82.0 & 95.8\% \\
ApET (CVPR'26) & 78.6 & \underline{89.5} & 47.0 & 2243.1 & 83.8 & 81.7 & 59.4 & 90.8 & 80.4 & 93.9\% \\
\rowcolor{micoPurple}
\textbf{MiCo-S1} & \underline{81.8} & 89.3 & 51.3 & \underline{2338.9} & 83.6 & 82.5 & 61.0 & 91.0 & \underline{82.2} & \underline{96.1\%} \\
\rowcolor{micoPurple}
\textbf{MiCo} & 81.6 & \textbf{89.6} & \underline{51.4} & \textbf{2375.3} & 84.3 & \underline{83.6} & 59.7 & 91.4 & \textbf{82.5} & \textbf{96.5\%} \\
\midrule
\multicolumn{11}{c}{\textit{Retain 128 tokens}} \\
\midrule
FastV (ECCV'24) & 66.3 & 57.2 & 30.3 & 1312.0 & 51.6 & 50.0 & 38.3 & 76.4 & 54.5 & 63.7\% \\
SparseVLM (ICML'25) & 72.2 & 81.2 & 40.1 & 1941.4 & 78.4 & 77.8 & 50.1 & 85.5 & 72.8 & 85.1\% \\
DivPrune (CVPR'25) & 74.4 & 88.3 & 42.6 & 2089.0 & 80.8 & 79.5 & 52.2 & 83.8 & 75.8 & 88.6\% \\
VisionZip (CVPR'25) & 75.8 & 86.7 & 41.5 & 2142.4 & 80.8 & 80.0 & 53.9 & 87.2 & 76.6 & 89.6\% \\
VisPruner (ICCV'25) & 72.4 & 86.3 & 40.5 & 1997.3 & 79.8 & 78.1 & 52.3 & 84.9 & 74.3 & 86.8\% \\
VScan (TMLR'26) & 76.2 & 85.6 & 44.1 & 2146.1 & 81.2 & \underline{80.1} & 55.3 & 86.8 & 77.1 & 90.1\% \\
HoloV (NeurIPS'25) & \textbf{79.5} & 87.2 & 42.1 & 2126.0 & 81.4 & 79.8 & \textbf{55.9} & 85.7 & 77.2 & 90.3\% \\
MMTok (ICLR'26) & 77.5 & \textbf{89.2} & 46.0 & 2223.7 & \textbf{82.3} & 79.0 & \underline{55.8} & 87.3 & 78.5 & 91.8\% \\
ApET (CVPR'26) & 71.5 & \underline{89.1} & 42.9 & 2044.0 & 79.2 & 77.7 & 53.5 & 87.3 & 75.4 & 88.2\% \\
\rowcolor{micoPurple}
\textbf{MiCo-S1} & 77.6 & 88.4 & \underline{46.5} & \textbf{2281.7} & \underline{81.8} & 79.3 & 55.7 & \textbf{88.2} & \underline{78.9} & \underline{92.3\%} \\
\rowcolor{micoPurple}
\textbf{MiCo} & \underline{78.7} & 88.6 & \textbf{48.3} & \underline{2268.6} & 81.5 & \textbf{81.0} & 55.4 & \underline{88.1} & \textbf{79.4} & \textbf{92.8\%} \\
\midrule
\rowcolor{black!4}
\multicolumn{11}{c}{\textbf{Qwen3.5-9B} \label{tab:pruning_comparison_qwen35}}\\
\midrule
Vanilla & 89.2 & 90.8 & 56.8 & 2381.5 & 86.4 & 85.7 & 69.8 & 95.2 & 86.6 & 100.0\% \\
\midrule
\multicolumn{11}{c}{\textit{Retain 256 tokens}} \\
\midrule
FastV (ECCV'24) & 73.9 & 84.1 & 42.8 & 1835.3 & 75.5 & 76.0 & 53.1 & 85.2 & 72.8 & 84.0\% \\
DivPrune (CVPR'25) & 86.6 & 90.4 & 50.6 & 2250.2 & 85.0 & 84.0 & 63.7 & 91.6 & 83.0 & 95.9\% \\
VisionZip (CVPR'25) & 87.1 & 90.4 & 54.0 & \textbf{2352.7} & 85.7 & 84.7 & 65.3 & \textbf{94.1} & \underline{84.9} & 98.0\% \\
VisPruner (ICCV'25) & 86.7 & 90.1 & 54.4 & 2286.9 & \underline{85.9} & \underline{85.1} & 62.9 & 92.9 & 84.1 & 97.0\% \\
HoloV (NeurIPS'25) & \textbf{87.4} & 90.2 & 54.3 & 2340.3 & 85.7 & 84.6 & 65.5 & \underline{93.8} & 84.8 & 97.9\% \\
MMTok (ICLR'26) & 87.0 & \underline{90.7} & 54.7 & 2334.5 & 85.1 & 83.2 & 64.7 & 91.9 & 84.3 & 97.3\% \\
ApET (CVPR'26) & \underline{87.3} & \textbf{90.8} & 53.4 & 2331.0 & 85.8 & 84.6 & \textbf{67.5} & 93.5 & \underline{84.9} & \underline{98.1\%} \\
\rowcolor{micoPurple}
\textbf{MiCo-S1} & 87.2 & 90.5 & \underline{54.9} & 2335.5 & 85.1 & 84.2 & 63.8 & 92.7 & 84.4 & 97.4\% \\
\rowcolor{micoPurple}
\textbf{MiCo} & 87.2 & \textbf{90.8} & \textbf{55.5} & \underline{2350.7} & \textbf{86.5} & \textbf{85.6} & \underline{67.1} & \underline{93.8} & \textbf{85.5} & \textbf{98.7\%} \\
\midrule
\multicolumn{11}{c}{\textit{Retain 128 tokens}} \\
\midrule
FastV (ECCV'24) & 67.0 & 62.7 & 31.5 & 1211.4 & 49.7 & 49.6 & 39.0 & 75.3 & 54.4 & 62.8\% \\
DivPrune (CVPR'25) & 82.0 & 89.7 & 44.6 & 2160.4 & \underline{83.8} & 81.6 & 57.4 & 89.2 & 79.5 & 91.8\% \\
VisionZip (CVPR'25) & 82.1 & 88.4 & 46.8 & 2140.3 & 83.1 & 81.8 & 58.0 & 90.4 & 79.7 & 92.0\% \\
VisPruner (ICCV'25) & 81.5 & 88.1 & 44.0 & 2069.2 & 81.7 & 79.8 & 56.2 & 89.2 & 78.0 & 90.1\% \\
HoloV (NeurIPS'25) & 82.4 & 88.7 & 48.8 & 2109.5 & 82.6 & \underline{82.5} & 59.2 & 89.7 & 79.9 & 92.3\% \\
MMTok (ICLR'26) & \underline{83.7} & \underline{90.5} & 47.8 & 2177.8 & 82.1 & 80.0 & \underline{60.5} & 89.5 & 80.4 & 92.8\% \\
ApET (CVPR'26) & 80.2 & \textbf{90.7} & 49.4 & 2141.8 & 83.0 & 81.9 & 59.7 & \underline{91.1} & 80.4 & 92.8\% \\
\rowcolor{micoPurple}
\textbf{MiCo-S1} & 83.6 & 89.7 & \underline{49.8} & \underline{2193.5} & 82.0 & 80.7 & 58.8 & 89.7 & \underline{80.5} & \underline{92.9\%} \\
\rowcolor{micoPurple}
\textbf{MiCo} & \textbf{85.6} & \textbf{90.7} & \textbf{51.4} & \textbf{2306.4} & \textbf{85.2} & \textbf{83.8} & \textbf{61.7} & \textbf{91.4} & \textbf{83.2} & \textbf{96.0\%} \\
\midrule
\rowcolor{black!4}
\multicolumn{11}{c}{\textbf{InternVL3-8B} \label{tab:pruning_comparison_internvl3_full}}\\
\midrule
Vanilla & 85.1 & 90.7 & 49.4 & 2369.0 & 85.7 & 85.1 & 68.3 & 97.8 & 85.1 & 100.0\% \\
\midrule
\multicolumn{11}{c}{\textit{Retain 256 tokens}} \\
\midrule
FastV (ECCV'24) & \underline{80.5} & 89.1 & 44.0 & \textbf{2289.0} & 83.6 & \underline{83.7} & \underline{61.2} & 93.3 & \underline{81.2} & \underline{95.5\%} \\
SparseVLM (ICML'25) & 75.4 & 86.3 & 36.7 & 2217.6 & 81.3 & 81.8 & 57.3 & 89.8 & 77.4 & 91.0\% \\
DivPrune (CVPR'25) & 80.3 & 89.8 & 43.0 & 2178.0 & 81.9 & 80.5 & 58.9 & 91.8 & 79.4 & 93.3\% \\
VisionZip (CVPR'25) & 76.0 & 87.0 & 40.0 & 2148.8 & 82.0 & 81.2 & 55.1 & 90.7 & 77.4 & 91.0\% \\
VisPruner (ICCV'25) & 72.3 & 88.9 & 37.6 & 2037.7 & 81.2 & 79.5 & 56.5 & 90.8 & 76.1 & 89.4\% \\
VScan (TMLR'26) & 76.7 & 87.6 & 41.4 & 2230.0 & 82.4 & 80.5 & 58.5 & 93.9 & 79.1 & 92.9\% \\
HoloV (NeurIPS'25) & 77.9 & 87.5 & 41.7 & 2194.0 & 82.1 & 80.8 & 57.0 & 91.8 & 78.6 & 92.4\% \\
MMTok (ICLR'26) & 80.2 & 90.0 & 41.1 & 2268.0 & 81.8 & 82.0 & 58.5 & \underline{94.0} & 80.1 & 94.2\% \\
ApET (CVPR'26) & 77.9 & \underline{90.5} & 42.5 & 2228.8 & 80.4 & 79.0 & 58.2 & 92.0 & 79.0 & 92.9\% \\
\rowcolor{micoPurple}
\textbf{MiCo-S1} & 80.1 & \textbf{90.8} & \underline{44.1} & \underline{2279.1} & \underline{83.7} & 83.2 & 59.7 & 93.4 & 81.1 & 95.4\% \\
\rowcolor{micoPurple}
\textbf{MiCo} & \textbf{82.9} & 90.2 & \textbf{44.4} & 2276.1 & \textbf{84.7} & \textbf{84.9} & \textbf{62.9} & \textbf{95.2} & \textbf{82.4} & \textbf{96.8\%} \\
\midrule
\multicolumn{11}{c}{\textit{Retain 128 tokens}} \\
\midrule
FastV (ECCV'24) & 68.4 & 78.7 & \underline{38.9} & 1807.0 & 73.7 & 73.5 & 47.3 & 83.8 & 69.3 & 81.5\% \\
SparseVLM (ICML'25) & 69.7 & 79.6 & 32.4 & 1849.4 & 75.3 & 75.1 & 49.0 & 83.5 & 69.6 & 81.9\% \\
DivPrune (CVPR'25) & 74.2 & 88.8 & 37.6 & 2051.0 & 78.3 & 75.7 & 52.0 & 87.5 & 74.6 & 87.7\% \\
VisionZip (CVPR'25) & 68.5 & 81.5 & 32.7 & 1864.7 & 74.7 & 73.5 & 49.2 & 81.7 & 69.4 & 81.5\% \\
VisPruner (ICCV'25) & 68.5 & 84.4 & 33.0 & 1769.8 & 72.9 & 72.8 & 49.0 & 82.8 & 69.0 & 81.1\% \\
VScan (TMLR'26) & 71.9 & 84.7 & 37.8 & 2028.0 & 78.8 & 76.9 & 53.9 & \underline{89.0} & 74.3 & 87.3\% \\
HoloV (NeurIPS'25) & 72.4 & 84.1 & 37.9 & 1968.0 & 78.4 & 77.4 & 53.2 & 86.7 & 73.6 & 86.5\% \\
MMTok (ICLR'26) & \underline{74.7} & 88.6 & 37.9 & \underline{2128.9} & 78.3 & 77.4 & 52.9 & 87.8 & 75.5 & 88.8\% \\
ApET (CVPR'26) & 74.3 & \textbf{89.7} & 38.1 & 2087.9 & 76.2 & 75.9 & 53.9 & 87.5 & 75.0 & 88.2\% \\
\rowcolor{micoPurple}
\textbf{MiCo-S1} & 73.9 & \underline{89.3} & 36.8 & 2112.5 & \underline{81.0} & \underline{79.0} & \underline{55.5} & 88.9 & \underline{76.2} & \underline{89.6\%} \\
\rowcolor{micoPurple}
\textbf{MiCo} & \textbf{77.4} & \textbf{89.7} & \textbf{40.7} & \textbf{2260.5} & \textbf{82.8} & \textbf{81.4} & \textbf{58.2} & \textbf{91.5} & \textbf{79.3} & \textbf{93.3\%} \\
\end{longtable}
\endgroup

\FloatBarrier
\subsection{Refinement-Layer Sweep Tables}

The following two long tables report all 204 LLaVA model--budget--layer settings: 15 layers for each 7B model and 19 for each 13B model, across three budgets. Acc$_{10}$ is the equal-weight mean of GQA, SQA-IMG, TextVQA, POPE, MME$/20$, MMBench-EN, MMBench-CN, SEED, AI2D, and MMMU. These sweeps use POPE average F1, MME Perception, circular MMBench scoring, image-only SEED, and the recorded single-image MMMU protocol. Rel normalizes Acc$_{10}$ by the corresponding unpruned reference. Aggregation preserves available source precision: reused scores may already be rounded, whereas newly measured scores retain evaluator precision. Small differences at the displayed precision should not be interpreted as distinct robust peaks. Pale red rows mark the original selected layers, and score highlighting marks the highest and second-highest distinct displayed values within each sweep.

% Regrouped from the twelve layer-sweep floats into VTC-style cross-page tables.
% Each section is one model/budget setting; values and highlighted rows are unchanged.
\begingroup
\scriptsize
\setlength{\tabcolsep}{1.25pt}
\setlength{\LTleft}{0pt}
\setlength{\LTright}{0pt}
\setlength{\LTcapwidth}{\linewidth}
\renewcommand{\arraystretch}{1.0}
\begin{longtable}{>{\centering\arraybackslash}p{\dimexpr0.05\linewidth-2\tabcolsep\relax}
  *{12}{>{\raggedleft\arraybackslash}p{\dimexpr0.0791667\linewidth-2\tabcolsep\relax}}}
\caption{Complete refinement-layer sweeps for LLaVA-1.5 models across the three reported budgets.}
\label{tab:layer_sweeps_l15}\\
\toprule
$K$ & GQA & \shortstack{SQA\\IMG} & TextVQA & POPE & MME & \shortstack{MMB\\EN} & \shortstack{MMB\\CN} & SEED & AI2D & MMMU & Acc & Rel \\
\midrule
\endfirsthead
\multicolumn{13}{l}{\textit{Table \thetable\ (continued): LLaVA-1.5 refinement-layer sweeps}}\\
\toprule
$K$ & GQA & \shortstack{SQA\\IMG} & TextVQA & POPE & MME & \shortstack{MMB\\EN} & \shortstack{MMB\\CN} & SEED & AI2D & MMMU & Acc & Rel \\
\midrule
\endhead
\midrule
\multicolumn{13}{r}{\textit{Continued on next page}}\\
\endfoot
\bottomrule
\endlastfoot
\rowcolor{black!4}
\multicolumn{13}{c}{\textbf{LLaVA-1.5-7B \quad $T=32$}}\\
\midrule
Vanilla & 61.9 & 69.5 & 58.2 & 85.9 & 1508.8 & 64.7 & 58.1 & 66.0 & 55.5 & 35.0 & 63.0 & 100.0\% \\
\midrule
1 & 56.1 & \underline{69.0} & 54.3 & 80.4 & 1295.0 & 58.9 & 52.2 & 57.5 & 52.7 & 33.3 & 57.9 & 91.9\% \\
2 & 55.6 & 68.6 & 54.7 & 79.7 & 1298.2 & 58.4 & 52.2 & 57.8 & 53.3 & 34.4 & 58.0 & 92.0\% \\
3 & 55.6 & 68.8 & 54.9 & 79.0 & 1336.5 & 59.7 & 53.0 & 57.9 & 53.0 & 34.2 & 58.3 & 92.5\% \\
4 & 55.5 & 68.8 & 54.6 & 78.7 & 1323.3 & 59.5 & 52.7 & 57.9 & 53.0 & \underline{35.2} & 58.2 & 92.4\% \\
5 & 55.7 & 68.8 & 54.6 & 78.7 & 1331.1 & 59.4 & 53.4 & 57.9 & 52.9 & 34.9 & 58.3 & 92.5\% \\
6 & 55.6 & 68.7 & \underline{55.0} & 78.8 & 1297.6 & 58.8 & 53.4 & 58.2 & 52.6 & 34.8 & 58.1 & 92.1\% \\
\rowcolor{micoPurple}
7 & 55.9 & 68.7 & \textbf{55.2} & 80.9 & 1355.0 & 59.2 & 54.0 & 58.2 & \underline{53.6} & 34.1 & 58.8 & 93.3\% \\
8 & 56.0 & 68.6 & 54.6 & 81.2 & 1332.6 & 59.0 & 53.7 & 58.2 & \textbf{53.7} & 34.9 & 58.7 & 93.1\% \\
9 & \underline{56.6} & 68.9 & 54.6 & 82.1 & \underline{1398.6} & 59.9 & 53.8 & 59.1 & 53.2 & 34.9 & 59.3 & 94.1\% \\
10 & 55.9 & 68.3 & 54.4 & 80.4 & 1357.4 & 59.9 & 52.9 & 59.1 & 53.1 & 34.0 & 58.6 & 93.0\% \\
11 & 56.3 & 68.9 & 54.6 & 83.2 & 1394.7 & 59.5 & 53.8 & 59.8 & 53.0 & 35.1 & 59.4 & 94.2\% \\
12 & \textbf{57.1} & 68.5 & 54.6 & \textbf{84.4} & \textbf{1405.3} & 59.7 & \underline{56.1} & \underline{60.6} & 53.2 & 34.9 & \textbf{59.9} & \textbf{95.1\%} \\
13 & 56.4 & 68.9 & 54.6 & 82.9 & 1398.2 & 60.2 & \textbf{56.3} & \underline{60.6} & \underline{53.6} & 35.0 & \underline{59.8} & \underline{94.9\%} \\
14 & 54.7 & \textbf{69.1} & 54.0 & 81.6 & 1395.8 & \textbf{60.5} & 56.0 & \textbf{60.7} & 53.3 & \textbf{35.4} & 59.5 & 94.4\% \\
15 & 55.1 & \underline{69.0} & 54.3 & \underline{83.4} & 1398.1 & \underline{60.3} & \underline{56.1} & \textbf{60.7} & \textbf{53.7} & 35.0 & \underline{59.8} & 94.8\% \\
\midrule
\rowcolor{black!4}
\multicolumn{13}{c}{\textbf{LLaVA-1.5-7B \quad $T=64$}}\\
\midrule
Vanilla & 61.9 & 69.5 & 58.2 & 85.9 & 1508.8 & 64.7 & 58.1 & 66.0 & 55.5 & 35.0 & 63.0 & 100.0\% \\
\midrule
1 & 58.4 & 69.0 & 55.8 & 84.7 & 1397.1 & 60.3 & 54.0 & 61.1 & 53.7 & 34.7 & 60.1 & 95.4\% \\
2 & 58.2 & \textbf{69.8} & 56.3 & 83.7 & 1390.1 & 59.8 & 54.5 & 60.7 & 53.6 & 34.7 & 60.1 & 95.3\% \\
3 & 57.7 & \underline{69.7} & \underline{56.5} & 82.7 & 1403.5 & 60.5 & 55.4 & 60.9 & 53.8 & \textbf{35.4} & 60.3 & 95.6\% \\
4 & 57.7 & \underline{69.7} & \textbf{56.6} & 83.3 & 1418.8 & 61.1 & 55.4 & 60.9 & 54.0 & 34.9 & 60.5 & 95.9\% \\
5 & 57.9 & 69.0 & 56.1 & 83.0 & 1411.5 & \textbf{61.6} & 55.8 & 60.9 & 53.9 & 34.3 & 60.3 & 95.7\% \\
6 & 57.7 & \underline{69.7} & 56.3 & 83.3 & 1394.5 & 61.1 & 55.2 & 60.5 & 54.4 & 34.4 & 60.2 & 95.6\% \\
\rowcolor{micoPurple}
7 & 58.1 & \underline{69.7} & 56.4 & 84.7 & \underline{1426.8} & 60.4 & 55.7 & 61.0 & 54.1 & 34.2 & 60.6 & 96.1\% \\
8 & 57.9 & 69.2 & 56.1 & 84.7 & 1421.5 & 61.2 & 56.0 & 60.9 & 54.3 & 34.1 & 60.6 & 96.1\% \\
9 & 57.9 & 69.0 & 56.2 & 85.0 & 1411.8 & 60.9 & 56.2 & 61.6 & \underline{54.7} & 34.6 & 60.7 & 96.3\% \\
10 & 57.9 & 68.8 & 56.1 & 84.5 & 1420.3 & 61.3 & 55.8 & 61.8 & 54.4 & 33.3 & 60.5 & 96.0\% \\
11 & 58.1 & 69.3 & 56.4 & \underline{86.2} & 1406.5 & \underline{61.4} & 56.2 & 62.2 & 54.0 & 34.6 & \underline{60.9} & 96.6\% \\
12 & \textbf{58.8} & 68.5 & \underline{56.5} & \textbf{86.6} & 1399.7 & 60.4 & 57.3 & \textbf{62.6} & 54.1 & 34.6 & \underline{60.9} & 96.7\% \\
13 & \underline{58.5} & 69.0 & 56.0 & 85.9 & 1406.6 & 61.1 & \underline{57.5} & \textbf{62.6} & 54.6 & \underline{35.0} & \textbf{61.0} & \textbf{96.9\%} \\
14 & 57.8 & 68.9 & 55.2 & 85.4 & \textbf{1434.2} & 61.0 & \textbf{57.6} & \underline{62.4} & \underline{54.7} & 34.8 & \underline{60.9} & 96.7\% \\
15 & 58.2 & 68.7 & 56.1 & 85.9 & 1420.7 & \underline{61.4} & 56.8 & \textbf{62.6} & \textbf{54.8} & 34.3 & \textbf{61.0} & \underline{96.8\%} \\
\midrule
\rowcolor{black!4}
\multicolumn{13}{c}{\textbf{LLaVA-1.5-7B \quad $T=128$}}\\
\midrule
Vanilla & 61.9 & 69.5 & 58.2 & 85.9 & 1508.8 & 64.7 & 58.1 & 66.0 & 55.5 & 35.0 & 63.0 & 100.0\% \\
\midrule
1 & 59.7 & 68.9 & 56.8 & 86.0 & 1445.0 & 62.7 & 56.5 & 63.8 & 54.6 & 34.9 & 61.6 & 97.8\% \\
2 & 59.5 & \textbf{69.5} & 57.5 & 85.0 & 1463.7 & 62.4 & 56.4 & 63.6 & 55.0 & 35.1 & 61.7 & 97.9\% \\
3 & 59.6 & \underline{69.4} & 57.2 & 84.8 & 1460.7 & 62.5 & 56.6 & 63.9 & 55.0 & 35.2 & 61.7 & 97.9\% \\
4 & 59.2 & \underline{69.4} & 57.5 & 84.8 & 1472.3 & 62.1 & 56.8 & 63.9 & \textbf{55.5} & 35.6 & 61.8 & 98.1\% \\
5 & 59.7 & 69.3 & 57.5 & 84.6 & 1466.9 & 62.3 & 57.2 & 63.4 & 55.3 & 35.9 & 61.9 & 98.2\% \\
6 & 59.2 & 69.1 & \textbf{57.7} & 85.6 & 1457.3 & 61.9 & 56.7 & 63.5 & 54.9 & 35.6 & 61.7 & 97.9\% \\
\rowcolor{micoPurple}
7 & 60.0 & 69.3 & \underline{57.6} & 85.9 & 1474.8 & 62.4 & 56.5 & 63.7 & 54.9 & 36.3 & 62.0 & 98.5\% \\
8 & 59.5 & 69.3 & \textbf{57.7} & 86.1 & \underline{1481.4} & 62.5 & 56.9 & 63.6 & 54.9 & \underline{36.6} & 62.1 & 98.6\% \\
9 & 59.9 & 68.9 & 57.2 & 86.2 & 1476.9 & 62.5 & 56.9 & 64.0 & 54.8 & 36.1 & 62.0 & 98.4\% \\
10 & 60.2 & 68.5 & 57.2 & 85.7 & 1463.2 & \textbf{63.0} & 56.6 & 64.1 & 54.7 & 36.2 & 61.9 & 98.3\% \\
11 & 60.0 & 68.6 & 57.1 & 86.6 & 1471.9 & 62.6 & 57.2 & 64.4 & 55.0 & \textbf{37.0} & 62.2 & 98.7\% \\
12 & \underline{60.4} & 68.5 & \textbf{57.7} & \textbf{87.0} & 1461.5 & \textbf{63.0} & \textbf{57.6} & \underline{64.6} & 55.2 & \textbf{37.0} & \textbf{62.4} & \textbf{99.0\%} \\
13 & \textbf{60.5} & 68.0 & 57.1 & \underline{86.8} & 1477.5 & \underline{62.9} & \underline{57.5} & \textbf{64.8} & \underline{55.4} & 36.1 & \underline{62.3} & \underline{98.8\%} \\
14 & 59.6 & 68.7 & 56.7 & 86.3 & \textbf{1487.3} & 62.6 & 57.4 & \textbf{64.8} & \underline{55.4} & 36.2 & 62.2 & 98.7\% \\
15 & 60.2 & 68.8 & 56.8 & 86.4 & 1465.9 & \underline{62.9} & 57.3 & \textbf{64.8} & \underline{55.4} & 36.3 & 62.2 & 98.7\% \\
\midrule
\rowcolor{black!4}
\multicolumn{13}{c}{\textbf{LLaVA-1.5-13B \quad $T=32$}}\\
\midrule
Vanilla & 63.3 & 72.8 & 61.2 & 86.0 & 1533.2 & 68.5 & 63.5 & 68.2 & 60.8 & 36.4 & 65.7 & 100.0\% \\
\midrule
1 & 56.7 & 72.3 & 56.7 & 79.7 & 1370.0 & 63.1 & 60.3 & 60.6 & 57.2 & 34.1 & 60.9 & 92.7\% \\
2 & 57.3 & 72.5 & 57.2 & 78.4 & 1413.2 & 64.4 & 60.1 & 60.8 & \underline{57.4} & 35.7 & 61.4 & 93.5\% \\
3 & 57.2 & 72.6 & 57.2 & 78.5 & 1417.1 & 65.0 & 59.6 & 60.6 & 56.7 & 34.7 & 61.3 & 93.2\% \\
4 & 57.1 & 72.4 & 57.0 & 78.0 & 1430.7 & 63.9 & 60.2 & 60.7 & 56.9 & 34.3 & 61.2 & 93.1\% \\
5 & 57.1 & 72.9 & 56.8 & 78.7 & 1416.5 & 64.3 & 60.1 & 60.7 & 56.5 & 35.3 & 61.3 & 93.3\% \\
6 & 56.9 & 72.8 & 56.9 & 79.9 & 1428.1 & 64.2 & 60.1 & 60.8 & \underline{57.4} & 35.0 & 61.5 & 93.6\% \\
7 & 57.2 & 72.2 & 57.5 & 80.1 & 1436.6 & 64.4 & 60.4 & 61.1 & 56.9 & 35.6 & 61.7 & 93.9\% \\
\rowcolor{micoPurple}
8 & 57.3 & 73.0 & 57.0 & 82.3 & \textbf{1466.2} & 64.6 & 61.0 & 61.5 & \underline{57.4} & 35.9 & 62.3 & 94.9\% \\
9 & 57.7 & \textbf{74.0} & 57.1 & 82.3 & 1418.8 & 64.9 & 60.3 & 61.9 & 57.0 & 35.7 & 62.2 & 94.6\% \\
10 & 57.2 & 72.7 & 57.4 & 82.1 & 1453.5 & 64.7 & 60.3 & 62.2 & \textbf{57.5} & \textbf{36.6} & 62.3 & 94.8\% \\
11 & \textbf{58.4} & 73.0 & 57.4 & 82.1 & \underline{1465.1} & \underline{65.6} & 61.7 & \underline{63.3} & 57.2 & \underline{36.0} & \textbf{62.8} & \textbf{95.5\%} \\
12 & \underline{58.1} & \underline{73.1} & \textbf{57.8} & 81.8 & 1444.1 & 65.3 & 61.8 & 63.2 & \textbf{57.5} & 35.6 & \underline{62.6} & \underline{95.3\%} \\
13 & 57.8 & 72.9 & 57.5 & \textbf{82.6} & 1448.6 & 65.4 & 61.2 & 63.1 & 57.3 & 35.7 & \underline{62.6} & 95.2\% \\
14 & 58.0 & 72.8 & \underline{57.7} & \underline{82.4} & 1452.9 & 65.1 & 61.7 & 63.2 & \underline{57.4} & 35.0 & \underline{62.6} & 95.2\% \\
15 & 57.6 & 73.0 & \textbf{57.8} & 82.3 & 1444.7 & 65.5 & \underline{61.9} & \underline{63.3} & \textbf{57.5} & 34.8 & \underline{62.6} & 95.2\% \\
16 & 57.7 & 72.9 & \textbf{57.8} & \textbf{82.6} & 1438.1 & \textbf{65.7} & \underline{61.9} & \underline{63.3} & \textbf{57.5} & 34.9 & \underline{62.6} & \underline{95.3\%} \\
17 & 57.7 & \underline{73.1} & \underline{57.7} & 82.3 & 1437.5 & \underline{65.6} & \textbf{62.0} & \underline{63.3} & \textbf{57.5} & 34.6 & \underline{62.6} & 95.2\% \\
18 & 57.4 & \underline{73.1} & 57.4 & \underline{82.4} & 1437.4 & 65.4 & \underline{61.9} & \textbf{63.4} & \underline{57.4} & 34.7 & 62.5 & 95.1\% \\
19 & 56.5 & \underline{73.1} & 56.2 & \underline{82.4} & 1444.9 & 65.5 & \underline{61.9} & \underline{63.3} & \underline{57.4} & 35.6 & 62.4 & 94.9\% \\
\midrule
\rowcolor{black!4}
\multicolumn{13}{c}{\textbf{LLaVA-1.5-13B \quad $T=64$}}\\
\midrule
Vanilla & 63.3 & 72.8 & 61.2 & 86.0 & 1533.2 & 68.5 & 63.5 & 68.2 & 60.8 & 36.4 & 65.7 & 100.0\% \\
\midrule
1 & 58.9 & 72.8 & 58.4 & 84.5 & 1468.0 & 66.3 & 61.0 & 63.7 & \underline{57.9} & 35.0 & 63.2 & 96.1\% \\
2 & 58.7 & 73.6 & 58.5 & 83.0 & 1467.5 & 65.9 & 61.3 & 63.3 & \textbf{58.0} & 36.2 & 63.2 & 96.1\% \\
3 & 58.7 & \underline{73.9} & 58.5 & 82.9 & 1455.7 & 65.6 & 61.7 & 63.4 & 57.7 & 36.0 & 63.1 & 96.0\% \\
4 & 58.6 & \textbf{74.2} & 58.5 & 82.7 & 1474.1 & 65.7 & 61.5 & 63.3 & 57.3 & 35.7 & 63.1 & 96.0\% \\
5 & 58.5 & 73.7 & \underline{58.6} & 83.6 & 1486.6 & 65.7 & 61.3 & 63.5 & 57.3 & 36.3 & 63.3 & 96.3\% \\
6 & 58.5 & 73.7 & 58.4 & 84.7 & \textbf{1498.8} & 65.8 & 61.4 & 63.2 & 57.3 & 36.4 & 63.4 & 96.5\% \\
7 & 58.7 & 73.2 & \textbf{58.8} & 84.6 & 1472.2 & 66.2 & 61.4 & 63.5 & 57.5 & 36.2 & 63.4 & 96.4\% \\
\rowcolor{micoPurple}
8 & 58.7 & 73.2 & 58.4 & \textbf{86.4} & \underline{1490.8} & 65.9 & 61.1 & 63.9 & 57.1 & 36.4 & 63.6 & 96.7\% \\
9 & 58.7 & 73.6 & 58.2 & \underline{86.0} & 1470.3 & 66.1 & 61.7 & 64.1 & 57.6 & 36.4 & 63.6 & 96.7\% \\
10 & 58.7 & 73.0 & \underline{58.6} & 85.5 & 1449.7 & 66.0 & 61.9 & 64.4 & \underline{57.9} & 36.8 & 63.5 & 96.6\% \\
11 & \textbf{59.5} & 73.5 & 58.5 & 85.7 & 1478.4 & 66.2 & 62.0 & \textbf{64.9} & 57.7 & \textbf{37.2} & \textbf{63.9} & \textbf{97.2\%} \\
12 & \underline{59.3} & 73.3 & 58.5 & 85.8 & 1464.4 & \underline{66.7} & 62.2 & 64.6 & 57.3 & 36.3 & 63.7 & 96.9\% \\
13 & 59.1 & \underline{73.9} & 58.3 & 85.6 & 1470.1 & 66.6 & 62.2 & 64.6 & 57.0 & 36.3 & 63.7 & 96.9\% \\
14 & \underline{59.3} & 73.7 & \underline{58.6} & 85.6 & 1455.9 & 66.6 & 62.3 & 64.6 & 57.1 & \textbf{37.2} & \underline{63.8} & \underline{97.0\%} \\
15 & 59.1 & \underline{73.9} & 58.4 & 85.5 & 1453.3 & \textbf{66.8} & \textbf{62.7} & 64.6 & 57.1 & 36.4 & 63.7 & 96.9\% \\
16 & 59.1 & 73.7 & 58.4 & 85.9 & 1448.3 & 66.6 & \underline{62.6} & 64.5 & 57.1 & \underline{36.9} & 63.7 & 96.9\% \\
17 & 58.8 & 73.7 & \underline{58.6} & 85.7 & 1461.6 & \underline{66.7} & 62.3 & 64.6 & 57.2 & 36.3 & 63.7 & 96.9\% \\
18 & 58.7 & 73.8 & 58.2 & 85.6 & 1446.6 & 66.6 & \underline{62.6} & 64.6 & 57.1 & 35.8 & 63.5 & 96.6\% \\
19 & 58.0 & 73.8 & 57.3 & 85.8 & 1458.3 & 66.6 & 62.4 & \underline{64.7} & 57.2 & 35.4 & 63.4 & 96.5\% \\
\midrule
\rowcolor{black!4}
\multicolumn{13}{c}{\textbf{LLaVA-1.5-13B \quad $T=128$}}\\
\midrule
Vanilla & 63.3 & 72.8 & 61.2 & 86.0 & 1533.2 & 68.5 & 63.5 & 68.2 & 60.8 & 36.4 & 65.7 & 100.0\% \\
\midrule
1 & 59.8 & 73.1 & 59.0 & 86.1 & 1515.3 & 66.6 & 62.2 & 65.3 & \underline{58.6} & 36.3 & 64.3 & 97.8\% \\
2 & 59.9 & \textbf{74.4} & 59.6 & 84.8 & \textbf{1524.0} & 67.0 & 62.5 & 65.4 & 57.8 & \textbf{36.9} & \underline{64.5} & 98.1\% \\
3 & 59.8 & \underline{74.3} & 59.2 & 84.9 & 1516.5 & 66.6 & 62.5 & 65.5 & 57.6 & 36.6 & 64.3 & 97.8\% \\
4 & 59.8 & \underline{74.3} & 59.4 & 84.9 & 1511.8 & 66.4 & 61.9 & 65.4 & 58.2 & 36.2 & 64.2 & 97.7\% \\
5 & 59.6 & 73.7 & 59.7 & 85.8 & 1518.7 & 66.6 & 62.3 & 65.6 & 58.0 & 36.3 & 64.4 & 97.9\% \\
6 & 59.9 & 73.5 & 59.6 & 86.1 & \underline{1523.6} & 67.0 & 62.6 & 65.3 & 57.8 & \underline{36.7} & \underline{64.5} & 98.1\% \\
7 & 59.9 & 73.3 & \textbf{59.9} & 86.1 & 1512.4 & \underline{67.4} & \textbf{63.1} & 65.5 & 58.1 & 36.2 & \underline{64.5} & 98.1\% \\
\rowcolor{micoPurple}
8 & \underline{60.3} & 73.7 & 59.6 & 86.6 & 1509.3 & 67.0 & 62.4 & 66.0 & 58.1 & 36.4 & \textbf{64.6} & \underline{98.3\%} \\
9 & 60.2 & 73.8 & 59.7 & \textbf{87.2} & 1482.1 & 67.2 & 62.8 & 66.1 & 58.5 & 36.1 & \textbf{64.6} & \underline{98.2\%} \\
10 & 60.1 & 73.7 & \textbf{59.9} & 86.6 & 1505.3 & 66.7 & 61.9 & 66.0 & \underline{58.6} & 35.9 & \underline{64.5} & 98.1\% \\
11 & 60.2 & 73.2 & \underline{59.8} & \underline{86.7} & 1500.5 & \textbf{67.5} & 62.5 & \underline{66.3} & \textbf{58.7} & 36.2 & \textbf{64.6} & \textbf{98.3\%} \\
12 & \textbf{60.5} & 73.0 & 59.7 & 86.5 & 1505.7 & 67.1 & 62.5 & 66.2 & 58.5 & 36.3 & \textbf{64.6} & \underline{98.2\%} \\
13 & 59.9 & 73.3 & 59.6 & \underline{86.7} & 1494.3 & 66.9 & 62.0 & 66.2 & 58.3 & 36.0 & 64.4 & 97.9\% \\
14 & 60.1 & 73.2 & 59.5 & \underline{86.7} & 1488.9 & 67.0 & 62.6 & \textbf{66.4} & 58.3 & 36.0 & 64.4 & 98.0\% \\
15 & 60.1 & 73.5 & 59.4 & 86.5 & 1494.1 & 67.2 & 62.7 & \textbf{66.4} & 58.4 & 36.6 & \textbf{64.6} & \underline{98.2\%} \\
16 & 59.8 & 73.5 & 59.4 & 86.6 & 1491.3 & 67.3 & 62.7 & \textbf{66.4} & 58.4 & 35.7 & 64.4 & 98.0\% \\
17 & 59.9 & 73.5 & 59.5 & 86.5 & 1491.1 & 67.3 & 62.7 & \textbf{66.4} & 58.5 & 36.2 & \underline{64.5} & 98.1\% \\
18 & 59.7 & 73.3 & 59.3 & 86.5 & 1490.4 & 67.3 & \underline{62.9} & \textbf{66.4} & 58.4 & 36.0 & 64.4 & 98.0\% \\
19 & 59.5 & 73.3 & 58.7 & 86.5 & 1495.9 & 67.3 & 62.7 & \textbf{66.4} & 58.4 & 36.0 & 64.4 & 97.9\% \\
\end{longtable}
\endgroup

% Regrouped from the twelve layer-sweep floats into VTC-style cross-page tables.
% Each section is one model/budget setting; values and highlighted rows are unchanged.
\begingroup
\scriptsize
\setlength{\tabcolsep}{1.25pt}
\setlength{\LTleft}{0pt}
\setlength{\LTright}{0pt}
\setlength{\LTcapwidth}{\linewidth}
\renewcommand{\arraystretch}{1.0}
\begin{longtable}{>{\centering\arraybackslash}p{\dimexpr0.05\linewidth-2\tabcolsep\relax}
  *{12}{>{\raggedleft\arraybackslash}p{\dimexpr0.0791667\linewidth-2\tabcolsep\relax}}}
\caption{Complete refinement-layer sweeps for LLaVA-NeXT models across the three reported budgets.}
\label{tab:layer_sweeps_next}\\
\toprule
$K$ & GQA & \shortstack{SQA\\IMG} & TextVQA & POPE & MME & \shortstack{MMB\\EN} & \shortstack{MMB\\CN} & SEED & AI2D & MMMU & Acc & Rel \\
\midrule
\endfirsthead
\multicolumn{13}{l}{\textit{Table \thetable\ (continued): LLaVA-NeXT refinement-layer sweeps}}\\
\toprule
$K$ & GQA & \shortstack{SQA\\IMG} & TextVQA & POPE & MME & \shortstack{MMB\\EN} & \shortstack{MMB\\CN} & SEED & AI2D & MMMU & Acc & Rel \\
\midrule
\endhead
\midrule
\multicolumn{13}{r}{\textit{Continued on next page}}\\
\endfoot
\bottomrule
\endlastfoot
\rowcolor{black!4}
\multicolumn{13}{c}{\textbf{LLaVA-NeXT-7B \quad $T=160$}}\\
\midrule
Vanilla & 62.5 & 67.5 & 60.3 & 86.8 & 1511.8 & 65.8 & 57.3 & 69.7 & 64.7 & 35.2 & 64.5 & 100.0\% \\
\midrule
1 & 59.4 & 68.0 & 54.3 & 82.3 & 1379.2 & 62.4 & 54.3 & 63.4 & 62.8 & \underline{36.8} & 61.3 & 94.9\% \\
2 & 59.5 & 67.3 & 55.9 & 81.9 & 1388.0 & 62.6 & 54.9 & 63.4 & 63.0 & \textbf{37.0} & 61.5 & 95.3\% \\
3 & 58.8 & 67.7 & 56.3 & 81.0 & 1442.7 & 62.8 & 55.6 & 63.5 & 62.6 & 35.3 & 61.6 & 95.4\% \\
4 & 59.1 & 68.5 & 56.1 & 81.5 & 1419.2 & 62.3 & 55.1 & 63.5 & 62.9 & 35.3 & 61.5 & 95.3\% \\
5 & 59.2 & 67.8 & 56.1 & 82.0 & 1420.9 & 63.0 & 55.9 & 63.3 & 62.4 & 34.8 & 61.6 & 95.4\% \\
6 & 59.1 & 67.9 & \underline{56.5} & 81.5 & 1442.7 & 63.4 & 55.8 & 63.7 & 63.0 & 34.8 & 61.8 & 95.7\% \\
\rowcolor{micoPurple}
7 & 59.5 & 68.2 & \textbf{56.8} & 83.8 & 1446.1 & 64.1 & \underline{56.6} & 64.4 & \underline{63.5} & 35.2 & 62.4 & 96.7\% \\
8 & 59.5 & \underline{68.6} & 56.1 & 84.2 & 1454.2 & 64.0 & \underline{56.6} & 64.2 & 63.3 & 35.7 & 62.5 & 96.8\% \\
9 & \underline{59.9} & \textbf{68.7} & 56.3 & 83.8 & 1434.7 & 63.4 & 55.9 & 64.3 & 63.3 & 35.7 & 62.3 & 96.5\% \\
10 & 59.6 & \underline{68.6} & 55.4 & 83.2 & 1432.8 & 63.7 & 55.9 & 64.4 & 63.4 & 35.1 & 62.1 & 96.2\% \\
11 & 59.6 & 67.6 & 55.6 & \textbf{85.5} & 1445.3 & 63.9 & 56.1 & 64.7 & \textbf{64.2} & 35.2 & 62.5 & 96.8\% \\
12 & \textbf{60.1} & 68.4 & 56.4 & \underline{85.0} & 1444.2 & 64.0 & 56.4 & \underline{65.4} & 63.4 & 35.2 & \underline{62.6} & \textbf{97.1\%} \\
13 & \textbf{60.1} & 68.4 & 56.0 & 84.4 & 1452.1 & \underline{64.2} & \underline{56.6} & 65.3 & 63.4 & 35.6 & \textbf{62.7} & \textbf{97.1\%} \\
14 & 59.4 & 68.5 & 54.2 & 83.6 & \underline{1465.3} & \textbf{64.3} & \underline{56.6} & 65.3 & 63.2 & 35.2 & 62.4 & 96.6\% \\
15 & 59.8 & 68.3 & 54.2 & 84.2 & \textbf{1467.6} & \textbf{64.3} & \textbf{57.2} & \textbf{65.5} & \underline{63.5} & 35.3 & \underline{62.6} & \underline{97.0\%} \\
\midrule
\rowcolor{black!4}
\multicolumn{13}{c}{\textbf{LLaVA-NeXT-7B \quad $T=320$}}\\
\midrule
Vanilla & 62.5 & 67.5 & 60.3 & 86.8 & 1511.8 & 65.8 & 57.3 & 69.7 & 64.7 & 35.2 & 64.5 & 100.0\% \\
\midrule
1 & 60.8 & 68.2 & 55.6 & 85.4 & 1407.0 & 64.1 & 55.7 & 65.5 & 64.1 & \underline{36.0} & 62.6 & 97.0\% \\
2 & 61.1 & 67.9 & 57.4 & 85.3 & 1420.2 & 65.6 & 57.0 & 65.7 & \textbf{64.9} & \textbf{36.4} & 63.2 & 98.0\% \\
3 & 60.4 & 67.5 & 57.3 & 84.6 & 1413.3 & 64.4 & 57.2 & 65.8 & 64.5 & 35.7 & 62.8 & 97.3\% \\
4 & 61.1 & 68.1 & 57.5 & 85.6 & 1419.0 & 64.5 & 57.5 & 65.8 & 64.5 & 35.1 & 63.1 & 97.7\% \\
5 & 60.9 & 67.3 & 57.4 & 85.2 & 1418.6 & 65.0 & 57.0 & 65.7 & 64.6 & 35.3 & 62.9 & 97.5\% \\
6 & 61.0 & 67.9 & 57.5 & 85.1 & 1437.6 & 65.6 & 57.4 & 65.9 & 64.5 & 35.8 & 63.3 & 98.0\% \\
\rowcolor{micoPurple}
7 & \underline{61.3} & 68.5 & \underline{58.3} & 86.4 & 1437.0 & 65.5 & 57.7 & 66.4 & \underline{64.8} & 35.1 & 63.6 & 98.5\% \\
8 & 61.1 & \underline{68.8} & \textbf{58.7} & \underline{86.9} & 1444.6 & 65.2 & 57.7 & 66.4 & 64.3 & 35.0 & 63.6 & 98.6\% \\
9 & \underline{61.3} & \underline{68.8} & 57.9 & 86.8 & 1443.9 & 65.1 & 57.5 & 66.6 & 64.6 & 34.6 & 63.5 & 98.4\% \\
10 & \textbf{61.5} & 68.4 & 57.2 & 85.8 & 1445.5 & 64.8 & 57.3 & 66.6 & 64.3 & 35.0 & 63.3 & 98.1\% \\
11 & \underline{61.3} & 68.1 & 58.0 & \textbf{87.5} & 1458.1 & 65.3 & 57.2 & 66.9 & 64.4 & 35.0 & 63.7 & 98.6\% \\
12 & \textbf{61.5} & 68.4 & 57.9 & \underline{86.9} & 1465.3 & \textbf{66.0} & \underline{57.9} & \underline{67.4} & \underline{64.8} & 35.6 & \textbf{64.0} & \textbf{99.1\%} \\
13 & \textbf{61.5} & \textbf{68.9} & 57.8 & 86.7 & 1455.9 & \textbf{66.0} & \textbf{58.2} & \underline{67.4} & 64.6 & 35.1 & \underline{63.9} & \underline{99.0\%} \\
14 & 60.9 & 68.7 & 56.6 & 86.1 & \textbf{1470.4} & \underline{65.9} & 57.6 & \underline{67.4} & 64.5 & 35.0 & 63.6 & 98.6\% \\
15 & 61.2 & 68.4 & 56.5 & 86.5 & \underline{1470.2} & 65.7 & \textbf{58.2} & \textbf{67.5} & \textbf{64.9} & 34.8 & 63.7 & 98.7\% \\
\midrule
\rowcolor{black!4}
\multicolumn{13}{c}{\textbf{LLaVA-NeXT-7B \quad $T=640$}}\\
\midrule
Vanilla & 62.5 & 67.5 & 60.3 & 86.8 & 1511.8 & 65.8 & 57.3 & 69.7 & 64.7 & 35.2 & 64.5 & 100.0\% \\
\midrule
1 & \textbf{62.3} & 68.3 & 56.5 & 87.3 & 1445.1 & 64.5 & 56.5 & 67.9 & 65.0 & 35.0 & 63.6 & 98.5\% \\
2 & 62.1 & 68.7 & 58.1 & 87.1 & 1455.7 & 65.3 & 56.9 & 68.2 & 64.7 & \textbf{35.3} & 63.9 & 99.0\% \\
3 & 62.0 & 68.5 & 58.0 & 87.1 & 1440.8 & 64.9 & 57.2 & 68.5 & 65.1 & \underline{35.1} & 63.8 & 98.9\% \\
4 & 62.1 & 68.9 & 57.8 & 87.3 & 1463.9 & 65.1 & \underline{58.2} & 68.4 & 65.5 & \textbf{35.3} & 64.2 & 99.4\% \\
5 & 61.9 & 68.9 & 57.9 & 87.1 & 1431.7 & 65.1 & 57.7 & 68.4 & 65.2 & 34.8 & 63.9 & 98.9\% \\
6 & 62.1 & 68.4 & 58.2 & 87.1 & 1455.9 & 65.1 & 57.7 & 68.1 & 65.2 & 35.0 & 64.0 & 99.1\% \\
\rowcolor{micoPurple}
7 & 61.9 & 68.2 & 58.9 & 87.3 & 1446.8 & 65.5 & 57.7 & 68.4 & 65.3 & 35.0 & 64.1 & 99.2\% \\
8 & 61.9 & 68.3 & \underline{59.2} & 87.8 & 1454.8 & 65.3 & 57.9 & 68.3 & \textbf{65.8} & 34.6 & 64.2 & 99.4\% \\
9 & \underline{62.2} & 68.7 & \textbf{59.3} & \underline{87.9} & 1436.3 & 65.4 & \underline{58.2} & 68.3 & \underline{65.6} & 34.9 & 64.2 & 99.5\% \\
10 & \underline{62.2} & \underline{69.1} & 58.2 & 87.5 & 1437.1 & 65.2 & 57.6 & 68.4 & \textbf{65.8} & 34.6 & 64.0 & 99.2\% \\
11 & 62.0 & 68.7 & 59.0 & \textbf{88.0} & 1465.9 & 65.6 & 58.1 & 68.7 & 65.5 & 34.1 & \underline{64.3} & 99.6\% \\
12 & \textbf{62.3} & \textbf{69.4} & \underline{59.2} & 87.8 & 1472.6 & 65.4 & \underline{58.2} & 69.0 & 65.3 & 34.0 & \textbf{64.4} & \textbf{99.8\%} \\
13 & \underline{62.2} & 68.9 & 58.9 & 87.5 & 1489.4 & \textbf{65.9} & 57.9 & \underline{69.2} & 65.5 & 33.8 & \textbf{64.4} & \textbf{99.8\%} \\
14 & 62.0 & 69.0 & 57.8 & 87.3 & \underline{1495.8} & \underline{65.8} & 57.7 & 69.1 & 65.0 & 34.0 & \underline{64.3} & 99.6\% \\
15 & 61.9 & 69.0 & 57.7 & 87.3 & \textbf{1504.0} & \textbf{65.9} & \textbf{58.4} & \textbf{69.3} & 65.3 & 33.7 & \textbf{64.4} & \underline{99.7\%} \\
\midrule
\rowcolor{black!4}
\multicolumn{13}{c}{\textbf{LLaVA-NeXT-13B \quad $T=160$}}\\
\midrule
Vanilla & 64.4 & 73.1 & 63.2 & 85.3 & 1539.5 & 68.5 & 61.2 & 71.6 & 70.1 & 35.7 & 67.0 & 100.0\% \\
\midrule
1 & 60.7 & \underline{72.3} & 56.9 & 83.8 & 1446.2 & 64.4 & 60.1 & 65.0 & 66.8 & 37.1 & 63.9 & 95.4\% \\
2 & 60.5 & 71.0 & \underline{58.4} & 82.3 & 1446.4 & 64.5 & 62.3 & 65.1 & 67.7 & 36.8 & 64.1 & 95.6\% \\
3 & 60.4 & 71.5 & 58.2 & 82.1 & 1453.7 & 64.9 & 61.9 & 65.4 & 67.7 & 37.1 & 64.2 & 95.8\% \\
4 & 60.4 & 71.1 & 58.2 & 82.7 & 1430.8 & 65.4 & 62.3 & 65.5 & 67.5 & 37.0 & 64.2 & 95.8\% \\
5 & 60.4 & 71.7 & 58.3 & 82.4 & 1469.6 & 65.4 & 62.5 & 65.8 & 67.6 & 37.3 & 64.5 & 96.3\% \\
6 & 60.8 & 72.2 & 58.3 & 83.1 & 1492.0 & \underline{66.3} & 62.3 & 66.0 & 67.8 & 36.6 & 64.8 & 96.7\% \\
7 & 60.7 & \textbf{72.4} & 57.9 & 83.7 & 1488.9 & 65.4 & 62.8 & 66.4 & 67.5 & \underline{37.4} & 64.9 & 96.8\% \\
8 & 61.1 & \underline{72.3} & 58.2 & 84.7 & 1482.5 & 65.7 & 62.3 & 66.5 & 67.4 & 36.9 & 64.9 & 96.9\% \\
9 & 61.6 & 72.0 & \textbf{58.5} & 85.0 & \underline{1500.6} & 66.1 & 63.0 & 67.3 & 67.7 & \textbf{37.6} & \textbf{65.4} & \textbf{97.6\%} \\
10 & 61.5 & 71.7 & 58.2 & \textbf{85.4} & \textbf{1505.0} & 65.7 & \underline{63.1} & 67.0 & 68.0 & 36.6 & \underline{65.3} & 97.4\% \\
11 & \underline{62.1} & 71.2 & \underline{58.4} & 84.9 & 1482.5 & 66.2 & 62.8 & 67.5 & 68.1 & 36.8 & 65.2 & 97.3\% \\
12 & 61.8 & 71.5 & \textbf{58.5} & 84.8 & 1491.9 & \underline{66.3} & 62.5 & 67.4 & 68.0 & 37.0 & \underline{65.3} & 97.4\% \\
13 & 61.6 & 71.6 & 58.2 & 84.8 & 1478.8 & \textbf{66.6} & \textbf{63.2} & 67.4 & 67.9 & 36.8 & 65.2 & 97.3\% \\
\rowcolor{micoPurple}
14 & 62.0 & 71.8 & \underline{58.4} & 85.0 & 1488.4 & 66.2 & \underline{63.1} & \underline{67.6} & 68.0 & 36.7 & \underline{65.3} & \underline{97.5\%} \\
15 & \textbf{62.2} & 71.7 & 58.2 & 85.0 & 1480.9 & 66.0 & 62.9 & \textbf{67.7} & 68.0 & 36.9 & \underline{65.3} & 97.4\% \\
16 & 62.0 & 71.7 & \textbf{58.5} & \underline{85.1} & 1486.4 & 66.0 & 62.8 & \textbf{67.7} & \underline{68.2} & 36.7 & \underline{65.3} & 97.4\% \\
17 & \underline{62.1} & 71.7 & \textbf{58.5} & 85.0 & 1489.6 & 66.0 & 63.0 & \textbf{67.7} & 68.1 & 36.9 & \underline{65.3} & \underline{97.5\%} \\
18 & \underline{62.1} & 71.5 & 58.3 & 84.9 & 1473.8 & 66.0 & 62.8 & \textbf{67.7} & \textbf{68.3} & 36.7 & 65.2 & 97.3\% \\
19 & 61.8 & 71.5 & 57.5 & 84.6 & 1482.6 & 66.0 & 62.8 & \textbf{67.7} & \underline{68.2} & 36.6 & 65.1 & 97.1\% \\
\midrule
\rowcolor{black!4}
\multicolumn{13}{c}{\textbf{LLaVA-NeXT-13B \quad $T=320$}}\\
\midrule
Vanilla & 64.4 & 73.1 & 63.2 & 85.3 & 1539.5 & 68.5 & 61.2 & 71.6 & 70.1 & 35.7 & 67.0 & 100.0\% \\
\midrule
1 & 62.3 & 71.4 & 59.1 & 85.4 & 1483.6 & 65.5 & 61.2 & 67.8 & 67.7 & \textbf{38.2} & 65.3 & 97.4\% \\
2 & 62.5 & 71.2 & 60.8 & 84.1 & 1506.8 & 66.6 & 62.4 & 68.1 & 67.8 & 37.4 & 65.6 & 97.9\% \\
3 & 62.4 & 71.0 & 60.6 & 84.5 & 1501.0 & 66.2 & 62.2 & 68.0 & 67.9 & 37.2 & 65.5 & 97.8\% \\
4 & 62.2 & 70.8 & 60.4 & 84.7 & 1507.8 & 66.2 & 62.0 & 67.8 & 67.8 & \underline{38.1} & 65.6 & 97.8\% \\
5 & 62.4 & 71.0 & 60.5 & 84.5 & 1507.5 & 66.5 & 62.8 & 68.2 & 67.5 & 37.9 & 65.6 & 98.0\% \\
6 & 62.4 & 71.7 & 60.5 & 85.3 & 1512.3 & 67.3 & 63.0 & 68.5 & 68.2 & 37.3 & 66.0 & 98.5\% \\
7 & 62.7 & \textbf{72.5} & 60.6 & 86.0 & 1513.7 & \textbf{67.9} & 63.0 & 68.7 & 68.2 & 37.1 & 66.2 & 98.9\% \\
8 & 62.8 & 71.9 & 60.1 & 86.2 & 1514.3 & 67.3 & 62.7 & 68.8 & 68.3 & \underline{38.1} & 66.2 & 98.8\% \\
9 & \textbf{63.3} & 72.0 & 60.6 & \underline{86.9} & 1526.7 & 67.6 & \underline{63.5} & 69.2 & 68.3 & 37.7 & \textbf{66.5} & \textbf{99.3\%} \\
10 & 62.8 & \underline{72.2} & 60.4 & \textbf{87.0} & 1541.0 & 67.7 & 62.7 & 69.3 & \underline{68.5} & 37.6 & \textbf{66.5} & \textbf{99.3\%} \\
11 & 63.0 & 71.6 & 60.4 & 86.7 & \textbf{1551.1} & 67.7 & \textbf{63.7} & \underline{69.4} & 68.2 & 37.0 & \textbf{66.5} & \textbf{99.3\%} \\
12 & \underline{63.2} & 71.4 & 60.6 & 86.5 & 1523.9 & 67.6 & \textbf{63.7} & 69.2 & \underline{68.5} & 36.9 & \underline{66.4} & 99.0\% \\
13 & 62.9 & 71.5 & 60.4 & 86.5 & 1533.5 & 67.7 & \underline{63.5} & 69.3 & 68.4 & 37.1 & \underline{66.4} & 99.1\% \\
\rowcolor{micoPurple}
14 & \textbf{63.3} & 71.3 & \underline{60.9} & 86.5 & 1543.3 & \textbf{67.9} & 63.2 & \textbf{69.5} & 68.3 & 37.0 & \textbf{66.5} & \textbf{99.3\%} \\
15 & \underline{63.2} & 71.8 & 60.7 & 86.5 & 1539.2 & \underline{67.8} & 63.4 & \underline{69.4} & 68.4 & 36.9 & \textbf{66.5} & \underline{99.2\%} \\
16 & 63.0 & 71.4 & 60.6 & 86.7 & 1537.1 & 67.7 & 63.1 & \textbf{69.5} & 68.4 & 36.6 & \underline{66.4} & 99.1\% \\
17 & 63.1 & 71.7 & \textbf{61.2} & 86.6 & 1536.6 & 67.4 & 63.1 & \textbf{69.5} & \textbf{68.6} & 37.2 & \textbf{66.5} & \textbf{99.3\%} \\
18 & 62.9 & 71.6 & 60.4 & 86.4 & 1535.8 & 67.3 & 63.1 & \textbf{69.5} & \underline{68.5} & 37.4 & \underline{66.4} & 99.1\% \\
19 & 63.0 & 71.5 & 59.8 & 86.5 & \underline{1547.4} & 67.5 & 63.1 & \textbf{69.5} & \underline{68.5} & 36.3 & 66.3 & 99.0\% \\
\midrule
\rowcolor{black!4}
\multicolumn{13}{c}{\textbf{LLaVA-NeXT-13B \quad $T=640$}}\\
\midrule
Vanilla & 64.4 & 73.1 & 63.2 & 85.3 & 1539.5 & 68.5 & 61.2 & 71.6 & 70.1 & 35.7 & 67.0 & 100.0\% \\
\midrule
1 & 63.6 & 71.7 & 60.2 & 86.2 & 1534.2 & 66.6 & 62.9 & 70.2 & 68.6 & \textbf{38.7} & 66.5 & 99.3\% \\
2 & 63.6 & 72.4 & 61.1 & 85.9 & 1514.2 & 67.9 & 62.5 & 70.4 & 69.1 & 37.6 & 66.6 & 99.4\% \\
3 & 63.4 & 72.2 & 61.3 & 85.6 & 1536.8 & 67.3 & 62.7 & 70.8 & 69.3 & 37.0 & 66.7 & 99.5\% \\
4 & 63.5 & 72.3 & 60.7 & 86.0 & 1537.3 & 68.0 & 63.0 & 70.3 & 69.0 & 37.7 & 66.7 & 99.6\% \\
5 & 63.4 & 72.3 & 61.2 & 85.9 & 1552.9 & 67.6 & 62.7 & 70.5 & 68.9 & 37.7 & 66.8 & 99.7\% \\
6 & 63.5 & 72.5 & 61.4 & 86.4 & 1543.4 & 68.6 & 62.9 & 70.4 & 69.1 & 37.6 & 67.0 & 99.9\% \\
7 & 63.7 & 72.7 & 61.4 & 86.8 & 1531.1 & 68.5 & 63.0 & 70.7 & 69.5 & \underline{37.8} & 67.1 & 100.1\% \\
8 & 63.7 & 72.5 & 61.2 & 86.9 & 1524.7 & 68.4 & 63.3 & 70.4 & 69.4 & 37.6 & 67.0 & 99.9\% \\
9 & 64.0 & 73.1 & 61.6 & \underline{87.1} & 1547.1 & 68.3 & 63.2 & 71.1 & 69.8 & 37.6 & 67.3 & 100.4\% \\
10 & 63.7 & 73.1 & 61.7 & \textbf{87.2} & 1545.8 & \underline{69.0} & 63.4 & 71.1 & 69.8 & 36.4 & 67.3 & 100.4\% \\
11 & 64.1 & \textbf{73.3} & 61.6 & \underline{87.1} & 1543.8 & 68.8 & \underline{63.5} & 71.2 & 69.9 & 36.4 & 67.3 & 100.5\% \\
12 & 64.1 & \textbf{73.3} & \textbf{61.9} & 86.8 & 1543.7 & 68.6 & 63.4 & 71.2 & 69.8 & 37.1 & 67.3 & 100.5\% \\
13 & 64.1 & 72.4 & \underline{61.8} & 87.0 & 1567.0 & \textbf{69.1} & 62.7 & 71.2 & 69.9 & 37.0 & 67.4 & 100.5\% \\
\rowcolor{micoPurple}
14 & \textbf{64.3} & \underline{73.2} & \textbf{61.9} & 87.0 & \textbf{1573.9} & \underline{69.0} & 63.4 & \underline{71.4} & 70.2 & 37.2 & \textbf{67.6} & \textbf{100.9\%} \\
15 & \underline{64.2} & 73.1 & 61.6 & 87.0 & 1558.0 & \underline{69.0} & 63.4 & \underline{71.4} & 70.1 & 37.6 & \underline{67.5} & \underline{100.8\%} \\
16 & 64.1 & 73.0 & 61.6 & 86.9 & \underline{1567.1} & 68.7 & \textbf{63.6} & \underline{71.4} & 70.2 & 37.4 & \underline{67.5} & \underline{100.8\%} \\
17 & \underline{64.2} & 73.1 & 61.6 & 86.8 & 1558.8 & \underline{69.0} & \underline{63.5} & \textbf{71.5} & 70.2 & 37.7 & \underline{67.5} & \underline{100.8\%} \\
18 & 64.0 & \underline{73.2} & 61.5 & 86.8 & 1566.4 & 68.8 & 63.4 & \textbf{71.5} & \underline{70.3} & 37.4 & \underline{67.5} & \underline{100.8\%} \\
19 & 63.9 & 73.1 & 60.7 & 86.8 & 1565.7 & 68.8 & \underline{63.5} & \textbf{71.5} & \textbf{70.4} & 37.3 & 67.4 & 100.6\% \\
\end{longtable}
\endgroup

\FloatBarrier

\section{Supplementary Benchmark Results}
\label{sec:appendix_specialized_results}

\subsection{VTC-Bench: Fixed-Reference Groups}
\label{sec:appendix_vtc_bench}

VTC-Bench evaluates visual-token compression by separating questions according to their sensitivity to image downsampling~\citep{liao2026vtcbench}. We use the author-released groups constructed with Qwen2-VL-7B-Instruct. For each reference downsampling factor $d$, Group A contains samples answered correctly at full resolution but incorrectly after downsampling, while Group B contains samples answered correctly in both settings. These groups are reused unchanged for Qwen2.5-VL-7B.

Tables~\ref{tab:vtc_bench_qwen25_t256} and~\ref{tab:vtc_bench_qwen25_t128} report Qwen2.5-VL-7B at $T=256$ and $128$, respectively. Following the original VTC-Bench layout, each cell shows Group A (Group B) on one line and their absolute gap $\Delta$ below. Scores and gaps are rounded independently from the unrounded results to one decimal place; gaps are in percentage points. Each table is organized by reference image-area reduction, with the two groups ranked independently. Both tables include all eight task scores, their unweighted mean Acc, and Rel normalized by vanilla for the same model, group, and $d$. Task scores, Acc, and Rel are percentages; Rel is computed before rounding, and score highlighting ranks distinct displayed values within each $T,d$ setting. MME uses per-question correctness rather than the native summed score, and MMBench uses per-question correctness rather than circular evaluation.

The reference factors $d=2,3,4,5,10$ correspond to image-area reductions of $75.00\%,88.89\%,93.75\%,96.00\%,99.00\%$ when constructing the groups; they do not specify the pruning ratios of our target models. We evaluate the target model on the original images with token budgets $T=256$ and $128$, using greedy decoding and a maximum of 2,048 generated tokens. Empty and length-truncated responses remain in the evaluation denominators.

% Generated from the original Group A/B CSVs; round once to one decimal.
% Within each cell: A (B), followed by the absolute unrounded-score gap.
% Rankings use distinct displayed scores among pruners, separately for A/B.
\begingroup
\scriptsize
\setlength{\tabcolsep}{1.5pt}
\setlength{\fboxsep}{1pt}
\setlength{\LTleft}{0pt}
\setlength{\LTright}{0pt}
\setlength{\LTcapwidth}{\linewidth}
\renewcommand{\arraystretch}{1.08}
% [inline block 1: 2 envs, 94026 chars -> data_tex | \begin{longtable}{>{\raggedright\arraybackslash}m{\dimexpr0.095\linewidth-2\tabcolsep\relax} *{9}{>{\centering\arrayback...]

\endgroup

\FloatBarrier

\subsection{Generalized Referring Expression Comprehension}
\label{sec:appendix_grefcoco}
Table~\ref{tab:ablation_grefcoco} extends the evaluation to gRefCOCO~\cite{he2023grec}, which includes expressions referring to single, multiple, or no target objects. We report $\mathrm{Pr}@(F_1=1,\,\mathrm{GIoU}\geq0.5)$ on val, testA, and testB, computed with the released GREC code using a generalized-IoU matching threshold (a variant of the original $\mathrm{IoU}\geq0.5$ definition, see Appendix~\ref{sec:appendix_supplementary_image_benchmarks}). MiCo leads the displayed pruners in all six model--budget--split settings on Qwen2.5 and all four testA/testB settings on Qwen3-VL. At $T=128$ on Qwen3-VL, it reaches 36.3/40.0 on testA/testB, versus VisPruner's 31.8/34.7; FastV remains strongest on Qwen3-VL val at both budgets.
Parenthesized values report retention relative to the matching Vanilla score.

\begin{table}[!htbp]
    \centering
    % Source: author-supplied completed Feishu gRefCOCO/GREC results, 2026-09-20.
% 18 model/configuration groups, complete 49,492-prediction records per group;
% author-reported validation protocol: official-giou-flat-box-v2.
% Preserve the supplied score and Rel precision. Do not recalculate Rel from
% rounded display scores. val/testA/testB are splits of one dataset.
\centering
\footnotesize
\setlength{\tabcolsep}{4pt}
\renewcommand{\arraystretch}{1.05}
\caption{GREC scores (\%) on gRefCOCO.}
\label{tab:ablation_grefcoco}
\resizebox{\linewidth}{!}{%
\begin{tabular}{lcccccc}
\toprule
& \multicolumn{3}{c}{Qwen2.5-VL-7B} & \multicolumn{3}{c}{Qwen3-VL-8B} \\
\cmidrule(lr){2-4}\cmidrule(lr){5-7}
Method & val & testA & testB & val & testA & testB \\
\midrule
Vanilla & 66.8 (100.0\%) & 47.3 (100.0\%) & 51.9 (100.0\%) & 49.1 (100.0\%) & 47.6 (100.0\%) & 47.8 (100.0\%) \\
\midrule
\multicolumn{7}{c}{\textit{Retain 256 tokens (80.2\% pruned)}} \\
\midrule
FastV (ECCV'24) & 61.9 (92.8\%) & 27.4 (57.9\%) & 34.6 (66.7\%) & \textbf{55.3} (112.7\%) & 38.0 (79.9\%) & 39.6 (82.8\%) \\
VisPruner (ICCV'25) & \underline{64.0} (95.8\%) & \underline{35.8} (75.8\%) & \underline{40.8} (78.5\%) & 52.0 (106.0\%) & \underline{41.7} (87.5\%) & \underline{42.9} (89.6\%) \\
ApET (CVPR'26) & 62.3 (93.3\%) & 27.6 (58.3\%) & 33.0 (63.6\%) & 50.0 (101.9\%) & 31.4 (66.0\%) & 35.1 (73.4\%) \\
\rowcolor{micoPurple}
\textbf{MiCo} & \textbf{64.9} (97.2\%) & \textbf{38.8} (82.1\%) & \textbf{44.1} (84.8\%) & \underline{52.7} (107.4\%) & \textbf{43.8} (92.0\%) & \textbf{45.1} (94.3\%) \\
\midrule
\multicolumn{7}{c}{\textit{Retain 128 tokens (90.1\% pruned)}} \\
\midrule
FastV (ECCV'24) & \underline{62.4} (93.5\%) & 23.2 (49.0\%) & 29.1 (56.0\%) & \textbf{60.1} (122.4\%) & 26.5 (55.6\%) & 30.3 (63.3\%) \\
VisPruner (ICCV'25) & 62.2 (93.2\%) & \underline{29.3} (62.1\%) & \underline{33.2} (63.9\%) & \underline{54.3} (110.5\%) & \underline{31.8} (66.8\%) & \underline{34.7} (72.6\%) \\
ApET (CVPR'26) & 61.8 (92.5\%) & 24.5 (51.9\%) & 29.6 (56.9\%) & 52.4 (106.8\%) & 26.4 (55.4\%) & 30.1 (62.8\%) \\
\rowcolor{micoPurple}
\textbf{MiCo} & \textbf{62.5} (93.6\%) & \textbf{31.0} (65.6\%) & \textbf{35.2} (67.8\%) & 53.7 (109.4\%) & \textbf{36.3} (76.2\%) & \textbf{40.0} (83.6\%) \\
\bottomrule
\end{tabular}%
}

\end{table}

\subsection{Scene Text, Chart, and Document Understanding}
\label{sec:appendix_qwen_reading}

Table~\ref{tab:qwen_reading_benchmarks_full} gives the complete TextVQA, ChartQA, and DocVQA comparison; the main text reports a compact view with FastV, VisPruner, and ApET.
% Sources read 2026-09-22:
% Feishu Docx WH5rdMOmEoTMKPxKoyGcuCTMnvd, revision 601,
% ChartQA/DocVQA native table doxcnmqvy6lAAphGldo3HBxTvnb.
% TextVQA: embedded spreadsheet ZWr6sFcIphhA49tbghjc7ESunkh,
% sheet pMs3TL, A1:F45 (native precision retained in the CSV below).
% Data snapshot: tables/data/qwen_reading_benchmarks.csv.
\begin{table}[H]
\centering
\footnotesize
\setlength{\tabcolsep}{3.5pt}
\renewcommand{\arraystretch}{1.0}
\caption{Text (TextVQA), Chart (ChartQA), and Doc (DocVQA) scores (\%) on Qwen2.5-VL-7B and Qwen3-VL-8B at $T=256/128$.}
\label{tab:qwen_reading_benchmarks_full}
\micoTableFit{%
\begin{tabular}{l*{12}{c}}
\toprule
& \multicolumn{6}{c}{Qwen2.5-VL-7B} & \multicolumn{6}{c}{Qwen3-VL-8B} \\
\cmidrule(lr){2-7}\cmidrule(lr){8-13}
& \multicolumn{3}{c}{$T=256$} & \multicolumn{3}{c}{$T=128$}
& \multicolumn{3}{c}{$T=256$} & \multicolumn{3}{c}{$T=128$} \\
\cmidrule(lr){2-4}\cmidrule(lr){5-7}\cmidrule(lr){8-10}\cmidrule(lr){11-13}
Method & Text & Chart & Doc
& Text & Chart & Doc
& Text & Chart & Doc
& Text & Chart & Doc \\
\midrule
Vanilla & 85.06 & 87.28 & 94.86 & 85.06 & 87.28 & 94.86 & 84.28 & 83.04 & 95.72 & 84.28 & 83.04 & 95.72 \\
\midrule
FastV & \underline{78.44} & 68.28 & 38.22 & 46.45 & 33.20 & 14.54 & 62.18 & 28.64 & 14.79 & 32.88 & 16.24 & 9.83 \\
SparseVLM & 64.23 & 40.08 & 12.50 & 39.48 & 19.24 & 9.50 & 66.72 & 11.08 & 17.05 & 41.29 & 20.56 & 11.17 \\
DivPrune & 76.55 & \underline{70.44} & \underline{51.70} & \underline{66.69} & \underline{52.16} & \underline{35.56} & 74.61 & 59.32 & 47.91 & 64.72 & 43.76 & 34.14 \\
VisionZip & 61.87 & 55.56 & 27.44 & 49.50 & 36.64 & 20.01 & 74.68 & 69.76 & 49.68 & 57.88 & 47.00 & 25.10 \\
VisPruner & 64.02 & 56.08 & 27.66 & 49.40 & 36.68 & 19.69 & 71.96 & 63.52 & 39.63 & 54.58 & 39.64 & 23.00 \\
HoloV & 63.19 & 54.68 & 28.19 & 49.05 & 36.48 & 19.19 & 74.63 & \textbf{71.40} & 50.40 & 57.27 & \textbf{51.20} & 27.73 \\
MMTok & 75.29 & 67.68 & 39.31 & 61.23 & 45.08 & 22.25 & \underline{76.93} & \underline{70.84} & \textbf{66.95} & \underline{68.11} & 48.68 & \underline{45.77} \\
ApET & 69.17 & 50.60 & 23.12 & 45.82 & 31.16 & 14.35 & 59.54 & 54.52 & 39.25 & 41.27 & 35.44 & 25.59 \\
\midrule
\rowcolor{micoPurple}
\textbf{MiCo} & \textbf{80.96} & \textbf{81.16} & \textbf{57.44} & \textbf{71.74} & \textbf{70.68} & \textbf{38.58} & \textbf{79.69} & 66.80 & \underline{64.98} & \textbf{73.44} & \underline{50.12} & \textbf{46.35} \\
\bottomrule
\end{tabular}%
}
\par\vspace{3pt}
\begin{minipage}{\linewidth}
\footnotesize
\textit{Metrics.} TextVQA\_VAL: VQA consensus score over 5,000 questions; ChartQA\_TEST: relaxed accuracy over 2,500 questions; DocVQA\_VAL: ANLS over 5,349 questions. Each model's unpruned reference is repeated under both budgets.
\end{minipage}
\end{table}

\FloatBarrier
\subsection{OCRBench}
\label{sec:appendix_ocrbench}

Table~\ref{tab:ablation_ocrbench} reports the complete OCRBench comparison on four image models; this benchmark is not tabulated in the main text.

\begin{table}[!htbp]
    \centering
    % Source: ccfa-workfiles/experiments/feishu-table-sync/cache/ocrbench_ebKF4Y.json, revision 1176.
\centering
\footnotesize
\setlength{\tabcolsep}{3pt}
\renewcommand{\arraystretch}{1.08}
\caption{OCRBench scores (0--100, higher is better) on LLaVA-NeXT-7B/13B, Qwen2.5-VL-7B, and InternVL3-8B at token budget $T$.}
\label{tab:ablation_ocrbench}
\begin{tabular}{p{\dimexpr0.20\linewidth-2\tabcolsep\relax}
  *{6}{>{\centering\arraybackslash}p{\dimexpr0.07\linewidth-2\tabcolsep\relax}}
  *{2}{>{\centering\arraybackslash}p{\dimexpr0.105\linewidth-2\tabcolsep\relax}}
  *{2}{>{\centering\arraybackslash}p{\dimexpr0.085\linewidth-2\tabcolsep\relax}}}
\toprule
& \multicolumn{3}{c}{LLaVA-NeXT-7B} & \multicolumn{3}{c}{LLaVA-NeXT-13B} & \multicolumn{2}{c}{Qwen2.5-VL-7B} & \multicolumn{2}{c}{InternVL3-8B} \\
\cmidrule(lr){2-4}\cmidrule(lr){5-7}\cmidrule(lr){8-9}\cmidrule(lr){10-11}
Method / $T$ & 640 & 320 & 160 & 640 & 320 & 160 & 256 & 128 & 256 & 128 \\
\midrule
Vanilla & \multicolumn{3}{c}{50.7} & \multicolumn{3}{c}{51.1} & \multicolumn{2}{c}{87.7} & \multicolumn{2}{c}{88.1} \\
\midrule
FastV & 37.0 & 11.9 & -- & 38.9 & 18.9 & -- & 70.6 & 39.6 & \underline{55.0} & 25.0 \\
DivPrune & 38.2 & 30.1 & 23.8 & 39.5 & 31.7 & 26.2 & \underline{75.1} & \underline{66.7} & 44.9 & 32.5 \\
VisPruner & \textbf{43.6} & \underline{34.2} & \underline{25.8} & \underline{47.8} & \underline{37.9} & \underline{31.6} & 71.5 & 65.5 & 26.9 & 6.9 \\
MMTok & \underline{40.8} & 30.4 & 24.9 & 42.3 & 34.1 & 26.3 & 71.1 & 62.7 & 51.7 & 35.3 \\
ApET & 35.8 & 29.0 & 19.5 & 37.9 & 29.6 & 23.4 & 67.6 & 55.5 & 49.0 & \underline{36.6} \\
\rowcolor{micoPurple}
\textbf{MiCo} & \textbf{43.6} & \textbf{37.3} & \textbf{30.3} & \textbf{49.8} & \textbf{46.0} & \textbf{37.7} & \textbf{77.7} & \textbf{68.3} & \textbf{65.3} & \textbf{49.7} \\
\bottomrule
\end{tabular}
\par\vspace{2pt}
\begin{minipage}{\linewidth}
\footnotesize
\textit{Note.} On LLaVA-NeXT, VisPruner and ApET follow their authors' per-crop budgets of 128/64/32 tokens for $T=640/320/160$, so their realized layer-average token counts are not exactly matched to the other methods.
\end{minipage}

\end{table}

\FloatBarrier

\section{Limitations}
\label{sec:appendix_limitations}

\paragraph{Proxy and hyperparameter choices.}
MiCo estimates information, reliability, and relevance with simple statistics that are readily
available in the backbone, and uses one refinement layer and one Stage-1 pool size per
architecture. These choices work well in practice; more refined estimators and input-adaptive
settings are interesting directions for future work.

\paragraph{Selection cost.}
The coverage selector adds a small amount of computation for pairwise similarities. This overhead
is modest in our experiments and could be reduced further with approximate similarity
computation for very long inputs.

\paragraph{Evaluation scope.}
We focus on standard image and video benchmarks with short-answer prompts. Broader settings
such as long-form generation and interactive use are left for future work.

\end{document}